\RequirePackage[utf8]{inputenc}

\makeatletter
\def\cup@reference@code{%
  \RequirePackage[style=numeric,sorting=none,backend=biber,natbib]{biblatex}%
  \renewcommand*{\bibfont}{\footnotesize}%
}
\makeatother

\documentclass[
  journal=medium,
  manuscript=article,
  manuscriptlabel={Research Preprint},
  logo=false,
  year=2026,
  volume=1
]{cup-journal}

\usepackage[T1]{fontenc}
\usepackage{microtype}
\usepackage{amsmath,amssymb,amsthm}
\usepackage{mathtools}
\usepackage{booktabs}
\usepackage{adjustbox}
\usepackage{makecell,multirow}
\usepackage{orcidlink}
\definecolor{primary}{HTML}{006BA2}
\definecolor{accent}{HTML}{E3120B}
\definecolor{neutral}{HTML}{4A4A4A}

\definecolor{boxfill}{HTML}{E8F4FD}
\definecolor{boxborder}{HTML}{006BA2}

\definecolor{defcolor}{HTML}{2E7D32}
\definecolor{todobg}{HTML}{FFF4E5}
\definecolor{todoframe}{HTML}{B45309}

\usepackage{thmtools}
\usepackage{tcolorbox}
\tcbuselibrary{skins,breakable}
\usepackage{hyperref}
\usepackage{cleveref}

\RawFloats[figure,table]

\ifpdf
  \DeclareGraphicsExtensions{.pdf,.png,.jpg}
\else
  \DeclareGraphicsExtensions{.eps}
\fi

\DeclareMathOperator{\E}{\mathbb{E}}
\DeclareMathOperator{\Var}{Var}
\DeclareMathOperator{\Cov}{Cov}
\DeclareMathOperator{\tr}{tr}
\DeclareMathOperator{\Corr}{Corr}
\DeclareMathOperator{\MMSE}{MMSE}
\DeclareMathOperator{\rank}{rank}
\DeclareMathOperator{\sd}{sd}
\newcommand{\R}{\mathbb{R}}
\newcommand{\norm}[1]{\left\lVert#1\right\rVert}
\newcommand{\abs}[1]{\left|#1\right|}
\newcommand{\HGR}{\rho_{\mathrm{HGR}}}
\newcommand{\alm}{\alpha_m}
\newcommand{\obs}{\mathcal{O}}

\newenvironment{romannum}
  {\begin{enumerate}[label=(\roman*),ref=(\roman*),leftmargin=2.4em,
      labelwidth=1.7em,labelsep=0.5em,align=right]}
  {\end{enumerate}}

\hypersetup{
  colorlinks=true,
  linkcolor=primary,
  citecolor=primary,
  urlcolor=accent,
  bookmarksnumbered=true,
  bookmarksopen=true
}

\crefname{equation}{equation}{equations}
\Crefname{equation}{Equation}{Equations}
\crefname{figure}{Figure}{Figures}
\Crefname{figure}{Figure}{Figures}
\crefname{table}{Table}{Tables}
\Crefname{table}{Table}{Tables}
\crefname{section}{section}{sections}
\Crefname{section}{Section}{Sections}
\crefname{theorem}{theorem}{theorems}
\Crefname{theorem}{Theorem}{Theorems}
\crefname{proposition}{proposition}{propositions}
\Crefname{proposition}{Proposition}{Propositions}
\crefname{corollary}{corollary}{corollaries}
\Crefname{corollary}{Corollary}{Corollaries}
\crefname{conjecture}{conjecture}{conjectures}
\Crefname{conjecture}{Conjecture}{Conjectures}
\crefname{openproblem}{research direction}{research directions}
\Crefname{openproblem}{Research Direction}{Research Directions}

\makeatletter
\@ifpackageloaded{xpatch}{}{\usepackage{xpatch}}
\def\cup@journal@name{Exact Limits of Random Projections}
\def\cup@manuscript{preprint}
\patchcmd{\@maketitle}{\vspace*{\baselineskip}}{\vspace*{0.2\baselineskip}}{}{}
\patchcmd{\@maketitle}{(\cup@year), {\volumefont\cup@vol}, \thepage--\pageref{LastPage}}{}{}{}
\renewcommand*\cup@maketitle@extras@hook{%
  \begingroup
  \renewcommand{\thefootnote}{\fnsymbol{footnote}}%
  \footnotetext{This manuscript has been authored by UT-Battelle, LLC under
  Contract No.\ DE-AC05-00OR22725 with the U.S. Department of Energy. The
  publisher, by accepting the article for publication, acknowledges that the
  United States Government retains a non-exclusive, paid-up, irrevocable,
  world-wide license to publish or reproduce the published form of this
  manuscript, or allow others to do so, for United States Government purposes.
  The Department of Energy will provide public access to these results of
  federally sponsored research in accordance with the DOE Public Access Plan
  (\url{https://www.energy.gov/doe-public-access-plan}).}%
  \endgroup
}

\renewcommand\appendix{\par
  \setcounter{section}{0}%
  \setcounter{subsection}{0}%
  \gdef\thesection{\@Alph\c@section}%
  \gdef\theHsection{\@Alph\c@section}%
  \gdef\theHsubsection{\theHsection.\arabic{subsection}}%
  \gdef\section@cntformat{Appendix \thesection.\quad}%
}
\makeatother

\declaretheoremstyle[
  headfont=\bfseries\color{primary},
  notefont=\normalfont,
  bodyfont=\itshape,
  headpunct={.},
  spaceabove=0pt,
  spacebelow=0pt
]{cupthmplain}
\declaretheoremstyle[
  headfont=\bfseries\color{defcolor},
  notefont=\normalfont,
  bodyfont=\normalfont,
  headpunct={.},
  spaceabove=0pt,
  spacebelow=0pt
]{cupthmdef}
\declaretheoremstyle[
  headfont=\itshape\bfseries\color{neutral},
  notefont=\normalfont,
  bodyfont=\normalfont,
  headpunct={.},
  spaceabove=0pt,
  spacebelow=0pt
]{cupthmrem}

\declaretheorem[style=cupthmplain,name=Theorem,numberwithin=section]{theorem}

\declaretheorem[style=cupthmplain,name=Proposition,sibling=theorem]{proposition}
\declaretheorem[style=cupthmplain,name=Corollary,sibling=theorem]{corollary}
\declaretheorem[style=cupthmplain,name=Conjecture,sibling=theorem]{conjecture}

\declaretheorem[style=cupthmdef,name=Research Direction,sibling=theorem]{openproblem}
\declaretheorem[style=cupthmrem,name=Remark,sibling=theorem]{remark}

\tcbset{
  cupresultbox/.style={
    enhanced,breakable,sharp corners,boxrule=0pt,frame hidden,
    colback=primary!5,borderline west={2pt}{0pt}{primary},
    left=7pt,right=6pt,top=4pt,bottom=4pt,
    before skip=6pt,after skip=6pt
  },
  cupdefinitionbox/.style={
    enhanced,breakable,sharp corners,boxrule=0pt,frame hidden,
    colback=defcolor!6,borderline west={2pt}{0pt}{defcolor},
    left=7pt,right=6pt,top=4pt,bottom=4pt,
    before skip=6pt,after skip=6pt
  },
  cupremarkbox/.style={
    enhanced,breakable,sharp corners,boxrule=0pt,frame hidden,
    colback=neutral!4,borderline west={2pt}{0pt}{neutral},
    left=7pt,right=6pt,top=4pt,bottom=4pt,
    before skip=6pt,after skip=6pt
  }
}
\tcolorboxenvironment{theorem}{cupresultbox}
\tcolorboxenvironment{lemma}{cupresultbox}
\tcolorboxenvironment{proposition}{cupresultbox}
\tcolorboxenvironment{corollary}{cupresultbox}
\tcolorboxenvironment{conjecture}{cupresultbox}
\tcolorboxenvironment{definition}{cupdefinitionbox}
\tcolorboxenvironment{assumption}{cupdefinitionbox}
\tcolorboxenvironment{openproblem}{cupdefinitionbox}
\tcolorboxenvironment{remark}{cupremarkbox}

\title[Exact Limits of Random Projections]{Exact Limits of Random Projections
for Preserving Geometry:\\ Distance Recovery, Nearest-Neighbor Rankings,
and Covariance Shape in Gaussian Models}
\author{Piyush Sao\,\orcidlink{0000-0002-9432-5855}}
\affiliation{Computer Science and Mathematics Division, Oak Ridge National
Laboratory, Oak Ridge, Tennessee 37830, USA}
\email[Piyush Sao]{saopk@ornl.gov}
\keywords{random projection; Johnson--Lindenstrauss lemma; distance recovery;
nearest-neighbor ranking; maximal correlation; covariance shape}
\msc{60D05; 62H12; 68W20}
\hypersetup{
  pdftitle={Exact Limits of Random Projections for Preserving Geometry: Distance Recovery, Nearest-Neighbor Rankings, and Covariance Shape in Gaussian Models},
  pdfauthor={Piyush Sao},
  pdfkeywords={random projection, Johnson--Lindenstrauss lemma, distance recovery, nearest-neighbor ranking, maximal correlation, covariance shape}
}

\begin{document}
\maketitle

\begin{abstract}
The Johnson--Lindenstrauss (JL) lemma guarantees that a random projection of
$n$ points to $m=O(\varepsilon^{-2}\log n)$ dimensions preserves pairwise
squared distances within relative error $\varepsilon$ with high probability,
and this dimension order is asymptotically optimal. In high dimensions,
however, distances concentrate around a baseline while key geometric
information lies in much smaller fluctuations. We show that the JL bound can
therefore be uninformative about retained geometry: an independent Gaussian
replacement map can satisfy it even though the replacement cloud is
independent of the original data.

We then ask how well any decoder can recover a feature $f(D)$ of a squared
distance $D$ from a linear sketch. Under squared-error loss, the optimal decoder
is conditional expectation, so recovery defines a linear operator whose
singular values quantify feature recovery. For isotropic Gaussian data
($\Sigma=\sigma^2 I_d$), we diagonalize this operator in closed form. For fixed
$k$ with $m,d-m\to\infty$, its $k$th singular value satisfies
$\ell_k\approx(m/d)^{k/2}$.

This yields three sharp consequences. A rank-$m$ sketch retains at most an
$m/d$ fraction of the variance of any feature of one squared distance. If
$m\to\infty$ and $m/d\to0$, the expected Kendall correlation is
$\frac{2}{\pi}\sqrt{m/d}(1+o(1))$; for fixed $q$, nearest-neighbor agreement
tends to $1/q$. Yet one projection can satisfy the JL bound while mean Kendall
correlation vanishes when $\log n\ll m\ll d$. After removing scale,
Haar-averaged retained covariance-shape information is $(m/d)^2$. Thus JL
distance preservation does not quantify the geometry available for comparison
or inference.

\end{abstract}

\section{Introduction}
\label{sec:intro}

High-dimensional data pipelines routinely replace raw data with compressed
representations. Approximate-nearest-neighbor systems generate candidates in
a compressed domain and rerank them with richer codes or full-precision
distances \cite{johnson2021faiss,jegou2011rerank}. Single-cell methods
summarize high-dimensional measurements in low-dimensional visualizations,
even as benchmarks warn that the recovered orderings can be unstable or
distorted \cite{moon2019phate,saelens2019trajectory,chari2023specious}.
The decisions this forces on the practitioner---when to rerank, how far to
compress, how many independent sketches to draw---are currently made
empirically, because available guarantees may not
  capture the task-relevant notion of quality, and their worst-case bounds can be
  too conservative to guide parameter selection \cite{martinsson2020randomized}.
The goal of this paper is to quantify
that retained geometry for the random projection, the compressed
representation with the strongest formal guarantees
\cite{halko2011finding,indyk1998approximate}.

The guarantee behind random projection is the Johnson--Lindenstrauss (JL)
lemma \cite{johnson1984extensions}: for finite sets, a suitable map into
dimension $m=O(\varepsilon^{-2}\log n)$ preserves every pairwise squared
distance among $n$ points to relative error $\varepsilon$. For an indexed
sample $x_1,\ldots,x_n$, we say that a map $T$ satisfies the
\emph{$\varepsilon$-JL bound} when
\[
(1-\varepsilon)\norm{x_i-x_j}^2
\le \norm{T(x_i)-T(x_j)}^2
\le (1+\varepsilon)\norm{x_i-x_j}^2
\qquad\text{for every }i<j.
\]
If the sample or map is random, we call the occurrence of this bound the
\emph{JL event}.

For concentrated high-dimensional data, \emph{concentration of measure}
causes the pairwise squared distances controlled by the JL bound to be
dominated by a common population mean, which we call the \emph{baseline}.
Consequently, these bounds become less informative about specific pairwise
distances, which instead appear as smaller centered \emph{fluctuations}
around this baseline. Downstream tasks depend on this fine scale: which
distance is smaller, and how far a point distribution departs from
sphericity. Two questions follow, and the JL bound answers neither. Does
the JL event itself certify that the geometry carrying this fine scale
survived the sketch? And how much of the fluctuations can any decoder
recover from a rank-$m$ sketch?

The structure of prior work indicates why both questions are open. Lower
bounds show that the dimension order in the JL lemma is essentially
unimprovable: Alon \cite{alon2003problems} showed that near-equidistant
point sets already force nearly the same dimension order for every map,
linear or not, and Larsen and Nelson \cite{larsen2017optimality} sharpened
this lower bound to match the JL dimension order for worst-case
constructions. Because the lemma attains the optimal dimension order for
the guarantee it states, the gap cannot be closed by sharpening the lemma
itself; informativeness about concentrated data requires a different
criterion. The literature on concentration addresses the original space
rather than the sketch: Beyer et al.\ \cite{beyer1999nearest} gave a
sufficient condition for distance concentration, and the Durrant--Kab\'an
converse \cite{durrant2009nearest} established the corresponding necessary
condition and identified when nearest neighbors remain meaningful. A third
line characterizes projection outputs: Dasgupta
\cite{dasgupta1999learning,dasgupta2000experiments} established what we
call \emph{Dasgupta's sphericalization phenomenon}, the rounding of
eccentric Gaussian components toward sphericity; Dasgupta, Hsu, and Verma
\cite{dasgupta2006concentration} and Diaconis and Freedman
\cite{diaconis1984asymptotics} obtained related structural results for
broader distributions, including approximation by scale mixtures of
spherical Gaussians; and Li and Malik \cite{li2016fast} gave order
preservation guarantees for fixed vector pairs at prescribed length ratios.
None of these results tests whether JL satisfaction certifies retained
geometry, and none quantifies how much distance variation a decoder can
recover from a sketch.

Our approach introduces the missing object: an explicit decoder. We model
recovery as estimation of a feature of a squared distance from the sketch
under squared-error loss. The optimal estimate is the conditional
expectation, so recovery acts as a linear operator, and its singular values
measure how much of each feature survives compression. This formulation
yields information limits that the pairwise error criterion of the JL lemma
cannot express. We make three main contributions. First, we prove that the
error allowed by the JL bound can exceed the fluctuation scale of a
concentrated Gaussian cloud (\cref{sec:scale}). We then show that an
independent Gaussian replacement map can satisfy the same bound at the
dimension prescribed by the JL lemma even though its replacement cloud is
independent of the data (\cref{sec:certification}); satisfying the JL bound
therefore need not certify retained geometry. Second, we identify three
recovery targets that the JL bound does not separately control---distance
values, neighbor rankings, and covariance shape---and derive their exact
Gaussian recovery limits (\crefrange{sec:question}{sec:rate3}). For
isotropic distance features, the recoverable variance ceiling is $m/d$;
neighbor rankings vary on the associated correlation scale $\sqrt{m/d}$;
and diffuse covariance shape contracts at order $(m/d)^2$. Third, for a
general known map, we separate what depends on rank, singular spectrum, and
numerical precision, then quantify what ensembles of independent sketches
can recover (\cref{sec:generalmaps,sec:ensembles}).

These laws give the practitioner's empirical decisions a quantitative
scale. Within the Gaussian benchmark, the fraction of distance variance
retained, the correlation scale of neighbor rankings, and the contraction
of covariance shape are each determined by the ratio $m/d$, so the cost of
compressing farther and the benefit of adding independent sketches can be
read off before any experiment. The replacement-map result adds a caution
that no empirical protocol supplies: a map can satisfy the JL bound while
its output is independent of the data. Beyond the specific Gaussian model,
we expect these techniques to extend to other distributions and to more
complex features; \cref{sec:conjectures} develops the resulting research
directions. Except where a complete argument appears directly after a
statement, proofs are collected in the supplement (\cref{app:proofs}); the
discussion following a statement in the text records the mechanism of the
proof.

\section{The Scale Problem: Baseline versus Fluctuations}
\label{sec:scale}

We begin by quantifying when the relative error allowed by the JL bound is too
coarse to resolve distance fluctuations.
Let $X,X'\overset{\text{i.i.d.}}{\sim}\mathcal N(\mu,\Sigma)$ in $\R^d$,
where the eigenvalues of $\Sigma$ are
$\lambda_1\ge\cdots\ge\lambda_d\ge0$. Let
$L:\R^d\to\R^m$ be any linear map of rank at most $m$, random
or deterministic, oblivious or $\Sigma$-aware; no statement below depends on how the map is chosen. Write
\[
D=\norm{X-X'}^2,
\qquad
\obs=(LX,LX')
\]
for the squared distance and the observed sketch.
Whenever $L$ is random, we condition on its realization and reveal it to the
decoder; equivalently, the information statements below are conditional on
the sampled map. For a query $X_0$ and
two candidates $X_1,X_2$, define the \emph{neighbor contrast}
\[
S=\norm{X_0-X_1}^2-\norm{X_0-X_2}^2.
\]
Its sign identifies the nearer candidate, so preserving a two-candidate
ranking is equivalent to preserving the sign of $S$.

The baseline and the fluctuations appear in the first two moments:
\begin{equation}
\E D=2\tr\Sigma,
\qquad
\Var(D)=8\tr(\Sigma^2).
\label{eq:moments}
\end{equation}
The mean $\E D$ is the common population \emph{baseline} for i.i.d. pairs, while
the centered deviations $D-\E D$ are the \emph{fluctuations} that carry
comparative geometry. Diagonalizing $\Sigma$ expresses $D$ as a weighted sum
of independent chi-square variables. Their variances add, so the fluctuation
scale depends on the squared eigenvalues through $\tr(\Sigma^2)$. The relative
scale of the fluctuations is
\begin{equation}
r_2(\Sigma)=\frac{(\tr\Sigma)^2}{\tr(\Sigma^2)},
\qquad
\frac{\sd(D)}{\E D}=\sqrt{\frac{2}{r_2(\Sigma)}}.
\label{eq:r2}
\end{equation}
The quantity $r_2(\Sigma)$ is a squared-trace effective rank: larger values
mean tighter relative concentration. For an isotropic covariance matrix,
$r_2(\Sigma)=d$, so relative fluctuations are of order $d^{-1/2}$
\cite{beyer1999nearest}.

The following proposition states exactly when a relative-error
bound fixes the sign of a comparison. Here $\widetilde D_1$ and
$\widetilde D_2$ are any distorted values satisfying the displayed
inequalities; later sections reserve the notation $\widetilde D$ for the
squared distance computed from a sketch.
\begin{proposition}[When a relative-error bound preserves order]
\label{prop:jl-margin}
Let $0<D_1<D_2$, and suppose
\[
(1-\varepsilon)D_j\le \widetilde D_j\le(1+\varepsilon)D_j,
\qquad j=1,2.
\]
These inequalities guarantee $\widetilde D_1<\widetilde D_2$ for every
admissible pair if and only if
\[
\frac{D_2-D_1}{D_1+D_2}>\varepsilon.
\]
\end{proposition}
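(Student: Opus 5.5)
The plan is to reduce the ``for every admissible pair'' guarantee to a single extremal comparison. The admissible set for $(\widetilde D_1,\widetilde D_2)$ is the product of intervals $[(1-\varepsilon)D_1,(1+\varepsilon)D_1]\times[(1-\varepsilon)D_2,(1+\varepsilon)D_2]$, since the two constraints are independent. The strict inequality $\widetilde D_1<\widetilde D_2$ therefore holds for every admissible pair if and only if it holds for the least favorable one. That pair takes $\widetilde D_1$ as large as possible and $\widetilde D_2$ as small as possible: $\widetilde D_1=(1+\varepsilon)D_1$ and $\widetilde D_2=(1-\varepsilon)D_2$. Both endpoints belong to the closed intervals, so this worst case is actually attained.

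I will carry out the two directions in order. For sufficiency, take any admissible pair and chain the bounds:
\[
\widetilde D_1\le(1+\varepsilon)D_1<(1-\varepsilon)D_2\le\widetilde D_2 .
\]
The middle inequality rearranges, using $D_1+D_2>0$, to $\varepsilon(D_1+D_2)<D_2-D_1$, which is exactly the stated condition. For necessity, suppose $(D_2-D_1)/(D_1+D_2)\le\varepsilon$ and exhibit the extremal pair above. The same rearrangement gives $(1+\varepsilon)D_1\ge(1-\varepsilon)D_2$, so this admissible pair violates the strict ordering $\widetilde D_1<\widetilde D_2$.

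The only point that needs care is the boundary case of equality. Under the hypothesis $D_1<D_2$, equality corresponds to $(1+\varepsilon)D_1=(1-\varepsilon)D_2$, which produces a tie $\widetilde D_1=\widetilde D_2$. A tie does not give the strict conclusion, so equality must fall on the failing side, and this is why the criterion is a strict inequality. I also need to check that the rearrangement is valid for every $\varepsilon\ge0$, including $\varepsilon\ge1$, where the lower endpoint $(1-\varepsilon)D_2$ is nonpositive. No division by $1-\varepsilon$ occurs in the rearrangement, and in that regime the condition fails automatically because $(D_2-D_1)/(D_1+D_2)<1$. Beyond this, the argument is elementary and I expect no genuine obstacle.
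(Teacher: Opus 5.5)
Your proof is correct and follows the same route as the paper: both reduce the guarantee to the single extremal admissible pair $\widetilde D_1=(1+\varepsilon)D_1$, $\widetilde D_2=(1-\varepsilon)D_2$, and both observe that $(1+\varepsilon)D_1<(1-\varepsilon)D_2$ rearranges to the stated condition, while failure of that inequality produces a reversal or a tie. Your explicit treatment of the equality case and of $\varepsilon\ge1$ spells out details that the paper's one-line argument leaves implicit.
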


\begin{proof}
The worst admissible distortions preserve the order exactly when
$(1+\varepsilon)D_1<(1-\varepsilon)D_2$, which is equivalent to the displayed
condition; if this inequality fails, the extreme admissible errors reverse or
tie the order.
\end{proof}

The neighbor contrast is one such comparison, with
$D_j=\norm{X_0-X_j}^2$: certifying which candidate is nearer is certifying the
sign of $S=\pm(D_2-D_1)$. Comparisons with smaller margins may survive, but
the bound does not guarantee them. \Cref{sec:rate2} quantifies their average
behavior for the neighbor contrast under random projection.

The allowed absolute error in a typical squared distance is of order
$\varepsilon\E D$. If
$\varepsilon\gg r_2(\Sigma)^{-1/2}$, this error exceeds the fluctuation scale
$\sd(D)$. In this regime, the JL bound can still hold, but it does not by
itself certify neighbor comparisons. If instead
$\varepsilon\asymp r_2(\Sigma)^{-1/2}$, the JL lemma gives
$m=O(r_2(\Sigma)\log n)$. For isotropic data, this scales as $d\log n$ and
therefore no longer guarantees a target dimension below $d$.

The allowed error can therefore be too large to resolve
comparative geometry. A coarser tolerance still leaves a distinct question:
whether passing the test certifies anything about the retained geometry.

\section{The JL Bound Can Hold without Information}
\label{sec:certification}

\paragraph{Does satisfying the JL bound imply retention of any sample-specific structure?}
\Cref{sec:scale} showed that the error allowed by the JL bound can exceed the
fluctuation scale. That observation concerns the tolerance, not the information
retained by a particular map: a loose bound may coexist with informative maps.
We therefore ask: does satisfying the JL bound guarantee any
sample-specific structure?

We answer negatively by constructing a map. An \emph{independent Gaussian
replacement map} assigns each indexed input point $X_i$ a fresh Gaussian
output $Y_i$, sampled independently of the data, so that
$I(X^n;Y^n)=0$ and no decoder can extract any feature of the input from the
output. Its image is the \emph{replacement cloud}; the shared indices pair
each input distance $\norm{X_i-X_j}$ with the corresponding output distance
$\norm{Y_i-Y_j}$. The question is whether this zero-information map can
nonetheless satisfy the $\varepsilon$-JL bound on all $\binom n2$ pairs at
the standard JL dimension.

\subsection{The independent Gaussian replacement map}
\begin{proposition}[A zero-information map can satisfy the JL bound]
\label{prop:zeroinfo}
Let $n\ge2$, $m,d\in\mathbb N$, $\sigma>0$, and
$0<\varepsilon,\delta<1$. Let
$X_1,\dots,X_n\overset{\mathrm{iid}}{\sim}\mathcal N(\mu,\sigma^2I_d)$ and
$Y_1,\dots,Y_n\overset{\mathrm{iid}}{\sim}
\mathcal N\!\bigl(0,\sigma^2\tfrac dm I_m\bigr)$ be independent, and define
the index map $T_{\mathrm{rep}}(X_i)=Y_i$. If
$\min\{m,d\}\ge72\,\varepsilon^{-2}
\log\!\bigl(4\tbinom n2/\delta\bigr)$, then, with probability at least
$1-\delta$,
\[
(1-\varepsilon)\norm{X_i-X_j}^2
\le \norm{T_{\mathrm{rep}}(X_i)-T_{\mathrm{rep}}(X_j)}^2
\le(1+\varepsilon)\norm{X_i-X_j}^2
\qquad\text{for all }i<j.
\]
In particular, in the compression regime $m\le d$, $T_{\mathrm{rep}}$ can
satisfy the $\varepsilon$-JL bound on the sample at the standard JL dimension
order $m=O(\varepsilon^{-2}\log(n/\delta))$, while its output carries no
information about the data: $I(X^n;Y^n)=0$.
\end{proposition}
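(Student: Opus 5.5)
The plan is to show that both the input and the output squared distances concentrate around the \emph{same} baseline $2d\sigma^2$, each to relative error about $\varepsilon/3$. A ratio of two such quantities then lies in $[1-\varepsilon,1+\varepsilon]$. Independence of $X^n$ and $Y^n$ gives $I(X^n;Y^n)=0$ at once, since the joint law factorizes, and plays no role in the distance bounds.

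\textbf{Step 1 (laws of the pairwise distances).} For $i<j$, $X_i-X_j\sim\mathcal N(0,2\sigma^2 I_d)$, so
\[
\norm{X_i-X_j}^2=2\sigma^2 A_{ij},\qquad A_{ij}\sim\chi^2_d .
\]
Likewise $Y_i-Y_j\sim\mathcal N(0,2\sigma^2\tfrac dm I_m)$, so
\[
\norm{Y_i-Y_j}^2=2\sigma^2\tfrac dm B_{ij},\qquad B_{ij}\sim\chi^2_m .
\]
Both have mean $2d\sigma^2$. This matches the baseline $\E D=2\tr\Sigma$ from \eqref{eq:moments}, and it is exactly why the variance scaling $\sigma^2 d/m$ was chosen.

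\textbf{Step 2 (chi-square concentration).} I will invoke a standard two-sided tail bound for $\chi^2_k$ (Laurent--Massart, or the sub-exponential bound used in JL proofs):
\[
\Pr\bigl(\abs{\chi^2_k/k-1}\ge t\bigr)\le 2e^{-kt^2/8}\qquad\text{for } 0<t<1.
\]
Take $t=\varepsilon/3$. Each bad event then has probability at most $2\exp(-k\varepsilon^2/72)$, with $k=d$ for $A_{ij}$ and $k=m$ for $B_{ij}$. The hypothesis $\min\{m,d\}\ge 72\varepsilon^{-2}\log(4\binom n2/\delta)$ makes each of these at most $\delta/(2\binom n2)$. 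A union bound over the $2\binom n2$ events shows that, with probability at least $1-\delta$, for all pairs both
\[
A_{ij}/d\in[1-\varepsilon/3,\,1+\varepsilon/3]\quad\text{and}\quad B_{ij}/m\in[1-\varepsilon/3,\,1+\varepsilon/3].
\]
The constant $72=8\cdot 9$ and the factor $4=2\cdot2$ in the hypothesis fit exactly this bookkeeping, so I expect this to be the intended route.

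\textbf{Step 3 (ratio).} On the good event,
\[
\frac{\norm{Y_i-Y_j}^2}{\norm{X_i-X_j}^2}=\frac{B_{ij}/m}{A_{ij}/d}\in\Bigl[\frac{1-\varepsilon/3}{1+\varepsilon/3},\,\frac{1+\varepsilon/3}{1-\varepsilon/3}\Bigr].
\]
It remains to check two elementary inequalities for $0<\varepsilon<1$. The first is $\frac{1+\varepsilon/3}{1-\varepsilon/3}\le1+\varepsilon$, which is equivalent to $\varepsilon/3\le \varepsilon\cdot 2/3-\varepsilon^2/3$, i.e.\ $\varepsilon\le1$. The second is $\frac{1-\varepsilon/3}{1+\varepsilon/3}\ge1-\varepsilon$, which holds trivially. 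The ``in particular'' clause follows because for $m\le d$ the condition reduces to $m\ge72\varepsilon^{-2}\log(4\binom n2/\delta)=O(\varepsilon^{-2}\log(n/\delta))$.

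\textbf{Main obstacle.} The only delicate point is pinning down a chi-square tail inequality whose constants really give $e^{-k\varepsilon^2/72}$ at $t=\varepsilon/3$. The lower tail is easy: Laurent--Massart gives $e^{-kt^2/4}$. For the upper tail I would use the standard JL-style bound $\Pr(\chi^2_k\ge(1+t)k)\le e^{-k(t^2/2-t^3/3)/2}$. Since $t\le 1/3$, we have $t^2/2-t^3/3\ge t^2/4$, which yields $e^{-kt^2/8}$.
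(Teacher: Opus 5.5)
Your proof is correct and follows the paper's argument: you normalize both squared distances by the common baseline $2\sigma^2 d$, apply the two-sided bound $2e^{-kt^2/8}$ for $\chi^2_k/k$, take a union bound over the $2\binom n2$ events, and then bound the ratio. The only difference is the band half-width. You take $t=\varepsilon/3$, which matches the constant $72$ exactly, whereas the paper takes $a_\varepsilon=\varepsilon/(2+\varepsilon)$, which makes the ratio interval exactly $[1/(1+\varepsilon),1+\varepsilon]$ and gives the slightly sharper sufficient condition $\min\{m,d\}\ge 8(2+\varepsilon)^2\varepsilon^{-2}\log(4\binom n2/\delta)$ recorded in the subsequent remark.
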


\begin{proof}
For every $i<j$,
\[
\frac{\norm{X_i-X_j}^2}{2\sigma^2d}
\overset{d}{=}\frac{\chi^2_d}{d},
\qquad
\frac{\norm{Y_i-Y_j}^2}{2\sigma^2d}
\overset{d}{=}\frac{\chi^2_m}{m}.
\]
Set $a=\varepsilon/(2+\varepsilon)$. The Chernoff bound
gives $\Pr(\abs{\chi_k^2/k-1}>a)\le2e^{-ka^2/8}$. A union bound over the
$2\binom n2$ normalized squared distances places all of them in
$[1-a,1+a]$ with probability at least
\[
1-4\binom n2
\exp\!\left(-\frac{\min\{m,d\}a^2}{8}\right)
\ge1-\delta,
\]
where the last inequality follows from the dimension hypothesis and
$8(2+\varepsilon)^2<72$. On this event, every output-to-input ratio lies in
\[
\left[\frac{1-a}{1+a},\frac{1+a}{1-a}\right]
=\left[\frac1{1+\varepsilon},1+\varepsilon\right]
\subseteq[1-\varepsilon,1+\varepsilon],
\]
which proves the claim.
\end{proof}

The mechanism is that the JL event factors: input-cloud concentration and
output-cloud concentration around a shared baseline suffice, and no
cross-conditional structure is needed. The following remark records the
constants and the two-stage form.

\begin{remark}[Sharp constants, two-stage form, and scope]
\label[remark]{rem:zeroinfo-details}
Let $\mathcal E_\varepsilon^{\mathrm{rep}}$ denote the JL event in
\cref{prop:zeroinfo}, and, for $0<a<1$, define
\[
\mathcal G_X(a)=
\left\{\left|\frac{\norm{X_i-X_j}^2}{2\sigma^2d}-1\right|\le a
\ \text{for all }i<j\right\}.
\]
\emph{(a)} In the compression regime $m\le d$, the hypothesis is a condition
on $m$ alone, with the dimension order prescribed by the JL lemma:
$m=O(\varepsilon^{-2}\log(n/\delta))$.

\emph{(b)} The proof gives the sharp form: with
$a_\varepsilon=\varepsilon/(2+\varepsilon)$,
\[
\Pr(\mathcal E_\varepsilon^{\mathrm{rep}})
\ge1-2N_pe^{-da_\varepsilon^2/8}-2N_pe^{-ma_\varepsilon^2/8},
\]
and
$\min\{m,d\}\ge8(2+\varepsilon)^2\varepsilon^{-2}
\log(4N_p/\delta)$ suffices.

\emph{(c)} The same proof yields the two-stage bounds
\begin{equation}
\label{eq:twostage}
\Pr_X\bigl(\mathcal G_X(a_\varepsilon)\bigr)
\ge1-2N_pe^{-da_\varepsilon^2/8},
\qquad
\inf_{x^n\in\mathcal G_X(a_\varepsilon)}
\Pr_Y\bigl(\mathcal E_\varepsilon^{\mathrm{rep}}\mid X^n=x^n\bigr)
\ge1-2N_pe^{-ma_\varepsilon^2/8},
\end{equation}
so most fixed input clouds individually admit an independent Gaussian
replacement map. The phenomenon is therefore a property of concentration,
not of averaging over random inputs.

\emph{(d)} Independence concerns the unconditional joint law; conditioning on
the success event induces dependence.
\end{remark}

\Cref{fig:zero-info-jl} illustrates the separation. For comparison, we use a
rescaled rank-$m$ orthogonal projection whose row space is uniformly random
(Haar-distributed, in the terminology of \cref{sec:channel}).
\begin{figure}[H]
\centering
\includegraphics[width=\linewidth]{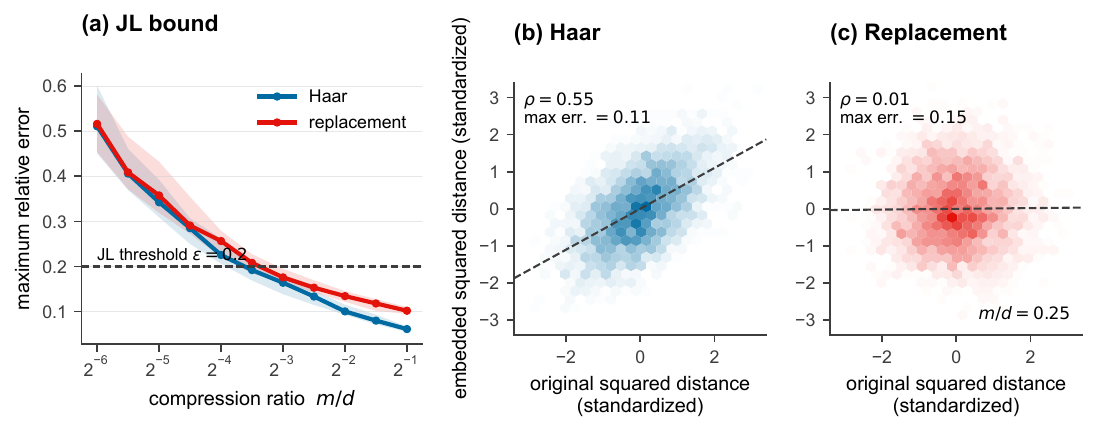}
\caption{Satisfying the JL bound does not certify retained information. Panel
(a) reports the median maximum relative error over all
$\binom{100}{2}$ pairs for a rescaled uniformly random rank-$m$ orthogonal
projection and an independent Gaussian replacement map at $d=8192$; bands
span the $10$th--$90$th percentiles over $30$ trials, and the dashed line marks
$\varepsilon=0.2$. Panels (b)--(c) show one trial at $m/d=1/4$: projected
squared distances remain correlated with the original squared-distance
fluctuations, whereas replacement squared distances do not. The annotated
maximum errors show that both
displayed maps satisfy the JL bound.}
\label{fig:zero-info-jl}
\end{figure}

Three natural objections delimit the scope of this separation: what the
output actually tells a practitioner, whether the construction depends on
population scaling, and whether it requires an oversized dimension. The
same independence gives an exact answer to the first.
\begin{corollary}[Ranking consequences of the replacement map]
\label{cor:zero-info-consequences}
Under the assumptions of \cref{prop:zeroinfo}, if $n\ge3$, define the
mean Kendall statistic for the replacement map by
\[
\overline\tau_n^{\mathrm{rep}}
=\frac1n\sum_{i=1}^n\binom{n-1}{2}^{-1}
\sum_{\substack{j<k\\j,k\ne i}}
\operatorname{sgn}\!\left[
(D_{ij}-D_{ik})(D_{ij}^{\mathrm{rep}}-D_{ik}^{\mathrm{rep}})
\right].
\]
Then
\[
\E\overline\tau_n^{\mathrm{rep}}=0,
\qquad
\Pr\!\left(\abs{\overline\tau_n^{\mathrm{rep}}}>t\right)
\le2\exp\!\left(-\frac{\lfloor n/3\rfloor t^2}{2}\right).
\]
In the analogous setting with one query and $q\ge2$ candidates, let
$\mathcal N_k$ and $\mathcal N_k^{\mathrm{rep}}$ be the top-$k$ sets for the
original and replacement clouds, respectively, where $1\le k<q$. Then
\[
\E\frac{\abs{\mathcal N_k\cap\mathcal N_k^{\mathrm{rep}}}}{k}
=\frac{k}{q},
\qquad
\Pr\!\left(\mathcal N_1=\mathcal N_1^{\mathrm{rep}}\right)=\frac1q.
\]
\end{corollary}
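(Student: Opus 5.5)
The plan rests on two facts: $(X_i)$ and $(Y_i)$ are independent, and $Y_1,\dots,Y_n$ are i.i.d.\ with a continuous law. Both give exchangeability of the replacement cloud conditional on the input cloud. First I would dispose of ties. Squared distances among Gaussian points with absolutely continuous joint law are almost surely distinct. So every $\operatorname{sgn}$ in $\overline\tau_n^{\mathrm{rep}}$ is $\pm1$ almost surely, and the top-$k$ sets $\mathcal N_k$ and $\mathcal N_k^{\mathrm{rep}}$ are almost surely well defined.

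\emph{Mean zero.} Fix a center $i$ and a pair $j<k$ with $j,k\ne i$. Condition on $X^n$ and on $Y_i$. Swapping $Y_j\leftrightarrow Y_k$ preserves the joint law of $(Y_i,Y_j,Y_k)$ and flips $D_{ij}^{\mathrm{rep}}-D_{ik}^{\mathrm{rep}}$, while leaving $D_{ij}-D_{ik}$ unchanged. Hence each summand has conditional expectation $0$, and by linearity $\E\overline\tau_n^{\mathrm{rep}}=0$.

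\emph{Concentration.} I would recast $\overline\tau_n^{\mathrm{rep}}$ as a U-statistic of order $3$ in the i.i.d.\ pairs $Z_i=(X_i,Y_i)$. The identity $n\binom{n-1}{2}=3\binom n3$ shows that
\[
\overline\tau_n^{\mathrm{rep}}=\binom n3^{-1}\sum_{i<j<k}h(Z_i,Z_j,Z_k).
\]
Here $h$ is the average, over the three choices of center, of the corresponding sign products. The kernel $h$ is symmetric and takes values in $[-1,1]$. Hoeffding's inequality for U-statistics then gives, for a kernel of order $r$ with range $b-a$,
\[
\Pr(|U-\E U|>t)\le2\exp\!\bigl(-2\lfloor n/r\rfloor t^2/(b-a)^2\bigr).
\]
Hoeffding's proof writes $U$ as an average of sums of $\lfloor n/r\rfloor$ independent blocks and applies Jensen to the moment generating function. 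With $r=3$ and $b-a=2$ this is exactly $2\exp(-\lfloor n/3\rfloor t^2/2)$. The only point needing care is to check that the symmetrized kernel $h$ is genuinely bounded by $1$ in absolute value, which holds because it is an average of $\pm1$ terms.

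\emph{Top-$k$ agreement.} Let the query be $0$ and the candidates $1,\dots,q$. Conditional on $X$ and on $Y_0$, the vector $(Y_1,\dots,Y_q)$ is exchangeable. Hence the ranking of the candidates by $D^{\mathrm{rep}}_{0\cdot}$ is a uniformly random permutation, independent of $X$ and therefore of $\mathcal N_k$. So $\mathcal N_k^{\mathrm{rep}}$ is a uniform $k$-subset independent of the fixed $k$-set $\mathcal N_k$. Each element of $\mathcal N_k$ lies in $\mathcal N_k^{\mathrm{rep}}$ with probability $k/q$, giving
\[
\E\abs{\mathcal N_k\cap\mathcal N_k^{\mathrm{rep}}}=k^2/q,
\]
which yields the stated ratio $k/q$. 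The case $k=1$ gives $\Pr(\mathcal N_1=\mathcal N_1^{\mathrm{rep}})=1/q$.

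I expect the main obstacle to be the U-statistic reduction: getting the combinatorial normalization right and confirming that Hoeffding's block argument applies to the symmetrized order-$3$ kernel with the stated constant. Everything else is a direct consequence of independence and exchangeability.
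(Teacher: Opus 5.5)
Your proposal is correct and follows the paper's argument. Exchangeability of the replacement labels makes one sign symmetric, which gives mean zero. The symmetrized order-three kernel, via $n\binom{n-1}{2}=3\binom n3$, turns $\overline\tau_n^{\mathrm{rep}}$ into a bounded U-statistic in the i.i.d.\ pairs $(X_i,Y_i)$, and Hoeffding's blocking inequality then gives exactly the stated constant. Finally, $\mathcal N_k^{\mathrm{rep}}$ is a uniform $k$-subset independent of $\mathcal N_k$. You only make explicit the normalization and constant checks that the paper leaves terse here; it uses the same kernel in the proof of \cref{thm:jl-kendall}.
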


\begin{proof}
Each Kendall summand is a product of two independent signs. Exchangeability of the
candidate labels makes one sign symmetric, so the kernel has mean zero. Since
$(X_i,Y_i)$ are i.i.d. pairs across $i$, $\overline\tau_n^{\mathrm{rep}}$ is a
bounded order-three U-statistic. Hoeffding
\cite{hoeffding1963probability} proved the required blocking inequality;
applying it gives the stated tail bound.
The neighbor set from the replacement cloud is a function of that cloud alone;
by exchangeability it is a uniform $k$-subset independent of
$\mathcal N_k$. Ties have probability zero.
\end{proof}

\subsection{Calibration}

\paragraph{Empirical calibration.}
Empirical calibration can replace population scaling, which uses knowledge
of $\sigma^2$ that a practitioner may not have. With
$\overline D_X=N_p^{-1}\sum_{i<j}D_{ij}$ and
$\overline D_{\mathrm{rep}}=N_p^{-1}\sum_{i<j}D_{ij}^{\mathrm{rep}}$,
the rescaled cloud
$Y_i^\star=(\overline D_X/\overline D_{\mathrm{rep}})^{1/2}Y_i$ matches the
realized mean squared distance exactly. The same proof holds with
$b_\varepsilon=(\sqrt{1+\varepsilon}-1)/(\sqrt{1+\varepsilon}+1)
\sim\varepsilon/4$ in place of $a_\varepsilon$, because on the band event
both empirical means also lie in the band and the worst ratio is
$((1+b_\varepsilon)/(1-b_\varepsilon))^2=1+\varepsilon$. Global rescaling
preserves all orderings, so the conclusions of
\cref{cor:zero-info-consequences} for rankings and neighborhoods remain
exact. If $\mathcal E_\varepsilon^\star$ denotes the calibrated JL event,
then
\[
\Pr(\mathcal E_\varepsilon^\star)
\ge1-2N_pe^{-db_\varepsilon^2/8}-2N_pe^{-mb_\varepsilon^2/8}.
\]
The calibrated output is not independent of the input. Its data-dependent
part is characterized precisely by
\[
X^n\longrightarrow\overline D_X\longrightarrow Y^{\star n},
\qquad
\overline D_{Y^\star}=\overline D_X\quad\text{almost surely}.
\]
Thus the conditional law of the output depends on $X^n$ only through
$\overline D_X$, and that statistic is recoverable from the output: empirical
calibration leaks exactly one scalar and no ranking information.

\subsection{Sharpness}

The third objection is that the replacement map might succeed only in a
dimension so large that the JL bound becomes trivial to satisfy. The
following proposition rules this out.

\begin{proposition}[Sharp dimension order for independent Gaussian replacement maps]
\label{prop:zero-info-sharpness}
In the setting of \cref{prop:zeroinfo}, let
$r=\lfloor n/2\rfloor$ and $0<\varepsilon\le1$. There are universal
constants $c,C>0$ such that
\[
\Pr(\mathcal E_\varepsilon^{\mathrm{rep}})
\le \exp\!\left(-cr e^{-Cm\varepsilon^2}\right).
\]
Consequently, if
$\Pr(\mathcal E_\varepsilon^{\mathrm{rep}})\ge1-\delta$, then
\[
m\ge\frac1{C\varepsilon^2}
\log\!\left(\frac{cr}{-\log(1-\delta)}\right)
\]
whenever the logarithm is positive. In particular, for $\delta\le1/2$ and
$n/\delta$ sufficiently large,
$m=\Omega(\varepsilon^{-2}\log(n/\delta))$.
\end{proposition}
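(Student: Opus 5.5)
Disjoint pairs make the failure events independent, so the plan is to show that each fails with probability at least $e^{-Cm\varepsilon^2}$ and take the product. Pair the indices as $(1,2),(3,4),\dots,(2r-1,2r)$. For each such pair set
\[
A_\ell=\frac{\norm{X_{2\ell-1}-X_{2\ell}}^2}{2\sigma^2 d}\sim\frac{\chi^2_d}{d},\qquad
B_\ell=\frac{\norm{Y_{2\ell-1}-Y_{2\ell}}^2}{2\sigma^2 d}\sim\frac{\chi^2_m}{m}.
\]
Because the pairs are disjoint and the $X$'s and $Y$'s are i.i.d. and mutually independent, the pairs $(A_\ell,B_\ell)$, $\ell=1,\dots,r$, are i.i.d. The JL event $\mathcal E_\varepsilon^{\mathrm{rep}}$ is contained in $\bigcap_\ell\{(1-\varepsilon)A_\ell\le B_\ell\le(1+\varepsilon)A_\ell\}$. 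Writing $p$ for the failure probability of a single pair, independence gives
\[
\Pr(\mathcal E_\varepsilon^{\mathrm{rep}})\le(1-p)^r\le e^{-rp}.
\]
The result therefore follows from a lower bound $p\ge c\,e^{-Cm\varepsilon^2}$, with $c$ absorbed into the constant.

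For that bound, I will make the pair fail through a deviation of $B$ alone. Use the event $\{A\le 1+\varepsilon\}\cap\{B>(1+\varepsilon)^2\}$; on it, $B>(1+\varepsilon)A$. Since $A$ and $B$ are independent,
\[
p\ge \Pr(A\le 1+\varepsilon)\,\Pr\bigl(\chi^2_m>m(1+\varepsilon)^2\bigr).
\]
The first factor is at least $\Pr(\chi^2_d\le d)\ge c_0>0$ uniformly in $d$, because the median of $\chi^2_d$ lies below its mean (or by a CLT argument with a direct check for small $d$). The second factor needs a lower bound on the chi-square upper tail, and this is the main obstacle: standard Chernoff estimates run in the wrong direction.

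For the tail, set $t=(1+\varepsilon)^2-1\in(0,3]$. I claim $\Pr(\chi^2_m\ge m(1+t))\ge c_1e^{-C_1 mt^2}$ for all $m$ and all $t\in(0,3]$. The route is $\chi^2_m=\norm{g}^2$ with $g\sim\mathcal N(0,I_m)$, so that
\[
\Pr\bigl(\norm g\ge\sqrt{m(1+t)}\bigr)\ge\Pr\Bigl(\tfrac{1}{\sqrt m}\textstyle\sum_i g_i\ge\sqrt{m(1+t)}\Bigr)=\bar\Phi\bigl(\sqrt{m(1+t)}\bigr).
\]
This crude bound is good enough only when $t$ is of constant order. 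When $mt^2\lesssim1$, I will instead use the central limit theorem, or the Paley--Zygmund inequality applied to $\chi^2_m-m$ with variance $2m$, to obtain a constant lower bound. In the intermediate range, the tool is a Cramér-type tilted-measure bound: the log-MGF $\Lambda(\lambda)=-\tfrac m2\log(1-2\lambda)$ has rate function $I(t)=\tfrac m2\bigl(t-\log(1+t)\bigr)\asymp mt^2$ for bounded $t$. The standard change-of-measure argument, which tilts so the mean becomes $m(1+2t)$ and then applies Chebyshev under the tilted law, yields $\Pr\ge\tfrac12 e^{-C mt^2}$ once $mt^2\ge1$. Alternatively, I can write down the Gamma density and integrate it over $[m(1+t),m(1+2t)]$. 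Since $t\le3\varepsilon$, this gives $p\ge c\,e^{-Cm\varepsilon^2}$.

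Finally, the consequence follows by rearrangement. From $1-\delta\le\exp(-cre^{-Cm\varepsilon^2})$, take logarithms to get $-\log(1-\delta)\ge cre^{-Cm\varepsilon^2}$, and solve for $m$. For $\delta\le1/2$ we have $-\log(1-\delta)\le 2\delta$, so $m\ge(C\varepsilon^2)^{-1}\log\bigl(cr/(2\delta)\bigr)$, which is $\Omega\bigl(\varepsilon^{-2}\log(n/\delta)\bigr)$ once $n/\delta$ is large.
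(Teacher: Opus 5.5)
Your proposal is correct and follows the paper's proof. The paper also uses disjoint pairs to get i.i.d.\ $F_{m,d}$ ratios, bounds the per-pair failure probability below by $\Pr(\chi^2_d\le d)\ge1/2$ (median below mean) times a reverse chi-square upper-tail bound, then takes the $r$-fold product and rearranges. The one difference is that the paper cites the Zhang--Zhou lower bound for the tail (using the event $\{V\le1\}\cap\{U\ge1+\varepsilon\}$), whereas you derive it yourself via the CLT plus exponential tilting; there, Chebyshev under the tilted law needs $mt^2\ge K$ for a somewhat larger constant $K$ (up to about $4(1+2t)^2$) rather than $mt^2\ge1$, which is harmless because the CLT regime absorbs any fixed $K$.
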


The proof in \cref{app:proofs} uses the i.i.d. $F_{m,d}$ ratios associated with
disjoint pairs and the Zhang--Zhou lower bound
\cite[Cor.~3]{zhang2020nonasymptotic} for the upper chi-square tail
probability. Together with \cref{prop:zeroinfo}, this shows that the dimension
order prescribed by the JL lemma is both sufficient and necessary for the
independent Gaussian replacement map.

The conclusion is that the certificate and the information are separate
objects. Satisfying the JL bound does not by itself certify retained
geometry; what replaces the certificate is the subject of the next section.

\section{Recovery as an Operator Problem}
\label{sec:question}

\subsection{Sketch, target, and decoder}
\label{sec:projection}

The JL bound checks if a map preserves original distances within a small relative error. In contrast, recovery analysis fixes an observation and determines what any decoder can infer from it. \Cref{sec:scale} and
\cref{sec:certification} show why this distinction matters: the JL bound can hold even when the mapped points contain no information about the original geometry.

\begin{quote}
\textbf{Central question.} Which features of the original geometry can a decoder recover from the sketch, and how much of each?
\end{quote}

To analyze a single distance, we formulate the question as follows. A known linear map $L:\R^d\to\R^m$ acts on a pair $X,X'$. The target distance is $D=\norm{X-X'}^2$, and the observation is the sketch $\obs=(LX,LX')$. A
\emph{feature} is a square-integrable scalar $f(D)$ with positive variance $\Var(f(D))>0$. A \emph{decoder} is any measurable function $g(\obs)$.

Under squared-error loss, the optimal decoder for a feature is its conditional expectation. By the law of total variance, this optimal decoder minimizes the mean squared error:
\[
g_f^\star(\obs)=\E[f(D)\mid\obs],
\qquad
\inf_g\E\bigl[(f(D)-g(\obs))^2\bigr]
=\Var(f(D))-\Var\bigl(\E[f(D)\mid\obs]\bigr).
\]
The conditional expectation orthogonally projects $f(D)$ onto the closed subspace of square-integrable functions of the sketch. This projection splits the feature's variance into a recovered part and an orthogonal residual. We
measure this recovery using the \emph{recoverable variance fraction}:
\[
\mathsf{Rec}(f;\obs)
=\frac{\Var\bigl(\E[f(D)\mid\obs]\bigr)}{\Var(f(D))}\in[0,1].
\]
This fraction depends only on the feature and the observation, not on the choice of recovery algorithm.

The criterion separates what the JL bound conflates. For the replacement
cloud of \cref{prop:zeroinfo}, independence gives
$\E[f(D)\mid Y^n]=\E f(D)$, hence $\mathsf{Rec}(f;Y^n)=0$ for every
feature: the JL event can coexist with nothing recoverable. A linear sketch
of the data instead yields a nonzero projection. The supremum of
$\mathsf{Rec}(f;\obs)$ over features is therefore the object of interest,
and it is an operator norm.

\subsection{The distance-recovery operator}
\label{sec:operator}

Gathering all centered, square-integrable features of $D$ into the space $L^2_0(D)$ frames recovery as an operator problem. We define the \emph{recovery operator}
\[
\mathcal A:L^2_0(D)\longrightarrow L^2_0(\obs),
\qquad
\mathcal Af=\E[f(D)\mid\obs],
\]
where $L^2_0(\obs)$ denotes the centered, square-integrable functions of the observation. Because $\mathcal A$ is a restricted orthogonal projection, it never increases the $L^2$ norm, yielding
\[
\mathsf{Rec}(f;\obs)=\frac{\norm{\mathcal Af}_2^2}{\norm{f}_2^2}.
\]
The squared operator norm $\norm{\mathcal A}_{\mathrm{op}}^2$ represents the maximum recoverable fraction over all features. This norm also equals the squared Hirschfeld--Gebelein--R\'enyi (HGR) maximal correlation
\cite{renyi1959measures,dembo2001remarks}, which measures the maximum correlation obtainable by transforming the target and the observation:
\begin{equation}
\norm{\mathcal A}_{\mathrm{op}}^2
=\sup_{\substack{f\in L^2_0(D)\\ f\ne0}}
\frac{\norm{\mathcal Af}_2^2}{\norm{f}_2^2}
=\HGR^2(D;\obs),
\qquad
\HGR(Y;Z)=\sup_{f,g}\abs{\Corr(f(Y),g(Z))}.
\label{eq:bg-hgr}
\end{equation}
Here, the last supremum is over all measurable, nonconstant, square-integrable scalar functions. Optimizing first over $g$ yields the normalized conditional variance, which confirms this identity \cite{renyi1959measures}.
Consequently, no estimator can recover a variance fraction of any feature of $D$ that exceeds $\norm{\mathcal A}_{\mathrm{op}}^2$.

The operator describes recovery at three levels:
\[
\begin{array}{ll}
\text{one feature:} & \norm{\mathcal Af}_2^2/\norm{f}_2^2,\\[2pt]
\text{the best feature:} & \norm{\mathcal A}_{\mathrm{op}}^2,\\[2pt]
\text{all features at once:} & \text{the singular system of }\mathcal A.
\end{array}
\]
Constant features are recovered perfectly and form a trivial mode. Centering removes this mode, ensuring that the operator on $L^2_0(D)$ captures only the geometric information. While the operator norm is always defined, a
discrete singular system exists only for specific channels. The Gaussian channel we construct next admits such a system in closed form.

\subsection{The Gaussian distance channel}
\label{sec:channel}

We identify the concrete operator by reducing the sketch to a scalar
channel. The first step, \emph{row orthonormalization}, is a reversible
output transformation that preserves all information in a noiseless linear
observation by a known map. Let a rank-$r$ map $L$ have thin singular value
decomposition $L=U_rS_rV_r^\top$ and put $R=V_r^\top$. Discarding redundant
output coordinates yields
\[
Lx=U_rS_rRx,
\qquad
Rx=S_r^{-1}U_r^\top Lx.
\]
Because $S_r$ is invertible, $Lx$ and $Rx$ uniquely determine each other on
the nonredundant output. If $L$ has full row rank, we can instead take
$R=(LL^\top)^{-1/2}L$. Here $r$ denotes the rank of $L$; for
full-row-rank sketches with $m$ rows, $r=m$.

Let $\Sigma=\sigma^2I_d$, and let $L$ have rank $0<r<d$. Under this isotropic
covariance, row orthonormalization replaces the sketch with the equivalent
pair $(\Pi X,\Pi X')$, where $\Pi$ is a rank-$r$ orthogonal projector, and
by rotational invariance the law of this pair depends on $\Pi$ only through
its rank. For a random oblivious map, the row space is therefore all that
matters. A random $r$-dimensional subspace of $\R^d$ is \emph{Haar
distributed} if it is uniform over the Grassmannian, the set of all such
subspaces; represent it by an $r\times d$ matrix $R$ with orthonormal rows
and projector $\Pi=R^\top R$. Orthonormalizing the rows of an i.i.d.
standard Gaussian matrix yields this same Haar row-space distribution, a
standard construction in randomized numerical linear algebra
\cite{halko2011finding}: right multiplication by a fixed orthogonal matrix
preserves the Gaussian law, which makes the row space uniform on the
Grassmannian.

Standardize the difference as
\[
Z=\frac{X-X'}{\sqrt2\sigma}\sim\mathcal N(0,I_d),
\qquad
\mathcal T=\norm{Z}^2,
\qquad
U=\norm{\Pi Z}^2,
\qquad
V=\mathcal T-U.
\]
Then $U\sim\chi_r^2$, $V\sim\chi_{d-r}^2$, $U\perp V$, and
$D=2\sigma^2\mathcal T$.

Three independence properties reduce the full sketch to the scalar $U$.
First, the Gaussian midpoint $(X+X')/2$ is independent of $X-X'$, so its
projection carries no information about $D$. Second, rotational invariance
makes the direction of $\Pi Z$ uniform on the unit sphere of
$\operatorname{range}(\Pi)$ and independent of its length $U^{1/2}$. Third,
the orthogonal Gaussian components $\Pi Z$ and $(I-\Pi)Z$ are independent,
so $V$ is independent of the entire retained vector. Therefore, the
conditional law of $\mathcal T$ given the full sketch depends only on $U$;
in probabilistic notation, $D\perp\obs\mid U$.
\Cref{fig:channel} summarizes this reduction.

Standard gamma-sum identities now specify the channel. If
$U\sim\operatorname{Gamma}(a,\theta)$ and
$V\sim\operatorname{Gamma}(b,\theta)$ are independent, then
\[
\mathcal T=U+V\sim\operatorname{Gamma}(a+b,\theta),
\qquad
\frac{U}{\mathcal T}\sim\operatorname{Beta}(a,b),
\qquad
\frac{U}{\mathcal T}\perp\mathcal T.
\]
Here $a=r/2$, $b=(d-r)/2$, and the common scale is $\theta=2$ for the
chi-squared laws above. Observing $U$ from the latent total
$\mathcal T=U+V$ forms the \emph{beta--gamma channel}.

\begin{figure}[t]
\centering
\includegraphics[width=.82\linewidth]{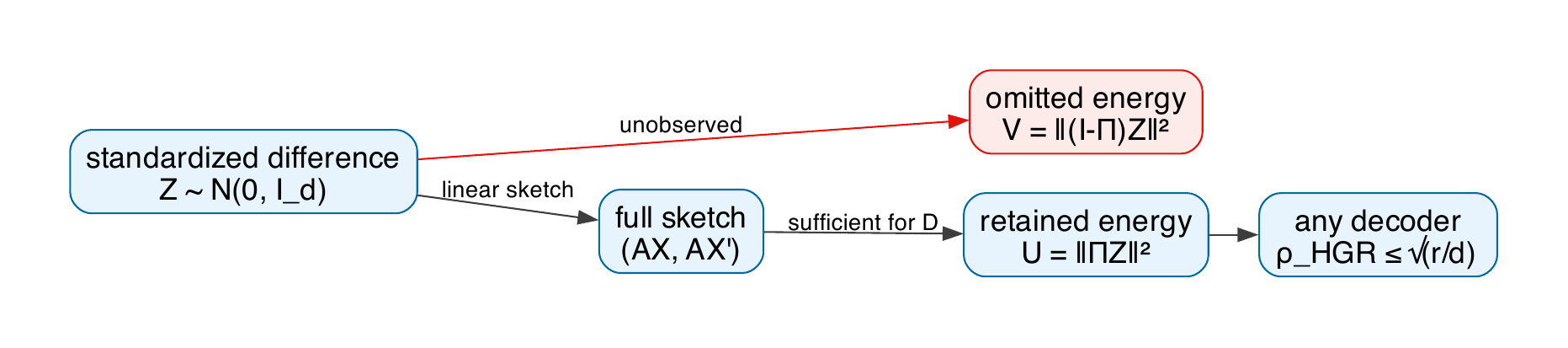}
\caption{The distance channel. A rank-$r$ sketch splits the squared norm of a
standardized Gaussian difference into observed and omitted parts. Given $U$,
the rest of the sketch provides no further information about the
distance $D=2\sigma^2(U+V)$.}
\label{fig:channel}
\end{figure}

The distance-recovery problem is now reduced to the conditional-expectation
operator of the beta--gamma channel: since $D\perp\obs\mid U$ and
$D=2\sigma^2\mathcal T$, the operator $\mathcal A$ acts through
$h(\mathcal T)\mapsto\E[h(\mathcal T)\mid U]$. \Cref{sec:hgr} computes its
operator norm and its complete singular spectrum.

\subsection{How the framework recurs}
\label{sec:targets}

The ranking and covariance-shape problems replace the pair
$(L^2_0(D),L^2_0(\obs))$ by other target and observable spaces. One
formulation---conditional expectation as orthogonal projection---covers all
three problems; the operator itself differs with each target. Throughout,
$d$ is the ambient dimension, $m$ the output dimension, and $r$ the map
rank; $n$ is the sample size and $q$ the number of candidates compared to a
query; $D$ denotes a squared distance, $S$ a shared-query distance
contrast, and $\obs$ the observed sketch. Three derived quantities recur:
$\HGR$ is the maximal correlation between a target and an observation,
$r_2(\Sigma)=(\tr\Sigma)^2/\tr(\Sigma^2)$ is the effective rank of the
spectrum, and $\ell_k$ indexes the singular values of the distance-recovery
operator. Sections~\ref{sec:rate1} and beyond introduce the recovery
fractions $\alm(\Sigma)$ and $\alpha_\Pi(\Sigma)$ and the block retention
ratios $\theta_g$ where they are first used.

For distance features, \cref{sec:hgr} solves the operator just constructed:
the centered distance attains the operator norm $\sqrt{r/d}$, and the full
singular system is a generalized Laguerre family.
\Cref{sec:rate1} then treats general covariance, where recovering the
distance value and recovering the best nonlinear feature can differ. The
recovery scale is a variance fraction of order $m/d$, a quantity the
JL bound does not control (\cref{sec:scale}).

For neighbor rankings, the target is the shared-query contrast
$S=D_{01}-D_{02}$, or its sign, and we observe
$\obs=(LX_0,LX_1,LX_2)$. The target is a function of two dependent distances,
so the scalar operator is replaced by a projection of the joint contrast; we
calculate the exact sign law from a conditional Gaussian calculation
(\cref{sec:rate2}). Sign statistics inherit the correlation scale
$\sqrt{m/d}$ rather than the variance-fraction scale $m/d$.

For covariance shape, the relevant Hilbert space is the Gaussian score space,
and removing the unknown overall scale is an orthogonal projection of the
shape score onto the orthogonal complement of the nuisance scale score. The
resulting efficient Fisher information contracts at order $(m/d)^2$ for
diffuse shape directions (\cref{sec:rate3}).

\begin{table}[htbp]
\centering
\caption{Guide to the three recovery laws. Each law is stated in the units
of its recovery quantity.}
\label{tab:recovery-laws}
\begin{adjustbox}{max width=\textwidth}
\begin{tabular}{@{}lll@{}}
\toprule
Target & Recovery quantity & Recovery law \\
\midrule
Distance features, isotropic & recoverable variance fraction & $m/d$ \\
Neighbor rankings & excess pairwise agreement over chance
& $\pi^{-1}\sqrt{m/d}\,(1+o(1))$ \\
Diffuse covariance shape & Haar-mean efficient-information fraction
& $\dfrac{(m-1)(m+2)}{(d-1)(d+2)}\sim(m/d)^2$ \\
\bottomrule
\end{tabular}
\end{adjustbox}
\end{table}

The exponents differ because the three rows measure different objects.
Variance recovery squares an $L^2$ correlation, rank comparisons depend on a
sign probability and therefore on the correlation itself, and a covariance
perturbation is compressed on both sides by the random projector.

\section{Exact Solution of the Gaussian Distance Operator}
\label{sec:hgr}

For the beta--gamma channel of \cref{sec:channel}, the recovery operator
$\mathcal A$ of \cref{sec:operator} acts through the scalar pair
$(U,\mathcal T)$. This section computes its operator norm, then its complete
singular system, and derives the channel's mutual information and identities
for the distance.

\subsection{The operator norm}
\label{sec:hgr-norm}

The norm follows from a result on partial sums. Dembo, Kagan, and Shepp
\cite{dembo2001remarks} proved that the maximal correlation between
a partial sum of $r$ i.i.d. nondegenerate finite-variance variables and a
containing sum of $d$ such variables is $\sqrt{r/d}$. The chi-square
variables of \cref{sec:channel} satisfy these hypotheses. Ordinary linear
correlation already equals $\sqrt{r/d}$; the theorem says that nonlinear
transformations of the partial and total sums cannot improve it.

\begin{theorem}[Operator norm and maximal correlation]
\label{thm:hgr}
For the isotropic component and any rank-$r$ linear observation,
\[
\norm{\mathcal A}_{\mathrm{op}}
=\HGR(D;\obs)
=\sqrt{\frac rd}.
\]
Consequently,
$\Var(\E[f(D)\mid\obs])\le(r/d)\Var(f(D))$ for every $f\in L^2$, and
\[
\inf_g\E[(f(D)-g(\obs))^2]\ge(1-r/d)\Var(f(D)).
\]
\end{theorem}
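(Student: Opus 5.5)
The plan is to reduce the theorem to the scalar beta--gamma channel of \cref{sec:channel} and then invoke the Dembo--Kagan--Shepp theorem. First, by row orthonormalization the sketch $\obs$ is in bijection with $(RX,RX')$. Under $\Sigma=\sigma^2I_d$ this is equivalent to $(\Pi X,\Pi X')$ with $\Pi$ a rank-$r$ projector. Since $D=2\sigma^2\mathcal T$, it suffices to show $\HGR(\mathcal T;\obs)=\sqrt{r/d}$, because HGR is invariant under bijective reparametrizations of either argument.

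The next step is to show $\HGR(\mathcal T;\obs)=\HGR(\mathcal T;U)$ using the conditional independence $D\perp\obs\mid U$ established in \cref{sec:channel}. Because $U$ is a function of $\obs$, we have $\HGR(\mathcal T;U)\le\HGR(\mathcal T;\obs)$. For the reverse inequality, take any centered $f$. Conditional independence gives $\E[f(\mathcal T)\mid\obs]=\E[f(\mathcal T)\mid U]$ almost surely, so $\norm{\mathcal Af}_2$ computed with $\obs$ equals the one computed with $U$. Taking the supremum over $f$ and using the identity \eqref{eq:bg-hgr} yields equality.

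To compute $\HGR(\mathcal T;U)$, rotate coordinates so that $\Pi$ is the projector onto the first $r$ coordinates. Then $U=\sum_{i\le r}Z_i^2$ and $\mathcal T=\sum_{i\le d}Z_i^2$, where the $Z_i^2$ are i.i.d.\ $\chi^2_1$ variables, which are nondegenerate with finite variance. The Dembo--Kagan--Shepp result \cite{dembo2001remarks} then gives $\HGR(U;\mathcal T)=\sqrt{r/d}$ directly. As a consistency check on attainment, $\Corr(U,\mathcal T)=\sqrt{\Var U/\Var\mathcal T}=\sqrt{r/d}$, so the linear feature $f(D)=D-\E D$ attains the bound. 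The first displayed consequence follows from $\norm{\mathcal Af}_2^2\le\norm{\mathcal A}_{\mathrm{op}}^2\norm{f-\E f}_2^2$. The second follows from the law-of-total-variance identity $\inf_g\E[(f(D)-g(\obs))^2]=\Var f(D)-\Var\E[f(D)\mid\obs]$ stated in \cref{sec:projection}.

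The main obstacle is the rigorous justification of $D\perp\obs\mid U$, which is the only nontrivial reduction step. The midpoint $(X+X')/2$ is independent of $Z$. Conditioning on $\Pi Z$, the omitted part $V=\norm{(I-\Pi)Z}^2$ is independent of $\Pi Z$. Hence the law of $\mathcal T=\norm{\Pi Z}^2+V$ given $\obs$ depends only on $\norm{\Pi Z}^2=U$. I would write this out via the factorization of the Gaussian density into projected and orthogonal components. A secondary point is the degenerate case $r=d$, which the statement excludes, and the case where $L$ is random. In the random case we condition on $L$, and the answer depends only on its rank.
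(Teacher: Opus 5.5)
Your proposal is correct and follows the paper's own proof. Both arguments row-orthonormalize to $(\Pi X,\Pi X')$, obtain $\mathcal T\perp\obs\mid U$ from midpoint independence and independence of the omitted component, and reduce $\HGR(D;\obs)$ to $\HGR(\mathcal T;U)$. Both then apply Dembo--Kagan--Shepp to partial sums of i.i.d.\ $\chi^2_1$ variables and derive the corollaries from the operator-norm bound and the law of total variance. Your two-sided sufficiency argument and the linear attainment check only spell out steps the paper states in one line.
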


The proof (\cref{app:proofs}) writes $\mathcal T$ as a sum of $d$
independent $\chi_1^2$ variables and $U$ as the partial sum of the first
$r$. The Dembo--Kagan--Shepp theorem gives
maximal correlation $\sqrt{r/d}$ for this pair. Because $U$ is sufficient for
$\mathcal T$---the rest of the sketch carries no further information about
the distance---the identity transfers to the full sketch. Thus thresholds, quantiles, clipped
distances, and every other square-integrable scalar feature obey the same
decoder-free ceiling. Mutual information complements this ceiling by
measuring total dependence \cite{cover2006elements}, but neither quantity
resolves recovery mode by mode; the singular system does.

\subsection{The singular spectrum}
\label{sec:rate1-iso}

The gamma law is the orthogonality measure for the generalized Laguerre
polynomials, and Griffiths \cite{griffiths1969canonical} diagonalized the gamma
conditional-expectation operator $h(\mathcal T)\mapsto\E[h(\mathcal T)\mid U]$
in that basis. For the beta--gamma pair of \cref{sec:channel}, this system
resolves recoverable variance one fluctuation component at a time.

\begin{theorem}[Laguerre spectrum]
\label{thm:laguerre}
The conditional-expectation operator
$f(\mathcal T)\mapsto\E[f(\mathcal T)\mid U]$ has generalized Laguerre
singular functions and singular values
\[
\ell_k=\left(\frac{(r/2)_k}{(d/2)_k}\right)^{1/2},
\qquad k=0,1,2,\ldots,
\]
where $(a)_k$ is the rising factorial \cite{griffiths1969canonical}. If
$f(\mathcal T)-\E f(\mathcal T)=\sum_{k\ge1}c_k\phi_k(\mathcal T)$ is the
orthonormal Laguerre expansion, then
\[
\Var(\E[f(\mathcal T)\mid\obs])=\sum_{k\ge1}\ell_k^2c_k^2.
\]
For fixed $k$, as $r\to\infty$ and $d-r\to\infty$,
\[
\ell_k=\left(\frac rd\right)^{k/2}
\left[1+O_k\!\left(\frac1r+\frac1d\right)\right].
\]
\end{theorem}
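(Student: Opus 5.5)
The plan is to diagonalize the operator explicitly via the beta--gamma factorization of \cref{sec:channel}, then read off the variance decomposition and the asymptotics. Write $\mathcal T\sim\operatorname{Gamma}(d/2,2)$, $U=B\mathcal T$ with $B\sim\operatorname{Beta}(r/2,(d-r)/2)$ independent of $\mathcal T$. Let $\phi_k$ be the orthonormal generalized Laguerre polynomials for the $\operatorname{Gamma}(d/2,2)$ law (parameter $\alpha=d/2-1$ in $\mathcal T/2$), and $\psi_k$ the orthonormal Laguerre polynomials for the $\operatorname{Gamma}(r/2,2)$ law of $U$. The key claim is
\[
\E[\phi_k(\mathcal T)\mid U]=\ell_k\,\psi_k(U),
\qquad
\E[\psi_k(U)\mid\mathcal T]=\ell_k\,\phi_k(\mathcal T).
\]
I would prove the second (forward) identity first, since $U\mid\mathcal T=t$ is simply $tB$. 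For any polynomial $p$ of degree $k$ in $U$, the map $t\mapsto\E[p(tB)]$ is a polynomial of degree at most $k$ in $t$. Hence the forward operator maps degree-$\le k$ polynomials into degree-$\le k$ polynomials. Its adjoint (the backward operator) does the same, because $\E[\mathcal T^j\mid U]$ is a polynomial of degree $j$ in $U$: $\mathcal T=U+V$ with $V\perp U$.

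A triangular operator whose adjoint is also triangular with respect to these two orthogonal polynomial sequences must map $\psi_k$ to a multiple of $\phi_k$. The argument is that $\E[\psi_k(U)\phi_j(\mathcal T)]=\langle\psi_k,\mathcal A\phi_j\rangle=0$ for $j<k$ by the degree bound. Polynomials are dense in both $L^2$ spaces (gamma laws are moment-determinate), so these functions form complete singular systems.

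To identify $\ell_k$, compare leading coefficients. Let $\kappa_k,\lambda_k$ be the leading coefficients of $\phi_k,\psi_k$. Then
\[
\ell_k=\frac{\E[\psi_k(U)\phi_k(\mathcal T)]}{1}
=\lambda_k\,\E[U^k\phi_k(\mathcal T)]
=\lambda_k\,\E B^k\,\E[\mathcal T^k\phi_k(\mathcal T)]
=\frac{\lambda_k}{\kappa_k}\E B^k .
\]
Here $\E B^k=(r/2)_k/(d/2)_k$. From the standard Laguerre norms for $\operatorname{Gamma}(a,2)$, one has $\kappa_k^{-2}\propto 4^k k!\,(a)_k$ (up to sign), so $\lambda_k/\kappa_k=\bigl((d/2)_k/(r/2)_k\bigr)^{1/2}$. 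This yields $\ell_k=\bigl((r/2)_k/(d/2)_k\bigr)^{1/2}$. This constant bookkeeping is where errors are most likely, and I would check it against $k=1$, where $\ell_1=\sqrt{r/d}$ agrees with \cref{thm:hgr}. Griffiths \cite{griffiths1969canonical} can be cited as a cross-check.

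The variance formula then follows. Since $D\perp\obs\mid U$ (\cref{sec:channel}), we have $\E[f(\mathcal T)\mid\obs]=\E[f(\mathcal T)\mid U]=\sum_k c_k\ell_k\psi_k(U)$, and orthonormality of the $\psi_k$ gives $\sum\ell_k^2c_k^2$. For the asymptotics, write
\[
\ell_k^2=\prod_{j=0}^{k-1}\frac{r+2j}{d+2j}
=\Bigl(\frac rd\Bigr)^k\prod_{j<k}\frac{1+2j/r}{1+2j/d}.
\]
For fixed $k$ each factor is $1+O_k(1/r+1/d)$. Taking square roots gives the stated bound; the condition $d-r\to\infty$ is only needed so that the ratio stays nondegenerate and is not actually used in the estimate. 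The main obstacle is the completeness and normalization argument in the second step; the rest is routine.
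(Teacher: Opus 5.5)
Your proposal is correct, and it proves from scratch what the paper only cites.

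\textbf{The paper's route.} The paper's argument is short. It observes that $U/\mathcal T\sim\mathrm{Beta}(\tfrac r2,\tfrac{d-r}2)$ is independent of $\mathcal T$. It then invokes Griffiths's identification of the canonical system of this gamma pair: generalized Laguerre polynomials with $\ell_k^2=(r/2)_k/(d/2)_k$. Parseval gives the variance identity, and expanding $\log\ell_k^2=\sum_{j<k}\log\frac{r/2+j}{d/2+j}$ gives the asymptotics.

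\textbf{Your route.} You derive the diagonalization directly. Both $\E[p(U)\mid\mathcal T=t]=\E[p(tB)]$ and $\E[\mathcal T^j\mid U]=\E[(U+V)^j\mid U]$ preserve polynomial degree. Hence $\E[\psi_k(U)\mid\mathcal T]$ lies in the span of $\phi_0,\dots,\phi_k$ and is orthogonal to $\phi_0,\dots,\phi_{k-1}$, so it equals $\ell_k\phi_k$; the backward operator is handled symmetrically. The leading-coefficient identity $\ell_k=(\lambda_k/\kappa_k)\E B^k$, with $\kappa_k^{-2}=4^kk!\,(a)_k$ for $\operatorname{Gamma}(a,2)$, then returns exactly the stated constant. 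This normalization is right.

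\textbf{Comparison.} Your route buys three things:
\begin{itemize}
\item It shows why Laguerre polynomials appear: degree preservation in both directions is the whole mechanism.
\item It gives an independent check of the constant.
\item It makes the completeness step explicit. Polynomials are dense because gamma laws have finite exponential moments (or by M.~Riesz's theorem from determinacy); the paper leaves this inside the citation.
\end{itemize}
The paper's route buys brevity.

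Your variance and asymptotic steps coincide with the paper's. Your remark that $d-r\to\infty$ is not used in the error estimate is also correct: since $r<d$, the error is simply $O_k(1/r)$.
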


This is Griffiths's diagonalization specialized to the gamma parameters
$a=r/2$ and $a+b=d/2$; the proof, with that of \cref{thm:mi}, appears in
\cref{app:proofs}. The first nonconstant mode is the centered total
energy $\mathcal T-\E\mathcal T$ and has
$\ell_1=\sqrt{r/d}=\norm{\mathcal A}_{\mathrm{op}}$, so the operator norm of
\cref{thm:hgr} is attained by the centered distance itself. The second mode
is the orthogonalized quadratic fluctuation of $\mathcal T$. Thus $k$ indexes
increasingly nonlinear fluctuation components, not additional spatial
coordinates. At leading order, advancing from mode $k$ to mode $k+1$ costs an
additional factor of $\sqrt{r/d}$.

\subsection{Mutual information and identities for the distance}
\label{sec:hgr-identities}

The spectrum measures recoverable variance feature by feature. From the gamma
entropies of the observed and omitted squared norms, we also obtain the total
mutual information.

\begin{theorem}[Exact mutual information]
\label{thm:mi}
With $h_\Gamma(k)$ the entropy of a gamma variable of shape $k$ at any fixed
scale,
\[
I(D;\obs)=h_\Gamma\!\left(\frac d2\right)
-h_\Gamma\!\left(\frac{d-r}{2}\right)
\longrightarrow-\frac12\log(1-\alpha)
\]
if $r/d\to\alpha<1$ and $d-r\to\infty$. If $r/d\to0$, then
$I(D;\obs)=r/(2d)+O(r^2/d^2+1/d)$.
\end{theorem}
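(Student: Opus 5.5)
The plan is to reduce $I(D;\obs)$ to $I(\mathcal T;U)$ and then compute the latter from gamma entropies. Since $D=2\sigma^2\mathcal T$ is an invertible rescaling, and since $D\perp\obs\mid U$ with $U$ a function of $\obs$ (\cref{sec:channel}), the data-processing inequality applied in both directions gives $I(D;\obs)=I(\mathcal T;U)$. Next I write
\[
I(\mathcal T;U)=h(\mathcal T)-h(\mathcal T\mid U).
\]
Conditional on $U=u$, we have $\mathcal T=u+V$ with $V\perp U$. Translation invariance of differential entropy then gives $h(\mathcal T\mid U)=h(V)$. With $\mathcal T\sim\operatorname{Gamma}(d/2,2)$ and $V\sim\operatorname{Gamma}((d-r)/2,2)$, this yields $I=h_\Gamma(d/2)-h_\Gamma((d-r)/2)$ at scale $2$. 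The scale contributes the same additive $\log\theta$ to both entropies, so the difference is scale-free, which matches the ``any fixed scale'' phrasing.

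For the asymptotics I use the closed form
\[
h_\Gamma(k)=k+\log\Gamma(k)+(1-k)\psi(k)
\]
at unit scale, together with the expansions $\log\Gamma(k)=(k-\tfrac12)\log k-k+\tfrac12\log2\pi+O(1/k)$ and $\psi(k)=\log k-\tfrac1{2k}+O(k^{-2})$. Substituting, the terms of order $k$ cancel:
\[
h_\Gamma(k)=\tfrac12\log k+\tfrac12(1+\log 2\pi)+O(1/k).
\]
This is the Gaussian entropy of variance $k$, as expected. Hence
\[
I=\tfrac12\log\frac{d/2}{(d-r)/2}+O\!\left(\frac1{d-r}\right)=-\tfrac12\log(1-r/d)+O\!\left(\frac1{d-r}\right),
\]
which tends to $-\tfrac12\log(1-\alpha)$ when $r/d\to\alpha<1$ and $d-r\to\infty$.

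For the regime $r/d\to0$, the same expansion gives $-\tfrac12\log(1-r/d)=\tfrac{r}{2d}+O(r^2/d^2)$. The remainder is $O(1/(d-r))$, and this is $O(1/d)$ once $r\le d/2$. However, the two $O(1/k)$ remainders should largely cancel, so it is cleaner to bound the difference directly. I will use $h_\Gamma(k)-\tfrac12\log k=c+\epsilon(k)$ with $\epsilon(k)=\tfrac{1}{c'k}+O(k^{-2})$ for an explicit constant $c'$. Then $\epsilon(d/2)-\epsilon((d-r)/2)=O(r/d^2)+O(d^{-2})$, which lies within the stated $O(r^2/d^2+1/d)$.

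The main obstacle is getting these remainder terms uniform. I would control them either with Binet-type bounds on $\log\Gamma$ and $\psi$, or by writing the entropy difference as the integral
\[
\int_{(d-r)/2}^{d/2}h_\Gamma'(k)\,dk,
\qquad h_\Gamma'(k)=1+(1-k)\psi'(k),
\]
and using $\psi'(k)=1/k+1/(2k^2)+O(k^{-3})$. This gives $h_\Gamma'(k)=\tfrac1{2k}+O(k^{-2})$, and integrating over an interval of length $r/2$ produces the claimed bounds uniformly. I regard this integral route as the cleanest one to pursue first.
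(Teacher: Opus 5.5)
Your proposal is correct and follows the paper's own argument. You reduce to $I(\mathcal T;U)$ by sufficiency of $U$, use $h(\mathcal T\mid U)=h(V)$ from translation invariance and $U\perp V$ to obtain the gamma-entropy difference, and read off the limits from Stirling expansions of $h_\Gamma$. Your integral representation $\int_{(d-r)/2}^{d/2}h_\Gamma'(k)\,dk$ with $h_\Gamma'(k)=\tfrac1{2k}+O(k^{-2})$ makes explicit the remainder control that the paper leaves to ``Stirling expansions.'' It also shows that, when $r/d\to0$, the error beyond $-\tfrac12\log(1-r/d)$ is in fact $O(r/d^2)$, since $h_\Gamma(k)=\tfrac12\log k+\tfrac12(1+\log 2\pi)-\tfrac1{3k}+O(k^{-2})$.
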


The proof (\cref{app:proofs}) evaluates the gamma entropies of the observed
and omitted squared norms; the limits follow from Stirling expansions.

For the rank-$r$ projector, define the rescaled projected distance by
\[
\widetilde D=\frac dr\norm{\Pi(X-X')}^2.
\]
The same decomposition gives three identities for the distance:
\[
\Corr(D,\widetilde D)=\sqrt{r/d},\qquad
\frac{\inf_f\E[(D-f(\widetilde D))^2]}{\Var(D)}=1-\frac rd,
\]
and
\[
\frac{\E[((\widetilde D-\E\widetilde D)-(D-\E D))^2]}{\Var(D)}
=\frac{d-r}{r}.
\]
Rescaling preserves the common baseline, but the fluctuation noise of
$\widetilde D$ has relative order $r^{-1/2}$ rather than the data's
$d^{-1/2}$.

For one pair under a rank-$m$ linear sketch,
\[
\HGR\bigl(D_{ij};(LX_i,LX_j)\bigr)=\sqrt{m/d}
\]
whereas the replacement cloud $Y^n$ of \cref{prop:zeroinfo} gives
\[
\HGR(D_{ij};Y^n)=0.
\]
Thus, the JL bound is compatible both with the maximal dependence
$\sqrt{m/d}$ attainable from a rank-$m$ linear sketch of one distance and
with zero dependence.

The isotropic operator is now solved. \Cref{sec:rate1} extends distance
recovery to general covariance, where the operator norm and the
recovery fraction for the distance can separate.

\section{Distance Recovery Beyond Isotropy}
\label{sec:rate1}

The isotropic solution is complete: \cref{thm:hgr,thm:laguerre} give the
operator norm and the full singular spectrum. Anisotropy separates two
questions that coincide for a flat spectrum: recovery of the distance value
and recovery of an arbitrary nonlinear feature.

\subsection{General covariance: the recovery fraction for the distance}
\label{sec:rate1-alpha}

\Cref{thm:hgr} is an isotropic statement about every feature. For general
covariance, recovery of the distance value itself admits an exact answer for
every spectrum and every rank-$m$ map. Because eigendirection $j$ contributes
$8\lambda_j^2$ to
$\Var(D)$, an optimal $m$-dimensional observation targets the largest squared
eigenvalues. We therefore define the \emph{recovery fraction for the distance
$D$} by
\begin{equation}
\alm(\Sigma)=\frac{\sum_{j\le m}\lambda_j^2}
{\sum_{j\le d}\lambda_j^2}\in(0,1].
\label{eq:alpham}
\end{equation}

We must separate two recovery questions. First, $\alm(\Sigma)$ is the largest
fraction of $\Var(D)$ recoverable by a rank-$m$ map.
Second, for a fixed observation, $\HGR^2(D;\obs)$ is the largest
recoverable fraction over \emph{all} square-integrable features of $D$. These quantities
coincide at $m/d$ in the isotropic case. Under anisotropy they can differ;
balanced spectral retention is a sufficient condition that restores
equality.

\begin{theorem}[Sharp rank-$m$ limit for distance recovery]
\label{thm:alpham}
For every rank-$m$ linear $L$ and every measurable estimator $g$,
\[
\frac{\Var\bigl(\E[D\mid\obs]\bigr)}{\Var(D)}\le\alm(\Sigma),
\qquad
\E\bigl[(D-g(\obs))^2\bigr]\ge 8\sum_{j>m}\lambda_j^2.
\]
The variance bound is attained when the row space of $L$ is the top-$m$
eigenspace of $\Sigma$; for this map, the mean-square bound is attained by
$g=\E[D\mid\obs]$. The same statements, with
$\Var(S)=12\tr(\Sigma^2)$, hold for the neighbor contrast
$S=\norm{X_0-X_1}^2-\norm{X_0-X_2}^2$ given
$(LX_0,LX_1,LX_2)$.
\end{theorem}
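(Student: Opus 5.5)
The plan is to reduce to the conditional mean of $D$ given the sketch, compute it in closed form, and then optimize over the row space.

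\textbf{Step 1: conditional mean and its variance.} Write $W=X-X'\sim\mathcal N(0,2\Sigma)$. The midpoint $(X+X')/2$ is independent of $W$, and $(LX,LX')$ is in bijection with $(L(X+X')/2,\,LW)$. Hence $\E[D\mid\obs]=\E[\norm{W}^2\mid LW]$.

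Assume first that $L$ has full row rank $m$; otherwise replace $m$ by the rank $r\le m$, and $\alpha_r\le\alm$. Gaussian conditioning gives
\[
W\mid LW \sim \mathcal N\bigl(PW,\,2\Sigma-2\Sigma L^\top(L\Sigma L^\top)^{-1}L\Sigma\bigr),
\qquad P=\Sigma L^\top(L\Sigma L^\top)^{-1}L .
\]
So $\E[D\mid\obs]=\norm{PW}^2+\text{const}$. Write $W=\sqrt2\,\Sigma^{1/2}G$ with $G\sim\mathcal N(0,I_d)$. Then $PW=\sqrt2\,\Sigma^{1/2}QG$, where $Q=\Sigma^{1/2}L^\top(L\Sigma L^\top)^{-1}L\Sigma^{1/2}$ is an orthogonal projector of rank $m$. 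The variance of a Gaussian quadratic form then gives
\[
\Var(\E[D\mid\obs])=8\tr\bigl((Q\Sigma Q)^2\bigr).
\]

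\textbf{Step 2: maximize over projectors.} I will show $\tr((Q\Sigma Q)^2)\le\sum_{j\le m}\lambda_j^2$ for every rank-$m$ projector $Q$. The operator $Q\Sigma Q$ has rank at most $m$. By Cauchy interlacing (or Courant–Fischer), its eigenvalues $\mu_1\ge\dots\ge\mu_m$ satisfy $\mu_j\le\lambda_j$. Therefore $\sum_j\mu_j^2\le\sum_{j\le m}\lambda_j^2$. Dividing by $\Var(D)=8\tr(\Sigma^2)$ from \eqref{eq:moments} gives the variance bound.

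The mean-square bound follows from the orthogonality (Pythagorean) decomposition
\[
\E\bigl[(D-g)^2\bigr]\ge\Var(D)-\Var(\E[D\mid\obs])\ge 8\sum_{j>m}\lambda_j^2 .
\]
For attainment, take the row space of $L$ to be the top-$m$ eigenspace of $\Sigma$. Then $Q$ is the projector onto that eigenspace and $Q\Sigma Q=\operatorname{diag}(\lambda_1,\dots,\lambda_m)$, so equality holds. The decoder $g=\E[D\mid\obs]$ attains the mean-square bound by the first identity in \cref{sec:projection}.

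\textbf{Step 3: the neighbor contrast.} First compute
\[
S=\norm{X_1}^2-\norm{X_2}^2-2\langle X_0,X_1-X_2\rangle
\]
after centering $\mu=0$; the shift cancels because $S$ is translation invariant. Its variance is $2\cdot2\tr(\Sigma^2)+4\cdot2\tr(\Sigma^2)=12\tr(\Sigma^2)$, since the three terms are uncorrelated.

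For the conditional mean, I condition coordinatewise. Given $LX_i$, write $X_i=P_iX_i+E_i$, where $E_i$ are independent residuals with a common covariance $\Sigma_\perp$ that is independent of the observations. Then
\[
\E[S\mid\obs]=\norm{\hat X_1}^2-\norm{\hat X_2}^2-2\langle\hat X_0,\hat X_1-\hat X_2\rangle ,
\]
where $\hat X_i=PX_i$; the residual traces cancel between the two candidates. This is the same polynomial applied to Gaussian vectors with covariance $P\Sigma P^\top$. Its variance is therefore $12\tr((P\Sigma P^\top)^2)=12\tr((Q\Sigma Q)^2)$, and Step 2 applies unchanged.

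\textbf{Main obstacle.} The only delicate point is identifying $\Var(\E[D\mid\obs])$ with $8\tr((Q\Sigma Q)^2)$, in particular getting the non-orthogonal oblique projector $P$ into the symmetric form $Q$. I will handle this with the whitening $W=\sqrt2\,\Sigma^{1/2}G$. After that, the interlacing bound is routine.
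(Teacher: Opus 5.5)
Your proof is correct and follows the paper's argument: Gaussian conditioning gives $\Var(\E[D\mid\obs])=8\tr\bigl((Q\Sigma Q)^2\bigr)$ for a whitened orthogonal projector $Q$ of rank at most $m$, and Poincar\'e separation (Cauchy interlacing) finishes the bound. The differences are cosmetic: you strip off the midpoint instead of conditioning $X$ and $X'$ separately, and for $S$ you expand the polynomial and condition each $X_i$ rather than using the paper's independent vectors $a=X_1-X_2$, $b=X_1+X_2-2X_0$, $c=X_0+X_1+X_2$. One small repair: because $\Sigma$ may be singular, $L\Sigma L^\top$ need not be invertible even when $L$ has full row rank, so use the pseudoinverse $(L\Sigma L^\top)^{\dagger}$ as the paper does; $Q$ is then still an orthogonal projector of rank at most $m$, and nothing else in your argument changes.
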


The proof (\cref{app:proofs}) combines the Gaussian conditional mean, a
projector identity, and eigenvalue interlacing for the compressed covariance.
The bounds are uniform: no nonlinear estimator of the distance value and no
$\Sigma$-aware choice of $L$ exceeds the squared-eigenvalue limit. The
mean-square bound also follows from the law of total variance: subtracting at
most $8\sum_{j\le m}\lambda_j^2$ recovered variance from
$\Var(D)=8\sum_{j\le d}\lambda_j^2$ leaves at least
$8\sum_{j>m}\lambda_j^2$.

\subsection{Anisotropy: divergence and reconciliation}
\label{sec:rate1-aniso}

The isotropic identity might suggest that $\sqrt{\alm(\Sigma)}$ bounds HGR
under anisotropy, but this bound fails in general. The mechanism is unequal
retention across independent spectral blocks:
observing one block completely while discarding another can let a nonlinear
transformation reweight their contributions. A two-dimensional counterexample
illustrates this mechanism.

\begin{theorem}[Anisotropic counterexample]
\label{thm:counterexample}
Let $D=2G_1^2+G_2^2$ for independent standard normals $G_1,G_2$, and observe
$U=2G_1^2$. Although $\alpha_1=4/5$, quadratic polynomial features already
give
\[
\HGR(D;U)\ge
\sqrt{\frac{246+2\sqrt{201}}{311}}
\approx0.9392
>\sqrt{4/5}.
\]
\end{theorem}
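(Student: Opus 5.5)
The plan is to lower-bound $\HGR(D;U)$ by restricting the supremum in \eqref{eq:bg-hgr} to a two-dimensional family of features. We optimize exactly over the observation side and over the target side within $\operatorname{span}\{D,D^2\}$. By the identity following \eqref{eq:bg-hgr}, for any fixed feature $f$ the best observation function is $g=\E[f(D)\mid U]$. This gives
\[
\HGR^2(D;U)\ \ge\ \sup_{f\in\operatorname{span}\{D,D^2\},\,\Var f>0}\frac{\Var(\E[f\mid U])}{\Var(f)}.
\]
The right-hand side is a generalized Rayleigh quotient of two $2\times2$ quadratic forms in the coefficients $(c_1,c_2)$ of $f=c_1D+c_2D^2$. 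Its supremum is the largest root $\lambda_{\max}$ of $\det(M_{\mathrm{rec}}-\lambda M_{\mathrm{tot}})=0$. Here $M_{\mathrm{tot}}$ is the covariance matrix of $(D,D^2)$, and $M_{\mathrm{rec}}$ is the covariance matrix of $(\E[D\mid U],\E[D^2\mid U])$.

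To compute these matrices, write $A=G_1^2$ and $B=G_2^2$. These are independent $\chi^2_1$ variables with moments $\E A^k=1,3,15,105$ for $k=1,\dots,4$. Since $U=2A$ determines $A$, we have $D=2A+B$, and the conditional means are
\[
\E[D\mid U]=2A+1,\qquad \E[D^2\mid U]=4A^2+4A+3.
\]
The entries of $M_{\mathrm{rec}}$ are therefore polynomial moments of $A$ up to order four. The entries of $M_{\mathrm{tot}}$ require $\E D^k$ for $k\le4$. We obtain these by binomial expansion of $(2A+B)^k$ and independence of $A$ and $B$; alternatively, $\Var(D)=8+2=10$ serves as a check. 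We will also record the baseline value $\alpha_1=4/(4+1)=4/5$ from \eqref{eq:alpham}, since the eigenvalues of the covariance of $(G_1\sqrt2,G_2)$ are $2$ and $1$. Equivalently, $f=D$ alone gives ratio $\Var(2A+1)/\Var(D)=8/10$, so the linear feature reproduces $\alpha_1$ exactly.

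The characteristic equation is a quadratic in $\lambda$ with rational coefficients. Solving it should yield $\lambda_{\max}=(246+2\sqrt{201})/311$. Taking square roots gives the stated bound. The final comparison is that $\lambda_{\max}\approx0.882>0.8$, because $\sqrt{201}\approx14.18$ gives a numerator of about $274.4$ over $311$.

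The main obstacle is the bookkeeping in the fourth-moment entries, namely $\Var(D^2)$ and $\Cov(D,D^2)$, together with clearing denominators so that the discriminant reduces exactly to a multiple of $201$. As a safeguard, I would first compute both matrices with integer entries. I would then check the reduction by confirming that the smaller root is consistent with the $f=D$ value $0.8$ lying between the two roots.
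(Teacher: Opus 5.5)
Your proposal is correct and is essentially the paper's argument: the paper takes the top canonical correlation between $\operatorname{span}\{D,D^2\}$ and $\operatorname{span}\{U,U^2\}$. Because $\E[D\mid U]=U+1$ and $\E[D^2\mid U]=U^2+2U+3$ are already quadratic in $U$, your $M_{\mathrm{rec}}=\begin{pmatrix}8&112\\112&1952\end{pmatrix}$ equals $\Sigma_{DU}\Sigma_{UU}^{-1}\Sigma_{UD}$, so your pencil is the same eigenproblem and only spares you inverting $\Sigma_{UU}$. With $M_{\mathrm{tot}}=\begin{pmatrix}10&132\\132&2240\end{pmatrix}$ the determinant condition becomes $311\lambda^2-492\lambda+192=0$, whose discriminant is $16\cdot201$, giving exactly $\lambda_{\max}=(246+2\sqrt{201})/311$ and a smaller root $\approx0.700<0.8$, as your sanity check expects.
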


When $\HGR^2(D;\obs)>\alm(\Sigma)$, some nonlinear feature of $D$ is more
recoverable than the distance value itself. We call this gap
\emph{nonlinear excess}. Here $\sqrt{4/5}\approx0.8944$ is the linear
correlation with the distance. The proof (\cref{app:proofs}) evaluates the
canonical correlation of the quadratic features, which makes the displayed
value a lower bound on $\HGR$. For
$D_\lambda=\lambda G_1^2+G_2^2$ and $U_\lambda=\lambda G_1^2$,
\cref{fig:anisotropy} reports canonical-correlation lower bounds from nested
polynomial feature spaces. These values are lower bounds for a fixed observation,
not the unrestricted HGR value or an optimization over observations.

Nonlinear excess can occur but is not universal. A sufficient condition for
its absence follows by writing
\[
\Sigma=\sum_{g=1}^G\lambda_gP_g,
\qquad r_g=\rank P_g,
\]
where the $\lambda_g>0$ are distinct and $P_g$ projects onto the associated
eigenspace, or \emph{spectral block}. Let $\Pi$ satisfy
$\Pi\Sigma=\Sigma\Pi$, so it preserves every block, and define
\[
s_g=\rank(\Pi P_g),\qquad
\theta_g=\frac{s_g}{r_g},\qquad
\alpha_\Pi(\Sigma)
=\frac{\tr(\Pi\Sigma^2)}{\tr(\Sigma^2)}
=\frac{\sum_g\lambda_g^2s_g}{\sum_g\lambda_g^2r_g}.
\]
The Gaussian distance then splits across independent beta--gamma channels, one
per spectral block. Maximal correlation tensorizes over such independent pairs
\cite{witsenhausen1975sequences}: for nondegenerate $(Y_j,Z_j)$ independent
across $j$,
\[
\HGR\bigl((Y_j)_j;(Z_j)_j\bigr)=\max_j\HGR(Y_j;Z_j).
\]

\begin{theorem}[Spectral-block HGR bounds and the balanced regime]
\label{thm:block-hgr}
For independent $X,X'\sim\mathcal N(\mu,\Sigma)$, let
$D=\norm{X-X'}^2$ and $\obs=(\Pi X,\Pi X')$. Under
$\Pi\Sigma=\Sigma\Pi$,
\[
\sqrt{\alpha_\Pi(\Sigma)}
\le\HGR(D;\obs)
\le\sqrt{\max_g\theta_g}.
\]
If $s_g=\theta r_g$ for every nonzero spectral block, then
\[
\HGR(D;\obs)=\sqrt\theta=\sqrt{\alpha_\Pi(\Sigma)}.
\]
Consequently,
$\Var(\E[f(D)\mid\obs])\le\theta\Var(f(D))$ for every $f\in L^2(D)$,
and the centered distance paired with the centered retained squared norm
attains maximal correlation.
\end{theorem}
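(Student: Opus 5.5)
The plan is to reduce the sketch to a vector of per-block beta--gamma channels and then combine the isotropic result with tensorization. Write $Z=\Sigma^{-1/2}(X-X')/\sqrt2$ on the range of $\Sigma$, so that $D=2\sum_g\lambda_g\mathcal T_g$ with $\mathcal T_g=\norm{P_gZ}^2\sim\chi^2_{r_g}$ independent across blocks. Because $\Pi$ commutes with $\Sigma$, it commutes with every $P_g$, so $\Pi P_g$ is an orthogonal projector of rank $s_g$ inside block $g$. Set $U_g=\norm{\Pi P_gZ}^2\sim\chi^2_{s_g}$ and $V_g=\mathcal T_g-U_g$.

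The first step repeats the three independence arguments of \cref{sec:channel} block by block. The midpoint is independent of $X-X'$. Within each block, the direction of $\Pi P_gZ$ is independent of its length. The omitted parts $(I-\Pi)P_gZ$ are independent of everything retained. Together these give $D\perp\obs\mid(U_g)_g$. It follows that $\HGR(D;\obs)=\HGR(D;(U_g)_g)$: any function of $\obs$ can be replaced by its conditional expectation given $(U_g)_g$ without losing correlation with $f(D)$.

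\textbf{Upper bound.} Since $D$ is a function of $(\mathcal T_g)_g$, we have $\HGR(D;(U_g)_g)\le\HGR((\mathcal T_g)_g;(U_g)_g)$. The pairs $(\mathcal T_g,U_g)$ are independent across $g$, so Witsenhausen tensorization reduces this to $\max_g\HGR(\mathcal T_g;U_g)$. By \cref{thm:hgr}, which is Dembo--Kagan--Shepp applied to each block, $\HGR(\mathcal T_g;U_g)=\sqrt{s_g/r_g}=\sqrt{\theta_g}$. Degenerate blocks need separate handling: blocks with $s_g=0$ contribute zero correlation, and blocks with $s_g=r_g$ contribute correlation $1$, consistent with $\theta_g=1$. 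I would handle these by dropping constant coordinates and treating fully observed blocks directly, rather than by invoking the nondegeneracy hypothesis of tensorization.

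\textbf{Lower bound.} Take the linear pair consisting of the centered $D$ and the centered retained squared norm $W=2\sum_g\lambda_gU_g$. Independence gives $\Cov(D,W)=4\sum_g\lambda_g^2\Var(U_g)=8\sum_g\lambda_g^2s_g$. We also have $\Var(W)=8\sum_g\lambda_g^2s_g$ and $\Var(D)=8\sum_g\lambda_g^2r_g$. Hence $\Corr(D,W)^2=\sum\lambda_g^2s_g/\sum\lambda_g^2r_g=\alpha_\Pi(\Sigma)$, which is the lower bound, and $\tr(\Pi\Sigma^2)=\sum_g\lambda_g^2s_g$ matches the definition.

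\textbf{Balanced case.} If $s_g=\theta r_g$ for every block, then $\alpha_\Pi=\theta=\max_g\theta_g$ and the two bounds coincide. This gives $\HGR=\sqrt\theta$, attained by $(D-\E D,\,W-\E W)$. The variance inequality then follows from the operator-norm identity \eqref{eq:bg-hgr}.

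The main obstacle is the careful reduction step: showing that conditioning on $\obs$ is equivalent to conditioning on $(U_g)_g$ when $\Pi$ only commutes with $\Sigma$. The issue is that the direction of $\Pi P_gZ$ must be independent of its norm separately within each block, not merely jointly. I would justify this by the rotational invariance of the standard Gaussian restricted to each retained subspace $\operatorname{range}(\Pi P_g)$.
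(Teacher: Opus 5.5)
Your proposal is correct and follows the paper's proof essentially step for step. You reduce the sketch to the independent per-block beta--gamma pairs $(\mathcal T_g,U_g)$ using midpoint, direction, and omitted-part independence, and you bound from above by data processing plus Witsenhausen tensorization, with Dembo--Kagan--Shepp applied in each block. You bound from below by the linear witness $2\sum_g\lambda_g(U_g-s_g)$, whose correlation with $D$ is $\sqrt{\alpha_\Pi(\Sigma)}$; your extra care with degenerate blocks ($s_g\in\{0,r_g\}$) and with per-block direction--length independence fills in details the paper leaves implicit.
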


Witsenhausen tensorization bounds this product channel by its largest block
correlation, $\sqrt{\max_g\theta_g}$; a centered linear witness supplies the
lower bound (\cref{app:proofs} contains the full argument). Unequal
eigenvalues alone therefore do not break the balanced
identity. Unequal retention permits nonlinear excess but does not force it;
equal fractional retention makes the excess vanish.

The flat spectrum closes the gap between the two recovery quantities. If
$\Sigma=\lambda P$ has flat support of rank $r$, an observation retaining
rank $s$ within $\operatorname{range}(P)$ has $\HGR^2=s/r$; an optimal
rank-at-most-$m$ observation chooses its row space inside the support, takes
$s=\min\{m,r\}$, and attains $\HGR^2=\alm(\Sigma)$. More generally, for one
distance, $\alm(\Sigma)$ governs the distance value for every spectrum, and
$\alpha_\Pi(\Sigma)$ is also the ceiling for every feature under a balanced
aligned projection. Rankings compare several dependent distances sharing a
query and follow the square-root correlation scale associated with the same
rank ratio.

\section{Rankings Under Random Projection}
\label{sec:rate2}

\Cref{sec:hgr,sec:rate1} quantified the recovery of one distance. Rankings
instantiate the projection template of \cref{sec:targets} with the
shared-query contrast as target and the three projected points as
observation. Because rankings compare several dependent distances, we first
define the rank functional and identify the ranking scale before computing
its exact law.

For two tie-free score vectors $a,b\in\R^q$, Kendall's rank correlation is
\begin{equation}
\tau(a,b)=\binom q2^{-1}
\sum_{j<k}\operatorname{sgn}\!\left[(a_j-a_k)(b_j-b_k)\right].
\label{eq:bg-kendall}
\end{equation}
This statistic is the fraction of concordant pairs minus the fraction of discordant pairs
\cite{kendall1938new}. For independent continuous rankings, the expected
Kendall correlation is zero.

\subsection{Why \texorpdfstring{$\sqrt{m/d}$}{sqrt(m/d)}: sign statistics at the linear mode}
\label{sec:rate2-why}

We can identify the ranking scale before the exact calculation. For a query
$X_0$ and candidates $X_1,X_2$, write
\[
S=D_{01}-D_{02}=a^\top b,
\qquad
a=X_1-X_2,
\qquad
b=X_1+X_2-2X_0.
\]
For i.i.d. isotropic Gaussians, $a$ and $b$ are independent. Under the rescaled
rank-$m$ projector, the contrast is
$\widetilde S=(d/m)a^\top\Pi b$. Conditional on $a$, the pair
$(S,\widetilde S)$ is centered jointly Gaussian in $b$ with correlation
\[
\rho(a)=\frac{\norm{\Pi a}}{\norm{a}},
\qquad
\rho(a)^2\sim\operatorname{Beta}\!\left(\frac m2,\frac{d-m}{2}\right).
\]
The squared correlation has mean $m/d$.

For a centered jointly Gaussian pair, Sheppard
\cite{sheppard1899application} showed that the sign-agreement probability is
$\tfrac12+\pi^{-1}\arcsin\rho$. As $\rho\to0$, this agreement probability
exceeds chance by
$\rho/\pi+O(\rho^3)$. Thus the retained-energy fraction $m/d$ becomes a
ranking excess of order $\pi^{-1}\sqrt{m/d}$ on the correlation scale.
Averaging Sheppard's formula over the Beta-distributed conditional correlation
then yields the exact law.

\subsection{Exact pairwise order agreement}
\label{sec:pairwise-ranking}

Let $X_0,X_1,X_2\overset{\text{i.i.d.}}{\sim}
\mathcal N(\mu,\sigma^2I_d)$ independently of a Haar rank-$m$ projection,
where $0<m<d$. Let $D_{0j}$ and $\widetilde D_{0j}$ be the original and
rescaled projected squared distances.

\begin{theorem}[Exact pairwise order agreement]
\label{thm:arcsine}
The probability that the comparisons agree is
\[
p_{m,d}
=\Pr\bigl[(D_{01}-D_{02})(\widetilde D_{01}-\widetilde D_{02})>0\bigr]
=\frac12+\frac1\pi\E\!\left[\arcsin\sqrt B\right],
\qquad B\sim\mathrm{Beta}\!\left(\frac m2,\frac{d-m}{2}\right).
\]
Hence $p_{m,d}\le\tfrac12+\tfrac12\sqrt{m/d}$, and
\[
p_{m,d}=\frac12+\frac1\pi\sqrt{\frac md}
\left(1+O\!\left(\frac1m+\frac md\right)\right)
\]
when $m\to\infty$ and $m/d\to0$. For one query ranking any number of
candidates, the expected Kendall correlation is $\E\tau=2p_{m,d}-1$.
\end{theorem}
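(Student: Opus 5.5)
The plan follows the mechanism sketched in \cref{sec:rate2-why}: condition on $a$, apply Sheppard's formula, average over the Beta law, and then bound and expand the resulting expectation.

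\textbf{Step 1 (conditional Gaussian structure).} Write $S=a^\top b$ and $\widetilde S=(d/m)a^\top\Pi b$ with $a=X_1-X_2$ and $b=X_1+X_2-2X_0$. For i.i.d. isotropic Gaussians, $a\sim\mathcal N(0,2\sigma^2I_d)$ and $b\sim\mathcal N(0,6\sigma^2I_d)$. They are uncorrelated, since $\Cov(X_1-X_2,\,X_1+X_2-2X_0)=\sigma^2I-\sigma^2I=0$, and jointly Gaussian, hence independent. Both are independent of $\Pi$. Conditional on $(a,\Pi)$, the pair $(S,\widetilde S)$ is a linear image of $b$, so it is centered bivariate Gaussian with covariance proportional to
\[
\begin{pmatrix}\norm a^2 & \tfrac dm a^\top\Pi a\\ \tfrac dm a^\top\Pi a & \tfrac{d^2}{m^2}\norm{\Pi a}^2\end{pmatrix}.
\]
Since $a^\top\Pi a=\norm{\Pi a}^2$, the correlation is $\rho=\norm{\Pi a}/\norm a\ge0$. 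The positive rescaling $d/m$ does not affect signs.

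\textbf{Step 2 (Sheppard and averaging).} Sheppard's formula gives the conditional agreement probability $\tfrac12+\pi^{-1}\arcsin\rho$. Ties have probability zero because $\rho>0$ almost surely. Next identify the law of $\rho^2$. By independence and rotational invariance, for a fixed $\Pi$ the direction $a/\norm a$ is uniform on the sphere. The ratio $\norm{\Pi a}^2/\norm a^2$ is then a ratio of the form $\chi^2_m/(\chi^2_m+\chi^2_{d-m})$, which is $\mathrm{Beta}(m/2,(d-m)/2)$ by the gamma identities of \cref{sec:channel}. Taking expectations over $a$ and $\Pi$ yields the exact formula for $p_{m,d}$.

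\textbf{Step 3 (bound and asymptotics).} For the upper bound, use $\arcsin x\le \pi x/2$ on $[0,1]$, which holds by concavity of $\sin$ on $[0,\pi/2]$. Then apply Jensen's inequality, $\E\sqrt B\le\sqrt{\E B}=\sqrt{m/d}$, to get $p_{m,d}\le\tfrac12+\tfrac12\sqrt{m/d}$.

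For the asymptotic expansion, write $\arcsin x=x+R(x)$ with $0\le R(x)\le Cx^3$ on $[0,1]$. Then
\[
\E\arcsin\sqrt B=\E\sqrt B+O(\E B^{3/2}).
\]
The remainder satisfies $\E B^{3/2}\le(\E B^2)^{3/4}\le C(m/d)^{3/2}$, which is $\sqrt{m/d}\cdot O(m/d)$.

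For the main term,
\[
\E\sqrt B=\frac{\Gamma(\tfrac{m+1}2)\Gamma(\tfrac d2)}{\Gamma(\tfrac m2)\Gamma(\tfrac{d+1}2)}.
\]
The standard gamma-ratio expansion $\Gamma(x+\tfrac12)/\Gamma(x)=\sqrt x\,(1+O(1/x))$ gives $\E\sqrt B=\sqrt{m/d}\,(1+O(1/m+1/d))$. Since $1/d\le 1/m$, this combines with the remainder estimate into the stated factor $1+O(1/m+m/d)$.

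This step is the only delicate one. The points to check are that the cubic remainder is uniform on $[0,1]$, which holds because $\arcsin$ is bounded there, and that the errors are organized so that they are relative to $\sqrt{m/d}$.

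\textbf{Step 4 (Kendall).} For one query $X_0$ and candidates $X_1,\dots,X_q$, each unordered pair $j<k$ contributes the sign term $\operatorname{sgn}[(D_{0j}-D_{0k})(\widetilde D_{0j}-\widetilde D_{0k})]$. The triple $(X_0,X_j,X_k)$ together with $\Pi$ has the same law as $(X_0,X_1,X_2)$ with $\Pi$. Each term therefore has mean $p_{m,d}-(1-p_{m,d})=2p_{m,d}-1$. Linearity of expectation over the $\binom q2$ pairs in \eqref{eq:bg-kendall} gives $\E\tau=2p_{m,d}-1$.
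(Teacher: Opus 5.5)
Your proposal is correct and follows the paper's proof essentially step for step: condition on $a$, apply Sheppard's formula with correlation $\norm{\Pi a}/\norm{a}$, average over the $\mathrm{Beta}(\tfrac m2,\tfrac{d-m}2)$ law, obtain the bound from $\arcsin x\le\tfrac\pi2 x$ and Jensen, and get $\E\tau=2p_{m,d}-1$ by exchangeability and linearity. For the expansion, the paper only cites concentration of $B$ at $m/d$, whereas you use the exact gamma-ratio formula for $\E\sqrt B$ and the cubic remainder bound $\E B^{3/2}\le(\E B^2)^{3/4}=O((m/d)^{3/2})$, which justifies the stated error term more explicitly.
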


The proof (\cref{app:proofs}) conditions on $a$ in the preceding reduction
and averages Sheppard's formula over the Beta law. Thus, when $m=o(d)$, the
expected pairwise agreement is only $\Theta(\sqrt{m/d})$ above chance.

\begin{figure}[H]
\centering
\begin{minipage}[t]{.485\linewidth}
\vspace{0pt}
\centering
\includegraphics[width=\linewidth]{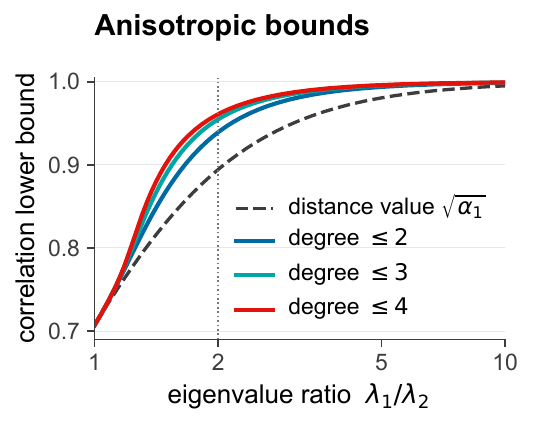}
\caption{Polynomial lower bounds for $D_\lambda\mapsto U_\lambda$. Higher
degrees reveal nonlinear dependence beyond $\sqrt{\alpha_1}$; the vertical
line marks the counterexample $\lambda=2$.}
\label{fig:anisotropy}
\end{minipage}\hfill
\begin{minipage}[t]{.485\linewidth}
\vspace{0pt}
\centering
\includegraphics[width=\linewidth]{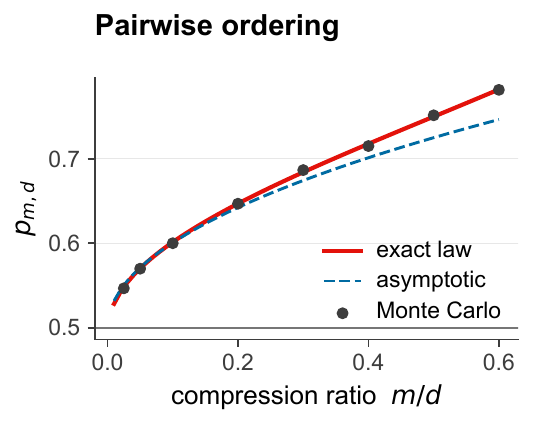}
\caption{Pairwise order agreement: the exact Beta--arcsine law,
its asymptote, and Monte Carlo simulations at $d=200$ with $120{,}000$
samples per marker.}
\label{fig:order}
\end{minipage}
\end{figure}

\subsection{Nearest-neighbor and top-\texorpdfstring{$k$}{k} overlap}
\label{sec:neighborhood-ranking}

Pairwise agreement does not determine whether the nearest candidate or the
full top-$k$ set is preserved. Let
$X_0,X_1,\ldots,X_q\overset{\text{i.i.d.}}{\sim}\mathcal N(0,I_d)$, with $X_0$
the query. Let $R\in\R^{m\times d}$ have orthonormal rows and be independent
of the sample; isotropy makes its orientation irrelevant. Let
$\widetilde{\mathcal N}_k$ and $\mathcal N_k$ be the projected and original
sets of $k$ nearest candidate indices.

Write
\[
U_j=\norm{R(X_j-X_0)}^2,
\qquad
V_j=\norm{(I-\Pi)(X_j-X_0)}^2,
\qquad \Pi=R^\top R.
\]
The projected ranking is determined by $U=(U_j)_j$, while the original ranking
is determined by $U+V$. For one coordinate, Gaussian fourth moments give, for
distinct candidates $j$ and $k$,
\[
\Var\!\left((x_j-x_0)^2\right)=8,
\qquad
\Cov\!\left((x_j-x_0)^2,(x_k-x_0)^2\right)=2.
\]
The centered and normalized vectors
$(U-2m\mathbf1)/\sqrt m$ and
$(V-2(d-m)\mathbf1)/\sqrt{d-m}$ therefore converge to independent Gaussian
score vectors with covariance $6I_q+2\mathbf1\mathbf1^\top$. The
rank-one term in this covariance represents a random shift shared by every
candidate. Removing this shift leaves
the ranks unchanged and gives correlation $\sqrt{m/d}$ between the retained
and full scores.

For independent standard normal vectors $G,H\in\R^q$, let
$\mathcal K_k(z)$ return the indices of the $k$ smallest coordinates and
define
\[
\mathcal R_{q,k}(\rho)=\frac1k\E\abs{
\mathcal K_k(G)\cap
\mathcal K_k\!\left(\rho G+\sqrt{1-\rho^2}H\right)}.
\]
The nearest-neighbor case is
$p_q(\rho)=\mathcal R_{q,1}(\rho)$, the probability that the two score vectors
have the same smallest coordinate. For
$G_\rho=\rho G+\sqrt{1-\rho^2}H$, define the minimum cells
\[
A_i=\{x\in\R^q:x_i=\min_jx_j\},
\qquad i=1,\ldots,q.
\]
Then
\begin{equation}
\sum_{i=1}^q\Pr(G\in A_i,\,G_\rho\in A_i)
=\Pr\!\left(\arg\min_iG_i=\arg\min_i(G_\rho)_i\right).
\label{eq:bg-partition-stability}
\end{equation}
The sum over all $q$ cells is essential; one cell has probability smaller by
a factor of $q$ by symmetry. Let $\phi_\rho$ be the standard bivariate-normal
density and let $\overline\Phi_\rho(x,y)=\Pr(Z_1>x,Z_2>y)$ be its joint survival
function. We also write $\overline\Phi(x)=\Pr(Z>x)$ for the univariate
standard-normal survival function. Conditioning on the common minimum gives
\begin{equation}
p_q(\rho)=q\iint_{\R^2}\phi_\rho(x,y)
\overline\Phi_\rho(x,y)^{q-1}\,dx\,dy.
\label{eq:pq}
\end{equation}
Each remaining Gaussian pair must exceed both conditioned values. By central
symmetry the same integral may be written with the lower joint CDF after
replacing the common minimum by a common maximum.

\begin{theorem}[Gaussian score limit for fixed candidate count]
\label{thm:knn-gaussian-limit}
Fix $q$ and $1\le k<q$. If $m\to\infty$, $d-m\to\infty$, and
$m/d\to\alpha\in[0,1]$, then
\[
\E\frac{\abs{\mathcal N_k\cap\widetilde{\mathcal N}_k}}{k}
\longrightarrow\mathcal R_{q,k}(\sqrt\alpha).
\]
For $k=1$,
\[
\Pr(\mathcal N_1=\widetilde{\mathcal N}_1)
=p_q\!\left(\sqrt{m/d}\right)
+O_q\!\left(m^{-1/2}+(d-m)^{-1/2}\right).
\]
In particular, $p_q(0)=1/q$ and $\mathcal R_{q,k}(0)=k/q$. Thus, for fixed
$q,k$ and $m/d\to0$, the probability that the nearest neighbors agree tends
to $1/q$, and the expected normalized top-$k$ overlap tends to $k/q$.
\end{theorem}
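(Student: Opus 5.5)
We reduce both rankings to functions of a pair of $q$-dimensional score vectors, establish a joint central limit theorem for them, and then transfer the top-$k$ overlap functional through the limit. Conditional on $R$ (which by isotropy we may take fixed), write $X_j-X_0=W_j$ and split $W_j=\Pi W_j+(I-\Pi)W_j$. The vectors $(\Pi X_j)_{j=0}^q$ and $((I-\Pi)X_j)_{j=0}^q$ are independent, so $U=(U_j)_{j\le q}$ and $V=(V_j)_{j\le q}$ are independent. Moreover $U$ is a sum of $m$ i.i.d.\ copies of the vector $\bigl((x_j-x_0)^2\bigr)_{j\le q}$ taken over the $m$ retained coordinates; $V$ is the analogous sum over the $d-m$ omitted coordinates. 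The multivariate CLT then gives
\[
A_m=\frac{U-2m\mathbf 1}{\sqrt m}\Rightarrow N(0,C),\qquad B_{d-m}=\frac{V-2(d-m)\mathbf 1}{\sqrt{d-m}}\Rightarrow N(0,C)\ \text{independently},
\]
with $C=6I_q+2\mathbf 1\mathbf 1^\top$. The projected top-$k$ set is $\mathcal K_k(A_m)$. The original top-$k$ set is $\mathcal K_k\bigl(\sqrt{m/d}\,A_m+\sqrt{1-m/d}\,B_{d-m}\bigr)$, because adding the common constant $2d$ and rescaling by $\sqrt d$ does not change ranks.

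Next we remove the common shift. Write a $N(0,C)$ vector as $\sqrt6\,G+\sqrt2\,\xi\mathbf 1$ with $\xi\sim N(0,1)$ scalar. Shifting every coordinate by the same amount leaves $\mathcal K_k$ unchanged, so the limiting pair of rankings equals $\bigl(\mathcal K_k(G),\ \mathcal K_k(\sqrt\alpha G+\sqrt{1-\alpha}H)\bigr)$. Under $\rho_{m,d}=\sqrt{m/d}\to\sqrt\alpha$, the joint law of $(A_m,\rho A_m+\sqrt{1-\rho^2}B)$ converges to this pair. The overlap $\abs{\mathcal K_k(x)\cap\mathcal K_k(y)}$ is bounded and continuous off the tie set $\{x_i=x_j\}\cup\{y_i=y_j\}$, which has limit-probability zero. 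The continuous mapping theorem plus bounded convergence therefore give the first claim. The values at $\rho=0$ follow from independence and exchangeability, exactly as in \cref{cor:zero-info-consequences}, giving $1/q$ and $k/q$.

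The rate for $k=1$ is the main obstacle. We would apply a multivariate Berry--Esseen bound for convex sets (Bentkus) to the independent sums $A_m$ and $B_{d-m}$. The summands have finite third moments, since they are polynomials in Gaussians, and $C$ is nondegenerate, so each error is $O_q(m^{-1/2})$ and $O_q((d-m)^{-1/2})$. The event $\{\mathcal N_1=\widetilde{\mathcal N}_1\}$ is a finite union over $i$ of events $\{A\in A_i,\ \rho A+\sqrt{1-\rho^2}B\in A_i\}$, where $A_i$ is the minimum cell defined before \eqref{eq:bg-partition-stability}. Each such set is an intersection of half-spaces in $(A,B)\in\R^{2q}$, hence convex. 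Applying Berry--Esseen first to $A$ (conditionally on $B$, integrating the uniform bound over $B$) and then to $B$ replaces both sums by Gaussians with total error $O_q(m^{-1/2}+(d-m)^{-1/2})$, uniformly in $\rho$.

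Finally, we need the Gaussian probability itself, now evaluated at the exact $\rho=\sqrt{m/d}$ rather than a limit, so no continuity-in-$\rho$ error arises. After removing the shift, this probability equals $p_q(\sqrt{m/d})$, given by \eqref{eq:bg-partition-stability} and \eqref{eq:pq}.
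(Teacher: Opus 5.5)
Your proposal is correct and follows essentially the same route as the paper: you use independent retained and omitted coordinate sums, the multivariate CLT with covariance $6I_q+2\mathbf 1\mathbf 1^\top$, and removal of the common shift. You then apply the continuous mapping theorem for the top-$k$ limit and sequential Bentkus Berry--Esseen bounds on the convex halfspace-intersection events, summed over the $q$ common minimizers, for the $k=1$ rate at the exact correlation $\sqrt{m/d}$.
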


The proof (\cref{app:proofs}) applies Bentkus's multivariate Berry--Esseen
bound for convex sets \cite{bentkus2003dependence} to the independent
coordinate sums forming the projected and omitted scores; because $q$ is
fixed, summing over the $q$ disjoint choices of common minimizer gives the
stated $O_q(m^{-1/2}+(d-m)^{-1/2})$ transfer.

More generally, let $R_{r:q}$ be the rank of the second-coordinate value paired
with the $r$th ordered first-coordinate value. This paired value is the
\emph{concomitant} of the $r$th order statistic. Then
\[
\mathcal R_{q,k}(\rho)=\frac1k\sum_{r=1}^k\Pr(R_{r:q}\le k).
\]
The distribution of ranks in a finite sample is classical
\cite{david1977distribution}. We reduce neighborhoods after random projection
to these kernels at correlation $\sqrt{m/d}$, establish the finite-dimensional
transfer, and derive the top-$k$ limit.

Frieze and Jerrum derived the small-correlation expansion of the same Gaussian
common-argmax kernel \cite{frieze1997improved}. In common-minimum notation it
gives the following result; the proof, by differentiating \cref{eq:pq} at
$\rho=0$, appears in \cref{app:proofs}.

\begin{proposition}[Classical small-correlation expansion]
\label{prop:pq-small-rho}
For fixed $q\ge2$,
\[
p_q(\rho)=\frac1q+c_q\rho+O_q(\rho^2),
\qquad
c_q=q^2(q-1)\left[
\int_{-\infty}^{\infty}\phi(x)^2\overline\Phi(x)^{q-2}\,dx
\right]^2.
\]
In particular, $c_2=1/\pi$, recovering the small-correlation expansion of
Sheppard's formula. Hence
\[
\Pr(\mathcal N_1=\widetilde{\mathcal N}_1)
=\frac1q+c_q\sqrt{\frac md}
+O_q\!\left(\frac md+\frac1{\sqrt m}+\frac1{\sqrt{d-m}}\right).
\]
\end{proposition}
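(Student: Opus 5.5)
The plan is to expand the integrand of \cref{eq:pq} to first order in $\rho$ around the independent case $\rho=0$. Two classical facts do the work. The first is the Mehler (Hermite) expansion of the bivariate normal density,
\[
\phi_\rho(x,y)=\phi(x)\phi(y)\bigl(1+\rho xy+\rho^2 R_1(x,y;\rho)\bigr).
\]
The second is Plackett's identity $\partial_\rho\overline\Phi_\rho(x,y)=\phi_\rho(x,y)$, which gives
\[
\overline\Phi_\rho(x,y)=\overline\Phi(x)\overline\Phi(y)+\rho\,\phi(x)\phi(y)+\rho^2R_2(x,y;\rho).
\]
At $\rho=0$ the integral factorizes as $q\bigl(\int\phi\,\overline\Phi^{\,q-1}\bigr)^2$. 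Since $\int\phi\,\overline\Phi^{\,q-1}\,dx=1/q$, this equals $1/q$, matching the value $p_q(0)=1/q$ already stated in \cref{thm:knn-gaussian-limit}.

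Differentiating \cref{eq:pq} once at $\rho=0$ produces two terms.

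\emph{Density term.} Perturbing $\phi_\rho$ gives $q\bigl(\int x\phi(x)\overline\Phi(x)^{q-1}\,dx\bigr)^2$.

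\emph{Survival term.} Perturbing the power $\overline\Phi_\rho^{\,q-1}$ gives
\[
q(q-1)\iint\phi(x)\phi(y)\,\overline\Phi(x)^{q-2}\overline\Phi(y)^{q-2}\,\phi(x)\phi(y)\,dx\,dy=q(q-1)I_q^2,
\qquad I_q=\int\phi^2\,\overline\Phi^{\,q-2}\,dx.
\]

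The density term reduces to $I_q$ by integration by parts. Using $x\phi=-\phi'$ and $\overline\Phi'=-\phi$, the boundary terms vanish and
\[
\int x\phi\,\overline\Phi^{\,q-1}\,dx=-(q-1)I_q .
\]
Hence
\[
c_q=q(q-1)^2I_q^2+q(q-1)I_q^2=q^2(q-1)I_q^2,
\]
which is the displayed constant. For $q=2$ we have $I_2=\int\phi^2=1/(2\sqrt\pi)$, so $c_2=4\cdot\frac1{4\pi}=1/\pi$. This agrees with $\tfrac12+\pi^{-1}\arcsin\rho$ from Sheppard's formula.

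The step needing care is the remainder $O_q(\rho^2)$, i.e.\ justifying differentiation under the integral with a uniform second-derivative bound for $|\rho|\le1/2$, say. I would bound $\partial_\rho^2\bigl[\phi_\rho\overline\Phi_\rho^{\,q-1}\bigr]$ as follows. First, $\partial_\rho\phi_\rho=\partial_x\partial_y\phi_\rho$, and $\phi_\rho$ is controlled by a Gaussian envelope $C\exp(-c(x^2+y^2))$ times a polynomial, uniformly for $|\rho|\le1/2$. Second, $0\le\overline\Phi_\rho\le1$, and its $\rho$-derivatives are $\phi_\rho$ and $\partial_x\partial_y\Phi$-type terms bounded by the same envelopes. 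This yields an integrable dominating function, and Taylor's theorem with remainder gives the $O_q(\rho^2)$ term. Values $|\rho|>1/2$ are trivial, since $p_q\in[0,1]$.

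Finally, the corollary follows by substituting $\rho=\sqrt{m/d}$ into the $k=1$ statement of \cref{thm:knn-gaussian-limit}. The $O_q(\rho^2)$ remainder becomes $O_q(m/d)$, and it combines with the Berry--Esseen transfer error $O_q\bigl(m^{-1/2}+(d-m)^{-1/2}\bigr)$ to give the displayed expansion.
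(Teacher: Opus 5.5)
Your proposal is correct and follows the paper's proof. Like the paper, you differentiate \cref{eq:pq} at $\rho=0$ using $\partial_\rho\phi_\rho=xy\,\phi(x)\phi(y)$ and $\partial_\rho\overline\Phi_\rho=\phi(x)\phi(y)$, reduce the density term by $\int x\phi\,\overline\Phi^{\,q-1}\,dx=-(q-1)I_q$, sum the two contributions to $q^2(q-1)I_q^2$, and substitute $\rho=\sqrt{m/d}$ into the $k=1$ bound of \cref{thm:knn-gaussian-limit}; your dominated-convergence control of the $O_q(\rho^2)$ remainder (with the trivial bound for $\abs{\rho}>1/2$) and your explicit check $c_2=1/\pi$ only supply details that the paper compresses into ``a Taylor expansion proves the proposition.''
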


\subsection{The JL bound can hold while rankings collapse}
\label{sec:coexistence}

\Cref{sec:certification} showed that the JL bound alone forces no retained
geometry: an independent Gaussian replacement map can satisfy it. The same
separation also holds for a linear projection. Although the pairwise law
suggests that rankings degrade under thin projection, it does not show whether
one fixed map can satisfy the JL bound on the same sample while its rankings
collapse. We prove that these events coexist.

Let $A=\sqrt{d/m}\,R$, where $R$ has Haar-distributed orthonormal rows. For
distinct $x,y,z$, define
\[
s_A(x;y,z)=\operatorname{sgn}\!\left[
\bigl(\norm{x-y}^2-\norm{x-z}^2\bigr)
\bigl(\norm{A(x-y)}^2-\norm{A(x-z)}^2\bigr)
\right].
\]
For $n$ data points, the average Kendall correlation across queries is
\[
\overline\tau_n=\frac1n\sum_{i=1}^n
\binom{n-1}{2}^{-1}
\sum_{\substack{j<k\\j,k\ne i}}s_A(X_i;X_j,X_k),
\]
whose expectation involves the single-query constant
\[
\tau_{m,d}=\frac2\pi\E\arcsin\sqrt B,
\qquad B\sim\mathrm{Beta}\!\left(\frac m2,\frac{d-m}{2}\right):
\]
by \cref{thm:arcsine}, $\tau_{m,d}$ is the expected Kendall correlation for
one query under a rank-$m$ Haar projection.

\begin{theorem}[One map can satisfy the JL bound while rankings collapse]
\label{thm:jl-kendall}
Let $X_1,\ldots,X_n\overset{\text{i.i.d.}}{\sim}\mathcal N(0,I_d)$ independently
of $A$, with $n\ge3$, $0<m<d$, $0<\varepsilon<1$, and
$0<\delta,\eta<1$.
\begin{romannum}
\item \emph{Conditional concentration.} Conditionally on $A$,
$\E[\overline\tau_n\mid A]=\tau_{m,d}$ and, for $t>0$,
\[
\Pr\!\left(\abs{\overline\tau_n-\tau_{m,d}}>t\mid A\right)
\le2\exp\!\left(-\frac{\lfloor n/3\rfloor t^2}{2}\right).
\]
\item \emph{JL event.} A universal $C>0$ exists such that, if
$m\ge C\varepsilon^{-2}\log(n/\delta)$, then $A$ satisfies the
$\varepsilon$-JL bound on the sample with probability at least $1-\delta$.
\item \emph{Simultaneous conclusion.} With probability at least
$1-\delta-\eta$, the JL event in part (ii) and
\[
\overline\tau_n\le
\sqrt{\frac md}+
\sqrt{\frac{2\log(2/\eta)}{\lfloor n/3\rfloor}}
\]
hold simultaneously.
\end{romannum}
\end{theorem}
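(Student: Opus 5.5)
The plan treats the three parts separately and combines them with a union bound.

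For part (i), we condition on $A$ and use the fact that, conditionally, $\overline\tau_n$ is an order-three U-statistic in the i.i.d.\ sample $X_1,\dots,X_n$. The symmetrized kernel averages $s_A(X_{\pi(1)};X_{\pi(2)},X_{\pi(3)})$ over the three choices of query and takes values in $[-1,1]$. For the conditional mean, fix $A$. Because the data are isotropic and independent of $A$, rotating by any orthogonal $O$ with $O^\top\Pi O$ Haar shows that $\E[s_A(X_0;X_1,X_2)\mid A]$ depends on $A$ only through its rank. Averaging over Haar $R$ therefore changes nothing, and \cref{thm:arcsine} identifies the value as $2p_{m,d}-1=\tau_{m,d}$. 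The row space of $A$ is a fixed $m$-dimensional subspace, so the conditional law of the triple is exactly the law of \cref{thm:arcsine} for any fixed projector. Hoeffding's blocking inequality for bounded U-statistics of order three, with kernel range $2$, then gives $\exp(-\lfloor n/3\rfloor t^2/2)$ per side. This is the same argument as in \cref{cor:zero-info-consequences}.

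For part (ii), we condition on the sample and use the standard Haar-projection JL lemma. For a fixed vector $v$, $\norm{Rv}^2/\norm v^2\sim\operatorname{Beta}(m/2,(d-m)/2)$, and the standard concentration (e.g.\ Dasgupta--Gupta) gives
\[
\Pr\bigl(\abs{(d/m)\norm{Rv}^2/\norm v^2-1}>\varepsilon\bigr)\le2e^{-c m\varepsilon^2}.
\]
A union bound over the $\binom n2$ difference vectors $X_i-X_j$ (almost surely nonzero) then succeeds when $m\ge C\varepsilon^{-2}\log(n/\delta)$. Independence of $A$ from the sample lets us apply the fixed-vector bound conditionally.

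For part (iii), we first bound $\tau_{m,d}$. Using $\arcsin x\le(\pi/2)x$ on $[0,1]$ and Jensen,
\[
\tau_{m,d}\le\E\sqrt B\le\sqrt{\E B}=\sqrt{m/d}.
\]
This is the bound $p_{m,d}\le\tfrac12+\tfrac12\sqrt{m/d}$ of \cref{thm:arcsine}. Next, set $t=\sqrt{2\log(2/\eta)/\lfloor n/3\rfloor}$ in part (i), so that the conditional failure probability is at most $\eta$ for every $A$, and hence unconditionally as well. On the complement,
\[
\overline\tau_n\le\tau_{m,d}+t\le\sqrt{m/d}+t.
\]
A union bound with the JL failure probability $\delta$ from part (ii) gives joint probability at least $1-\delta-\eta$. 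No independence between the two events is needed.

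The main obstacle is the conditional-mean claim in part (i): confirming that the exact law of \cref{thm:arcsine} holds for a fixed projector rather than only on average over Haar $R$. The rotation-invariance argument above is the intended route. The other delicate point is checking that the Hoeffding bound applies to the non-symmetric kernel after symmetrization, which it does because the average over queries is exactly the symmetrized U-statistic.
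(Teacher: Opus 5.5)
Your proposal is correct and follows essentially the same route as the paper's proof. Like the paper, you condition on $A$, symmetrize the kernel, and apply Hoeffding's order-three blocking bound with the mean supplied by \cref{thm:arcsine}; you then condition on the data and take a union bound of the fixed-vector scaled-Beta concentration over the $\binom n2$ differences; and you finish with $\tau_{m,d}\le\sqrt{m/d}$ and a union bound of the two failure events.
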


The proof (\cref{app:proofs}) symmetrizes the Kendall kernel and applies the
blocking inequality of Hoeffding \cite{hoeffding1963probability} for bounded
order-three U-statistics, partitioning the sample into disjoint triples to
restore independence and give conditional concentration; a union bound gives
the JL event. If
$n\to\infty$ and $\log n\ll m\ll d$, the same sampled map can satisfy the JL
bound for fixed $\varepsilon$ while
$\overline\tau_n\to0$ in probability.

\begin{remark}[Haar projection versus replacement map]
\label[remark]{rem:haar-vs-rep}
The normalized replacement distance has law $\chi_m^2/m$, exactly the
distortion law for one vector under an i.i.d. Gaussian JL matrix. For the Haar
map in \cref{thm:jl-kendall},
\[
\frac{\norm{Av}^2}{\norm{v}^2}
\overset d=\frac dmB,
\qquad
B\sim\mathrm{Beta}\!\left(\frac m2,\frac{d-m}{2}\right),
\qquad
\Var\!\left(\frac{\norm{Av}^2}{\norm{v}^2}\right)
=\frac{2(d-m)}{m(d+2)}.
\]
This variance agrees with $2/m$ only when $m/d\to0$. Even there, pairwise
distances are dependent in different ways: every projected pair uses the same
projection, whereas every replacement pair belongs to the same replacement
cloud. For one distance, the two constructions therefore occupy the opposite
endpoints recorded in \cref{sec:hgr-identities}: the maximal dependence
$\sqrt{m/d}$ attainable from a rank-$m$ linear sketch, and no dependence at
all.
\end{remark}

Our results show that ranking agreement follows the $\sqrt{m/d}$ correlation
scale, whereas the JL bound does not constrain it.
We next study clustering and estimation, which depend on how well a sketch
preserves the ability to distinguish nearby distributions.

\section{Covariance-Shape Information}
\label{sec:rate3}

Distance rankings concern single-sample observables. Clustering and estimation
instead ask how well projected data distinguish nearby distributions. We use
Fisher information to measure that local distinction and to remove overall
scale before measuring covariance shape.

For a regular parametric family with density $p_\theta$, the Fisher
information matrix is the second-moment matrix of the score,
\begin{equation}
\mathcal I(\theta)
=\E_\theta\!\left[
\nabla_\theta\log p_\theta(X)
\bigl(\nabla_\theta\log p_\theta(X)\bigr)^\top
\right].
\label{eq:bg-fisher}
\end{equation}
Under the usual regularity conditions it also equals
$-\E_\theta[\nabla_\theta^2\log p_\theta(X)]$. Fisher information gives the
local quadratic approximation
\[
D_{\mathrm{KL}}(P_\theta\,\|\,P_{\theta+\delta})
=\frac12\delta^\top\mathcal I(\theta)\delta+o(\norm{\delta}^2)
\]
\cite{vandervaart1998asymptotic}. It therefore measures local statistical
distinguishability; through the Cram\'er--Rao bound, its inverse also controls
the variance of regular estimators.

When a parameter of interest $\vartheta$ is accompanied by a nuisance
parameter $\zeta$, the efficient information removes score variation
explained by the nuisance direction. If the nuisance block is nonsingular,
the efficient information is the Schur complement
\begin{equation}
\mathcal I_{\mathrm{eff}}
=\mathcal I_{\vartheta\vartheta}
-\mathcal I_{\vartheta\zeta}
 \mathcal I_{\zeta\zeta}^{-1}
 \mathcal I_{\zeta\vartheta}.
\label{eq:bg-efficient-information}
\end{equation}
Equivalently, it is the information in the residual score after projecting
out the span of the nuisance score. This is the projection in score space
anticipated in \cref{sec:targets}: the target space is spanned by scores rather than by
features of a distance, and orthogonal projection again determines what the
observation retains.

\subsection{Fisher information under projection}

Let $R$ have Haar-distributed orthonormal rows and write $\Pi=R^\top R$. For
$X\sim\mathcal N(\mu,\sigma^2I_d)$ observed through this known projection,
the Fisher information for the mean is exactly
\begin{equation}
\mathcal I_\mu(RX)=\frac{1}{\sigma^2}\,\Pi.
\label{eq:fisher}
\end{equation}
It is the identity on the observed $m$-space and zero on the kernel. For a
scalar submodel $\mu=\theta v$, the retained fraction
$v^\top\Pi v/\norm{v}^2$ is
$\mathrm{Beta}(\tfrac m2,\tfrac{d-m}2)$-distributed with mean $m/d$, and
the Cram\'er--Rao variance inflation in direction $v$ is its reciprocal.

\subsection{Mean, scale, and shape contraction}

A local perturbation of a Gaussian component decomposes into a change of
mean, a scalar change of log-scale, and a traceless change of covariance
shape. To define shape independently of scale, consider
$\Sigma_{\zeta,\epsilon}=e^\zeta(I_d+\epsilon H)$ with $\tr H=0$. Here
$\zeta$ controls overall scale and $\epsilon$ the magnitude of the traceless
shape direction $H$. Treating $\zeta$ as a nuisance parameter projects the
shape score $RHR^\top$ away from the scale score, which is proportional to
$I_m$. The efficient shape score is therefore the traceless part of $RHR^\top$,
\[
RHR^\top-\frac{\tr(RHR^\top)}{m}I_m.
\]
Haar averaging gives a separate contraction factor for each perturbation
type.

\begin{table}[htbp]
\centering
\caption{Haar-averaged fractions of local Gaussian Fisher information,
equivalently of local Kullback--Leibler divergence. The shape fraction is
computed after eliminating unknown overall scale; unlike the log-scale
fraction, it varies across row spaces, and the table reports its Haar mean.}
\label{tab:local-information}
\begin{adjustbox}{max width=\textwidth}
\begin{tabular}{@{}lll@{}}
\toprule
Perturbation & Retained fraction & Dependence on row space \\
\midrule
Mean direction & $m/d$ & Beta-distributed around its mean \\
Scalar log-scale & $m/d$ & Deterministic \\
Diffuse traceless shape
& $\dfrac{(m-1)(m+2)}{(d-1)(d+2)}\sim(m/d)^2$
& Random; displayed fraction is the Haar mean \\
\bottomrule
\end{tabular}
\end{adjustbox}
\end{table}

\begin{theorem}[Local contraction taxonomy]
\label{thm:taxonomy}
A Haar rank-$m$ projection retains the three expected fractions summarized
in \cref{tab:local-information}; the shape row uses efficient Fisher
information after eliminating the nuisance scale $\zeta$. For any symmetric
traceless $H$, the exact moment identity is
\[
\E_R\norm{RHR^\top-\frac{\tr(RHR^\top)}{m}I_m}_F^2
=\frac{(m-1)(m+2)}{(d-1)(d+2)}\,\norm{H}_F^2.
\]
\end{theorem}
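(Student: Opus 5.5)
The plan has two parts: compute the Fisher information for each perturbation type in the projected model, then average over Haar row spaces. The projected observation is $RX\sim\mathcal N(R\mu,\sigma^2 RR^\top)=\mathcal N(R\mu,\sigma^2 I_m)$, since $RR^\top=I_m$.

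\textbf{Mean row.} Equation \eqref{eq:fisher} already gives the answer. For $\mu=\theta v$ the information is $v^\top\Pi v/\sigma^2$, and its Haar mean is $(m/d)\norm v^2/\sigma^2$ because $\E\Pi=(m/d)I_d$. The Beta law in the table follows from $\Pi v/\norm v$ being the projection of a uniform unit vector.

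\textbf{Scale and shape rows.} Both use the Gaussian covariance Fisher metric. For $\mathcal N(0,C(\theta))$ the information is $\tfrac12\tr(C^{-1}\dot C\,C^{-1}\dot C)$. In the original model at $\zeta=0,\epsilon=0$, the scale derivative is $I_d$ and the shape derivative is $H$. In the projected model the covariance is $e^\zeta R(I+\epsilon H)R^\top$, so the derivatives are $I_m$ and $RHR^\top=:K$. The information matrix in $(\zeta,\epsilon)$ is therefore $\tfrac12\begin{pmatrix} m & \tr K\\ \tr K & \norm K_F^2\end{pmatrix}$. Scale retains $m/d$ deterministically, as the table states. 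The Schur complement \eqref{eq:bg-efficient-information} gives
\[
\mathcal I_{\mathrm{eff}}=\tfrac12\bigl(\norm K_F^2-(\tr K)^2/m\bigr)=\tfrac12\norm{K-\tfrac{\tr K}{m}I_m}_F^2 .
\]
In the full model $\tr H=0$, so the full efficient information is $\tfrac12\norm H_F^2$. The theorem therefore reduces to the displayed moment identity.

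\textbf{Moment identity.} We need $\E\norm K_F^2=\E\tr(\Pi H\Pi H)$ and $\E(\tr K)^2=\E(\tr\Pi H)^2$. Haar invariance says that $\E[\Pi\otimes\Pi]$ is an $O(d)$-invariant 4-tensor. Hence
\[
\E[\Pi_{ij}\Pi_{kl}]=\alpha\,\delta_{ij}\delta_{kl}+\beta(\delta_{ik}\delta_{jl}+\delta_{il}\delta_{jk}),
\]
with only these two constants because $\Pi$ is symmetric. Two contractions fix $\alpha,\beta$:
\begin{itemize}
\item the full trace: $(\tr\Pi)^2=m^2$ gives $d^2\alpha+2d\beta=m^2$;
\item the trace of $\Pi^2$: $\tr\Pi^2=m$ gives $d\alpha+(d^2+d)\beta=m$.
\end{itemize}
For traceless symmetric $H$ the $\alpha$ term drops, so
\[
\E(\tr\Pi H)^2=2\beta\norm H_F^2,\qquad \E\tr(\Pi H\Pi H)=(\alpha+\beta)\norm H_F^2 .
\]
The target coefficient is then $\alpha+\beta-2\beta/m$. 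Solving the $2\times 2$ system gives
\[
\beta=\frac{m(d-m)}{d(d-1)(d+2)},\qquad \alpha=\frac{m(md+m-2)}{d(d-1)(d+2)} .
\]
Then $\alpha+\beta=\frac{m(md+d-2)}{d(d-1)(d+2)}$, and
\[
\alpha+\beta-\frac{2\beta}{m}=\frac{m^2d+md-2m-2d+2m}{d(d-1)(d+2)}=\frac{(m-1)(m+2)}{(d-1)(d+2)} .
\]
This is the required identity.

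\textbf{Main obstacle.} The delicate step is justifying the invariant-tensor ansatz, in particular that no antisymmetric pairing term $\delta_{ik}\delta_{jl}-\delta_{il}\delta_{jk}$ survives. It is excluded by the symmetry $\Pi_{ij}=\Pi_{ji}$. If one prefers to avoid invariant theory, the fallback is a direct check with $\Pi$ built from Gaussian rows after orthonormalization. The remaining steps are routine algebra: the Gaussian covariance Fisher formula, and confirming that the diffuse-shape fraction in the table is meant as the ratio of Haar-mean efficient information to the full value $\tfrac12\norm H_F^2$.
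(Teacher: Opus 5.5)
Your proposal is correct and follows essentially the same route as the paper. You use the same Fisher/Schur-complement reduction of the shape row to the traceless moment, and your invariant-tensor evaluation of $\E[\Pi_{ij}\Pi_{kl}]$ reproduces exactly the paper's two second-order Haar moments, $\E\tr(\Pi H\Pi H)=\frac{m((m+1)d-2)}{d(d-1)(d+2)}\norm{H}_F^2$ and $\E(\tr\Pi H)^2=\frac{2m(d-m)}{d(d-1)(d+2)}\norm{H}_F^2$; the paper phrases the mean and scale rows through exact KL divergences rather than Fisher information, which makes no difference.

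One wording slip: the $\alpha$ term vanishes only in $\E(\tr\Pi H)^2$. In $\E\tr(\Pi H\Pi H)$ it contributes $\alpha\norm{H}_F^2$, and it is the $\delta_{il}\delta_{jk}$ part of the $\beta$ term that drops. Your displayed formulas already handle this correctly.
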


At the isotropic model, the unprojected efficient Fisher information for
$\epsilon$ is $\tfrac12\norm{H}_F^2$, while the projected information is one
half of the squared norm inside the expectation. The Haar moment identity
therefore gives the contraction factor for shape directly; the proof, based on
second-order Haar integration, appears in \cref{app:proofs}. A diffuse shape
perturbation distributes its squared Frobenius norm across many
eigendirections rather than concentrating it in a few spikes. Such a
perturbation loses information twice. Projection removes coordinates and also
reduces contrast within the retained coordinates. \Cref{fig:infogeo} compares
the exact constant with simulation.

For two well-separated Gaussian components with common isotropic covariance,
the leading logarithm of the Bayes error is proportional to the squared
distance between their means. A Haar projection multiplies that exponent by the
Beta-distributed factor in \cref{eq:fisher}, which has mean $m/d$ and
concentrates as both dimensions grow. Dasgupta
\cite{dasgupta1999learning} obtained logarithmic projection dimensions for
learning separated mixtures. Our local information rate is consistent with
that result but is not a complete clustering guarantee.

\subsection{Compression toward sphericity and the spike exception}

The local calculation describes infinitesimal changes. To connect it with a
global covariance, let $A=\sqrt{d/m}\,R$, define
$C_A=A\Sigma A^\top$, put $\bar\lambda=\tr(\Sigma)/d$, and set
$\Delta=\Sigma-\bar\lambda I$. Then
\[
C_A-\frac{\tr\Sigma}{m}I_m=\frac dmR\Delta R^\top.
\]
Standard bounds for random compression identify two dimensionless quantities
that control convergence to a scalar matrix in operator norm
\cite{dasgupta1999learning,dasgupta2000experiments}. The quantity
$\sqrt m\,\norm{\Delta}_F/\tr\Sigma$ controls the distributed anisotropic bulk:
the square-root factor reflects accumulation across many weak directions.
The quantity $m\,\norm{\Delta}_{\mathrm{op}}/\tr\Sigma$ controls the largest
individual direction, for which no such averaging occurs. Thus the compressed
covariance approaches a scalar matrix in regimes where both quantities vanish.
The resulting
rounding toward a scalar covariance is Dasgupta's sphericalization phenomenon.
Dasgupta
\cite{dasgupta1999learning,dasgupta2000experiments} established the phenomenon
for eccentric Gaussians, and Dasgupta, Hsu, and Verma
\cite{dasgupta2006concentration} extended it to more general distributions.

A sufficiently large spectral spike violates the operator-norm condition and
can remain visible after compression. Its detection threshold depends on the
population spectrum, sample size, and statistical task, so the rate derived
for diffuse shape should not be applied to finite-rank spikes without an
additional model.

The quadratic contraction completes the taxonomy: mean and scale contract at
$m/d$, diffuse traceless shape at order $(m/d)^2$, and a sufficiently large
spike escapes the diffuse rate. A known noiseless map can be row
orthonormalized without changing its information, whereas estimators that use the
output metric directly or invert noisy measurements remain sensitive to its
singular spectrum.

\section{General Maps: Rank, Spectrum, and Precision}
\label{sec:generalmaps}

Orthogonal projections lose information only by discarding directions: they
are isometries on their retained row spaces. A general linear map, by contrast, can also
distort the geometry it keeps. The relevant spectral summary depends on the
downstream task. \Cref{tab:map-taxonomy} previews the three cases.

\begin{table}[htbp]
\centering
\caption{Three downstream uses of a fixed known rank-$r$ map $T$, with
$M=T^\top T$ and nonzero singular values $s_1,\ldots,s_r$. The first two rows
take $Z\sim\mathcal N(0,I_d)$; the third takes
$y=Tx+\varepsilon\eta$ for independent $x\sim\mathcal N(0,I_d)$ and
$\eta\sim\mathcal N(0,I_m)$.}
\label{tab:map-taxonomy}
\begin{adjustbox}{max width=\textwidth}
\begin{tabular}{@{}lll@{}}
\toprule
Downstream use & Governing summary & Exact law \\
\midrule
Optimal nonlinear inference
& Rank $r$
& $\HGR(\norm{Z}^2;TZ)=\sqrt{r/d}$ \\
Unwhitened squared-norm estimate
& Trace ratio $r_2(M)$
& $\Corr(\norm{Z}^2,\norm{TZ}^2)=\sqrt{r_2(M)/d}$ \\
Inversion with additive noise
& Spectrum near zero
& $\MMSE(x\mid Tx+\varepsilon\eta,T)=
d-\sum_{i=1}^r s_i^2/(s_i^2+\varepsilon^2)$ \\
\bottomrule
\end{tabular}
\end{adjustbox}
\end{table}

\subsection{The rank--spectrum--precision trichotomy}

\begin{theorem}[Trichotomy]
\label{thm:trichotomy}
Let $Z\sim\mathcal N(0,I_d)$, let $T\in\R^{m\times d}$ be a fixed known map
of rank $r$ with nonzero singular values $s_i$, and write
$D=\norm{Z}^2$, $Q=\norm{TZ}^2$, and $M=T^\top T$.
\begin{romannum}
\item \emph{Best-possible inference is governed by rank alone:}
$\HGR(D;TZ)=\sqrt{r/d}$, so for every $f\in L^2$,
$\Var(\E[f(D)\mid TZ])\le(r/d)\Var(f(D))$.
\item \emph{The unwhitened squared-norm estimate is governed by the spectrum:}
\[
\Corr(D,Q)=\frac{\tr M}{\sqrt{d\,\tr(M^2)}}
=\sqrt{\frac{r_2(M)}{d}},
\qquad
r_2(M)=\frac{(\tr M)^2}{\tr(M^2)}.
\]
If $\kappa=s_{\max}/s_{\min}$ over the nonzero singular values, then
\[
r\frac{4\kappa^2}{(\kappa^2+1)^2}\le r_2(M)\le r.
\]
The lower bound is asymptotic to $4r/\kappa^2$ as $\kappa\to\infty$.
\item \emph{Recovery under additive noise is governed by the spectrum near
zero:} if $x\sim\mathcal N(0,I_d)$ and
$\eta\sim\mathcal N(0,I_m)$ are independent and
$y=Tx+\varepsilon\eta$, then
\[
\tr\Var(\E[x\mid y,T])=\sum_{i=1}^{r}\frac{s_i^2}{s_i^2+\varepsilon^2},
\qquad
\MMSE(x\mid y,T)=d-\sum_{i=1}^{r}\frac{s_i^2}{s_i^2+\varepsilon^2}.
\]
\end{romannum}
\end{theorem}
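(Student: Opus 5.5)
The three parts are independent, and I will prove them separately, each by reduction to results already in the paper or to a direct Gaussian computation.

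\emph{Part (i).} I will row-orthonormalize as in \cref{sec:channel}. Write the thin SVD $T=U_rS_rV_r^\top$ and put $R=V_r^\top$. Then $TZ$ and $RZ$ are bijective functions of each other, so $\sigma(TZ)=\sigma(RZ)$ and $\HGR(D;TZ)=\HGR(D;RZ)$. Rotate coordinates by an orthogonal matrix whose first $r$ rows are $R$; this preserves the law of $Z$ and the value of $D$. Then $RZ$ is the first $r$ coordinates of a standard Gaussian vector. The argument of \cref{thm:hgr} now applies verbatim: the direction of $RZ$ is independent of $U=\norm{RZ}^2$ and of $D$, so $D\perp RZ\mid U$. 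Dembo--Kagan--Shepp gives $\HGR(U;D)=\sqrt{r/d}$, and this transfers to the full observation. The variance bound follows from $\norm{\mathcal A}_{\mathrm{op}}^2=\HGR^2$ as in \cref{eq:bg-hgr}.

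\emph{Part (ii).} I will use Gaussian quadratic-form moments: for symmetric $A,B$, $\Cov(Z^\top AZ,Z^\top BZ)=2\tr(AB)$. With $A=I$ and $B=M$ this gives $\Cov(D,Q)=2\tr M$, $\Var D=2d$, and $\Var Q=2\tr(M^2)$. The correlation formula follows at once, and $r_2(M)\le r$ is Cauchy--Schwarz on the $r$ nonzero eigenvalues $\mu_i=s_i^2$ of $M$.

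For the lower bound I will use the Pólya--Szegő (Kantorovich-type) inequality on the vectors $(\mu_i)$ and $(1)$, with the $\mu_i$ lying in $[\mu_{\min},\mu_{\max}]$:
\[
\Bigl(\sum\mu_i^2\Bigr)\Bigl(\sum 1\Bigr)\le\frac{(\mu_{\max}+\mu_{\min})^2}{4\mu_{\max}\mu_{\min}}\Bigl(\sum\mu_i\Bigr)^2 .
\]
This yields $r_2(M)\ge r\cdot4\mu_{\max}\mu_{\min}/(\mu_{\max}+\mu_{\min})^2=r\cdot4\kappa^2/(\kappa^2+1)^2$. I expect this to be the main step needing care, since the constant must come out exactly. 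If needed I will prove the inequality directly: $(\mu_i-\mu_{\min})(\mu_{\max}-\mu_i)\ge0$ gives $\mu_i^2\le(\mu_{\max}+\mu_{\min})\mu_i-\mu_{\max}\mu_{\min}$. Summing and applying AM--GM to $(\mu_{\max}+\mu_{\min})\sum\mu_i-r\mu_{\max}\mu_{\min}$ against $r\sum\mu_i^2$ produces the bound. The asymptotic $4\kappa^2/(\kappa^2+1)^2\sim4/\kappa^2$ is immediate.

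\emph{Part (iii).} Here $(x,y)$ is jointly Gaussian, so
\[
\E[x\mid y]=T^\top(TT^\top+\varepsilon^2I)^{-1}y,\qquad \Cov(x\mid y)=I-T^\top(TT^\top+\varepsilon^2I)^{-1}T .
\]
Writing the full SVD of $T$ diagonalizes both expressions: $\tr\Var(\E[x\mid y])=\tr\bigl(T^\top(TT^\top+\varepsilon^2I)^{-1}T\bigr)=\sum_{i\le r}s_i^2/(s_i^2+\varepsilon^2)$. The MMSE equals $\tr\Cov(x\mid y)$, and by the law of total variance this is $d$ minus that sum. Since $T$ is fixed and known, conditioning on $T$ is trivial.
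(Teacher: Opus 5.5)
Your proposal is correct and follows the paper's proof essentially step for step. For (i) you reduce by SVD/row orthonormalization to \cref{thm:hgr}, using sufficiency of the retained squared norm and Dembo--Kagan--Shepp; for (ii) you use the Gaussian quadratic-form identities $\Cov(D,Q)=2\tr M$, $\Var(Q)=2\tr(M^2)$ plus the Kantorovich inequality on the eigenvalues $s_i^2$; and for (iii) you use Gaussian conditioning diagonalized by the SVD. Your self-contained derivation of the Kantorovich bound from $(\mu_i-\mu_{\min})(\mu_{\max}-\mu_i)\ge0$ and AM--GM is valid and simply makes explicit what the paper cites.
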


The proof (\cref{app:proofs}) treats the three parts separately.
Part (i) follows by orthonormalizing the rows of $T$ as in \cref{sec:channel}.
The singular value decomposition shows that observing $TZ$ is equivalent to observing
$r$ coordinates of a rotated standard Gaussian. With
$W=(TT^\top)^{\dagger/2}T$, one has
$\Corr(D,\norm{WZ}^2)=\sqrt{r/d}$. If $T$ is invertible, then
$D=(TZ)^\top(TT^\top)^{-1}(TZ)$ exactly.

Part (ii) analyzes $Q=\norm{TZ}^2$ without row-orthonormalizing $T$. We bound
its correlation with $D$ using
the Kantorovich inequality: for positive $x_i\in[x_{\min},x_{\max}]$,
\[
\frac{(\sum_{i=1}^r x_i)^2}{r\sum_{i=1}^r x_i^2}
\ge
\frac{4x_{\min}x_{\max}}{(x_{\min}+x_{\max})^2}.
\]
Taking $x_i=s_i^2$ and
$x_{\max}/x_{\min}=\kappa^2$ yields the displayed trace-ratio bound.

In part (iii), we ask whether inverting $T$ remains stable under additive noise.
The factor $s_i^2/(s_i^2+\varepsilon^2)$ is the variance recovered in singular
direction $i$ by the Gaussian posterior mean, or equivalently the Wiener-filter
gain. When $s_i\ll\varepsilon$, the recovered fraction in direction $i$ is
negligible even if the map is invertible. Invertibility alone does not imply that the
output Euclidean metric preserves distances or that noisy inversion is
stable.

\subsection{Gaussian-map examples}

Gaussian maps isolate reversible spectral distortion from rank loss. A square
i.i.d. Gaussian map is full rank, yet its unwhitened centered-distance
correlation tends to $1/\sqrt2$; row orthonormalization restores the Haar value.
\Cref{fig:harmonic} shows the one-stage comparison.

\begin{figure}[H]
\centering
\begin{minipage}[t]{.485\linewidth}
\vspace{0pt}
\centering
\includegraphics[width=\linewidth]{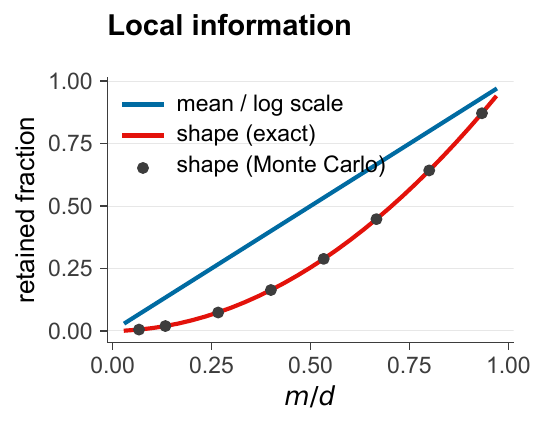}
\caption{Local information contraction. Mean and log-scale directions retain
$m/d$; traceless shape retains
$(m-1)(m+2)/((d-1)(d+2))$. Markers use $d=60$ and $1{,}500$ Haar frames per
$m$.}
\label{fig:infogeo}
\end{minipage}\hfill
\begin{minipage}[t]{.485\linewidth}
\vspace{0pt}
\centering
\includegraphics[width=\linewidth]{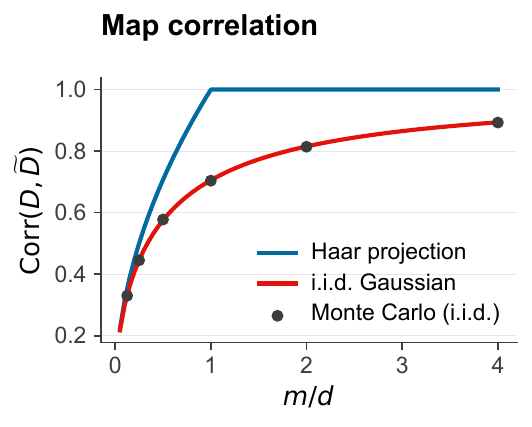}
\caption{Correlation between original and mapped centered squared distances
after one Haar or i.i.d.\ Gaussian stage. Markers use the chi-square
representation at $d=200$, with $150{,}000$ samples per $m$.}
\label{fig:harmonic}
\end{minipage}
\end{figure}

A single map exposes the distinction among rank loss, reversible spectral
distortion, and instability under noise. We next ask how multiple independent
sketches change the available rank.

\section{Ensembles}
\label{sec:ensembles}

We now consider multiple independent sketches, which reduce the variability
of a single sketch. By stacking them, we obtain their information ceiling; by
averaging them, we obtain a simple decoder whose finite-size gap can be
computed exactly.

\begin{theorem}[Ensemble ceiling and averaging]
\label{thm:ensemble}
Let $A_1,\dots,A_k\in\R^{m\times d}$ be sketches applied to the same data.
\begin{romannum}
\item \emph{Converse.} Stacking gives one linear map of rank at most
$p=\min\{km,d\}$. For any joint decoder, the recoverable centered-distance
variance is at most $\alpha_p(\Sigma)$. In the isotropic case,
$\HGR\le\sqrt{p/d}$, with equality for independent Gaussian blocks and the
optimal joint decoder.
\item \emph{Explicit averaging decoder.} If the $A_i$ are independent
Gaussian maps and we average the $k$ rescaled squared distances they produce,
then the correlation, jointly over the maps and data, is
\[
\sqrt{\frac{km}{km+d+2}}.
\]
The ratio of this correlation to the HGR ceiling is
\[
\gamma_{k,m,d}
=\sqrt{\frac{km\,d}{(km+d+2)\min\{km,d\}}}
=
\begin{cases}
\sqrt{d/(km+d+2)}, & km\le d,\\
\sqrt{km/(km+d+2)}, & km\ge d.
\end{cases}
\]
\end{romannum}
\end{theorem}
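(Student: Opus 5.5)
For part (i), I would reduce everything to the single-map results already proved. I stack the sketches into one matrix $A=[A_1^\top,\dots,A_k^\top]^\top\in\R^{km\times d}$. Observing $(A_iX,A_iX')_{i\le k}$ is the same as observing $(AX,AX')$, and $\rank A\le\min\{km,d\}=p$. A map of rank at most $p$ can be padded to rank exactly $p$ without losing information, so \cref{thm:alpham} with $m$ replaced by $p$ applies. It bounds $\Var(\E[D\mid\obs])/\Var(D)$ by $\alpha_p(\Sigma)$ for every joint decoder.

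In the isotropic case I would invoke \cref{thm:hgr}, or equivalently \cref{thm:trichotomy}(i) after row orthonormalization, for the stacked map. This gives $\HGR=\sqrt{\rank A/d}\le\sqrt{p/d}$. For equality, note that independent Gaussian blocks stack into an i.i.d. Gaussian $km\times d$ matrix, which almost surely has rank $p$. Conditioning on the maps as in \cref{sec:scale}, the same theorem gives $\HGR=\sqrt{p/d}$ exactly, and the optimal joint decoder is the conditional expectation.

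For part (ii), I standardize to $Z=(X-X')/(\sqrt2\sigma)\sim\mathcal N(0,I_d)$; the factor $2\sigma^2$ does not affect correlations. I take $A_i$ with i.i.d. $\mathcal N(0,1)$ entries and set $Q_i=\norm{A_iZ}^2/m$ and $\bar Q=k^{-1}\sum_iQ_i$, with $D=\norm Z^2$. The key observation is that, given $Z$, the vector $A_iZ\sim\mathcal N(0,\norm Z^2I_m)$. Hence $Q_i\mid Z\overset d=D\,\chi^2_m/m$, and the $Q_i$ are conditionally independent across $i$. This gives $\E[\bar Q\mid Z]=D$, so $\Cov(D,\bar Q)=\Var(D)=2d$. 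It also gives $\Var(\bar Q\mid Z)=2D^2/(km)$.

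The law of total variance together with $\E D^2=d(d+2)$ yields
\[
\Var(\bar Q)=2d+\frac{2d(d+2)}{km}.
\]
Therefore
\[
\Corr(D,\bar Q)=\frac{2d}{\sqrt{2d\,\bigl(2d+2d(d+2)/(km)\bigr)}}=\sqrt{\frac{km}{km+d+2}}.
\]
Dividing by the ceiling $\sqrt{p/d}$ from part (i) gives $\gamma_{k,m,d}$. The two cases come from substituting $p=km$ or $p=d$.

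The computations are routine. The step needing the most care is the interpretation in (ii): the correlation there is joint over maps and data, while the ceiling in (i) is conditional on the maps. The comparison is legitimate because the conditional ceiling $\sqrt{p/d}$ holds almost surely. It therefore also bounds the correlation of any decoder jointly over maps and data: decompose the covariance by conditioning on the maps, then apply Cauchy--Schwarz. I would state this reduction explicitly before forming the ratio $\gamma_{k,m,d}$.
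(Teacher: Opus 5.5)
Your proposal is correct and follows the paper's own proof. In part (i) you stack the sketches into one map of rank at most $\min\{km,d\}$ and invoke \cref{thm:alpham} and \cref{thm:trichotomy}(i). In part (ii) your conditional $\norm{Z}^2\chi^2_{km}/(km)$ law-of-total-variance computation is exactly the single-stage identity of \cref{thm:harmonic}, which the paper applies to the concatenated $km\times d$ Gaussian map. You also argue that the almost-sure conditional ceiling bounds the correlation jointly over maps and data; this holds because the data are independent of $(A_1,\dots,A_k)$, and it is extra care the paper leaves implicit when it forms $\gamma_{k,m,d}$.
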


For the averaging decoder, the proof (\cref{app:proofs}) conditions on the
standardized difference to obtain an estimate of the form $\norm{Z}^2C$,
where
$C\sim\chi^2_{km}/(km)$ is independent of $Z$. The resulting correlation is
the exact effect of an independent multiplicative chi-square fluctuation,
including the finite $d+2$ correction: averaging $k$ sketches leaves the
distance estimate intact and shrinks this fluctuation by a factor $\sqrt k$
in standard deviation.

Independent sketches add rank, but they never exceed the rank-$km$ ceiling.
The converse in part (i) is the stacked-rank special case of \cref{thm:alpham};
the averaging identity is the single-stage correlation of the concatenated
$km\times d$ Gaussian map (\cref{app:proofs}).
Simple averaging has the correct scaling and approaches this ceiling when
$km\ll d$ or $km\gg d$; its largest constant-factor gap occurs near
$km\approx d$, where $\gamma_{k,m,d}\approx1/\sqrt2$. Before $km$ reaches
$d$, added sketches expand the available rank; afterward, they only reduce
distortion in the averaging decoder. Correcting the joint metric when
$km\le d$, or inverting the stacked map when $km\ge d$, removes this
reversible spectral distortion and reaches the isotropic ceiling.

These identities are exact population statements. We verify representative
finite-size claims in \cref{sec:verification}.

\section{Numerical Validation}
\label{sec:verification}

The preceding results include exact population identities, asymptotic limits,
and probability statements at finite $d$ and $m$. We use numerical validation for
two purposes: to check independent implementations of the closed-form
expressions and to measure how closely experiments at finite $d$ and $m$ follow
their limiting predictions. We use exact Gaussian moments for polynomial
identities, deterministic quadrature for integral formulas, and Monte Carlo
sampling for stochastic claims at finite $d$ and $m$. We report one standard
error for Monte Carlo estimates and Wilson $95\%$ intervals for frequencies of
joint events. These calculations support the formulas and their interpretation
at finite dimensions; they do not replace the proofs.

The complete reproducibility materials are available at
\begin{center}
\url{https://github.com/piyush314/random-projection-geometry}.
\end{center}
The reproducibility guide in \cref{sec:supp-reproducibility} records the tested
version and explains how to set up, execute, and generate the artifacts.

The independent Gaussian replacement experiment tests whether the JL bound
distinguishes a genuine shared projection from a cloud that is
sampled independently of the data. It compares a normalized Haar projection,
an i.i.d. Gaussian map, and the independent Gaussian replacement map at the same
$(n,d,m,\varepsilon)$ values. For $n=100$, $d=8192$, $m=1024$, and
$\varepsilon=0.2$, all $30$ Haar trials and all $30$ Gaussian-map trials
satisfied the JL bound, as did $28$ of $30$ replacement trials. At
$m/d=1/4$, the Pearson correlation of centered squared distances in
\cref{fig:zero-info-jl} was $0.55$ for the Haar projection and $0.01$ for the
replacement map; both displayed maps satisfied the
same JL bound. \Cref{fig:zero-info-jl} reports the manuscript
view, while the reproducibility repository records the full ranking,
nearest-neighbor, calibration, and disjoint-pair diagnostics.

Every calculation based on exact moments or quadrature in \cref{tab:verify} agrees with
its reference value to the displayed precision, and every Monte Carlo estimate
agrees within two standard errors. Two rows require separate interpretation.
The quarter-circle entry is the limiting mean of a posterior variance fraction.
The entry for $r_2$ is a deterministic equivalent formed from a ratio of
moments, not an identity for a finite sample.

\begin{table}[htbp]
\centering
\caption{Numerical validation summary. Monte Carlo (MC) estimates are estimate $\pm$ one standard error; exact-moment and quadrature checks have no sampling error. Both isotropic mean-square error rows are normalized by $\Var(D)$.}
\label{tab:verify}
\begin{adjustbox}{max width=\textwidth}
\footnotesize
\setlength{\tabcolsep}{4pt}
\begin{tabular}{@{}llll@{}}
\toprule
Claim & Reference value & Computed value & Method \\
\midrule
Isotropic $\Corr(D,\widetilde D)$ & 0.2236 & 0.2228\,$\pm$\,0.0015 & MC, independent chi-square split, $N=400{,}000$ \\
Normalized minimum mean-square error & 0.9500 & 0.9519\,$\pm$\,0.0022 & MC, conditional mean, $N=400{,}000$ \\
Centered estimation MSE / $\operatorname{Var}(D)$ & 19.000 & 19.100\,$\pm$\,0.053 & MC, rescaled projection, $N=400{,}000$ \\
Quadratic canonical correlation at $\lambda{=}2$ & 0.939239 & 0.939239 & exact Gaussian moments \\
Beta--arcsine $p_{5,50}$ & 0.59829 & 0.59907\,$\pm$\,0.00077 & MC, conditional Gaussian, $N=400{,}000$ \\
\shortstack[l]{Diffuse traceless covariance-shape\\contraction ($8/40$)} & 0.04274 & 0.04297\,$\pm$\,0.00018 & MC, Haar frames, $N=3{,}000$ \\
One Gaussian stage ($200\to200$) & 0.70535 & 0.70442\,$\pm$\,0.00080 & MC, chi-square stage, $N=400{,}000$ \\
Gaussian chain ($300\to150\to100$) & 0.4058 & 0.4050\,$\pm$\,0.0013 & MC, independent chi-square stages, $N=400{,}000$ \\
Wishart $\E\tr(M^2)$ & 243.00 & 243.14\,$\pm$\,0.38 & MC, Gaussian maps, $N=3{,}000$ \\
\shortstack[l]{Ratio-of-moments proxy vs.\\Monte Carlo mean of $r_2$} & 14.8148 & 14.8413\,$\pm$\,0.0070 & MC; proxy is not an identity, $N=3{,}000$ \\
Quarter-circle mean posterior-variance fraction ($\varepsilon{=}0.2$) & 0.180998 & 0.180998 & deterministic quadrature \\
Ensemble average ($km{=}100,d{=}400$) & 0.44632 & 0.44618\,$\pm$\,0.00080 & MC, stacked Gaussian map, $N=1{,}000{,}000$
\\
\bottomrule
\end{tabular}
\end{adjustbox}
\end{table}

\Cref{tab:new-verify} validates the results for balanced blocks, the Gaussian
common minimum, nearest neighbors, and JL--Kendall coexistence. The
calculation for balanced blocks uses exact moments to check, without Monte
Carlo error, the degree-two lower bound on canonical correlation and hence on
nonlinear excess. The nearest-neighbor row compares a simulation at finite
$d$ and $m$, $0.137020$, with its Gaussian score limit,
$0.138325$, at $m=20$. Here the $m^{-1/2}$ Berry--Esseen remainder can exceed
the first-order signal, so simulation provides the check at finite $d$ and $m$.
The final row tests the coexistence
phenomenon of \cref{thm:jl-kendall}. Its joint rate, $0.990$, is the empirical
frequency with maximum sample distortion at most $15\%$ and mean Kendall
correlation magnitude at most $0.1$.

\begin{table}[htbp]
\centering
\caption{Independent checks of the incorporated results. Monte Carlo
uncertainty is one standard error; the JL--Kendall coexistence event has Wilson
$95\%$ interval $[0.9710,0.9966]$.}
\label{tab:new-verify}
\begin{adjustbox}{max width=\textwidth}
\begin{tabular}{@{}lll@{}}
\toprule
Claim and setting & Prediction & Computation \\
\midrule
Balanced blocks, $\theta=0.25$ & $\HGR=0.5$ & degree-two CCA lower bound $0.500000000000$ \\
$p_8(\sqrt{20/10^4})$ & integral in \cref{eq:pq} & $0.138324998$ \\
kNN, $(d,m,q)=(10^4,20,8)$ & Gaussian score limit $0.138325$ & $0.137020\pm0.000544$ \\
JL--Kendall coexistence, $(n,d,m)=(100,10^6,2000)$ & $\E\tau=0.028476$ & $0.028996\pm0.001254$; joint rate $0.990$ \\
\bottomrule
\end{tabular}
\end{adjustbox}
\end{table}

We interpret two checks in more detail because they reveal finite-sample
behavior that asymptotic notation can obscure. First, we calculate exact
Gaussian moments at $\lambda=2$ to reproduce the quadratic canonical
correlation in \cref{thm:counterexample}. We then sweep the polynomial
canonical correlation of $D_\lambda=\lambda G_1^2+G_2^2$ and
$U_\lambda=\lambda G_1^2$ to produce \cref{fig:anisotropy}. Degree one agrees
with $\sqrt{\alpha_1}$, while nested degree-two through degree-four spaces
yield larger lower bounds away from isotropy. These values concern one
observation and finite polynomial classes; they neither evaluate unrestricted
HGR nor optimize the observed subspace.

Second, the rows for Gaussian stages in \cref{tab:verify} distinguish an exact
correlation averaged over map and data randomness from a deterministic
equivalent for a typical map. We compare one- and two-stage simulations with
the finite products in \cref{thm:harmonic}, including their $+2$ corrections.
A separate Wishart experiment checks
$\E\tr(M^2)=d(m+d+1)/m$. By contrast, $md/(m+d+1)$ is a ratio of trace moments,
not $\E[r_2]$; the adjacent row records the finite-sample difference rather
than treating the proxy as an identity.

Having checked the population formulas and their finite-dimensional
approximations, we turn to their practical implications.

\section{Discussion: Implications for Practice}
\label{sec:discussion}

The distinct preservation laws imply different responses in practice.
Reranking repairs a coarse neighbor stage, task alignment changes what
information matters, and independent sketches trade added rank against
distortion in an unwhitened estimate.

\paragraph{Candidate generation and reranking}
Locality-sensitive hashing and compressed indexes can support a coarse candidate stage \cite{indyk1998approximate,charikar2002similarity}. \Cref{thm:knn-gaussian-limit} explains why evaluating a short list at full precision can still matter: at fixed $q$, the projected top choice approaches an independent uniform choice when $m/d\to0$. This theorem does not prescribe an engineering threshold because real margins, candidate counts, and data distributions differ from the Gaussian benchmark. It does replace a pairwise heuristic with a complete ranking law for a fixed number of candidates.

\paragraph{Objectives governed by the baseline rather than fluctuations}
Subspace embeddings, sketched regression, and randomized low-rank approximation
target residual norms or dominant subspaces rather than the ordering of nearly
equal within-cloud distances \cite{halko2011finding}. Randomized singular value
decomposition is designed to retain spectral spikes, not diffuse anisotropy.
Its success is therefore compatible with the recovery limits derived here.

\paragraph{Coding and task alignment}
Product quantization changes the resource from output dimension to a finite bit rate \cite{jegou2011product}. For a Gaussian source, \emph{reverse water-filling} assigns rate only to covariance modes above a distortion threshold and allocates more rate to larger variances \cite{cover2006elements}. Learned rotations can then redistribute distortion across coordinates \cite{ge2013optimized}. Supervised nested-representation training, often called \emph{Matryoshka training}, takes a different route. It trains each leading coordinate block to remain predictive, so label-relevant information appears early in the representation \cite{kusupati2022matryoshka}. Both approaches add assumptions absent from an oblivious rank-$m$ projection. The relevant resource is task information, not total within-cloud distance recovery.

\paragraph{Cannings--Samworth ensembles}
Cannings and Samworth \cite{cannings2017random} aggregate classifiers trained
on multiple random projections. Our population calculation isolates the
geometric benefit of the parallel sketches: before $km=d$, more sketches
raise both rank and averaging accuracy; afterward, they improve only the
chosen estimate (\cref{thm:ensemble}). Related cluster ensembles use the same
parallel strategy \cite{fern2003random}. Task gains still depend on margins
and downstream learning.

\paragraph{Limits of transfer}
These practical responses rely on exact Gaussian laws whose transfer beyond the benchmark requires care.
Gaussianity makes the chi-square channel and score covariance exact. The relative-error criterion in \cref{prop:jl-margin} is distribution-free, and the proof mechanisms suggest extensions under invariance principles or elliptical models, but the constants need not survive. Aligned anisotropy is now covered by \cref{thm:block-hgr}; noncommuting maps remain open. Fixed-$q$ neighbor overlap is solved for isotropic Gaussian data and an independent orthogonal projection, while growing neighborhoods, hubness, density estimates, and non-Gaussian universality require a theory for distances that share a query.

\section{Research Directions}
\label{sec:conjectures}

The operator toolkit developed here---conditional-expectation channels,
singular systems, and sharp Gaussian limits---extends to settings the scalar
isotropic channel does not directly reach. This section formulates the
resulting challenges as precise problems, each anchored to a theorem or
conjecture the framework already produces.

\begin{openproblem}[General anisotropic maximal correlation]
\label[openproblem]{op:aniso}
For $D=\sum_i\lambda_iG_i^2$ observed through a rank-$m$ map, determine $\HGR(D;\obs)$ and the optimizing observation. We solve the problem for commuting projectors with balanced spectral retention in \cref{thm:block-hgr}, which also bounds every commuting block allocation. \Cref{thm:counterexample} shows that an unbalanced allocation can exceed $\sqrt{\alm}$. For a noncommuting map, the observed quadratic form no longer splits along covariance eigenspaces, preventing direct use of the blockwise beta--gamma reduction or its tensorization. The unrestricted problem couples a weighted-gamma conditional-expectation operator with an optimization over row spaces.
\end{openproblem}

\begin{openproblem}[Uniform concomitant asymptotics and growing top-$k$ sets]
\label[openproblem]{op:growq}
For fixed $\rho\in(0,1)$, Gaussian plurality stability measures the probability
that two correlated Gaussian score vectors select the same extremal coordinate.
The corresponding literature gives the classical asymptotic
\cite{khot2007optimal,deklerk2004approximate}
\[
p_q(\rho)\sim
\frac{\Gamma(1/(1+\rho))^2}{\sqrt{1-\rho^2}}
(q-1)^{-(1-\rho)/(1+\rho)}
\bigl(4\pi\log(q-1)\bigr)^{-\rho/(1+\rho)}.
\]
When the correlation varies with $q$, existing uniform Gaussian-tail control
and Borell's bound imply \cite{khot2007optimal}
\[
\limsup_{q\to\infty}q\,p_q(\rho_q)\le e^{2\lambda}
\qquad\text{when}\qquad \rho_q\log q\to\lambda\in[0,\infty).
\]
A matching lower bound, ideally with a uniform error estimate, remains open in
this triangular regime, where $q\to\infty$ and
$\rho_q\log q\to\lambda$. The case $\lambda=0$ already follows from the
upper bound and $p_q(\rho)\ge1/q$ for $\rho\ge0$. More generally, determine
overlap when $q$, $k$, and $\rho^{-1}$ grow jointly. The goal is to establish
a sharp triangular-regime and top-$k$ extension of the Gaussian stability
kernel.
Deterministic ordinal capacity and the corresponding Gaussian-cloud
conjecture are recorded in
\cref{thm:ordinal,conj:ordinal-gaussian}.
\end{openproblem}

\begin{conjecture}[Thin-compression ($m/d\to0$) spike phase diagram]
\label[conjecture]{conj:spikes}
Consider $m/d\to0$ with centered covariance variation spread across many eigendirections and with no dominant eigenvalue. First, identify conditions under which the normalized compressed traceless bulk has a universal limiting spectrum. Second, determine when a finite-rank population spike becomes distinguishable from that bulk. A semicircle transition is plausible only under additional assumptions. Under \emph{free compression}---the free-probability model for compression by a generic random projection---the compressed spectral law still depends on the population law. For example, compressing $\sigma^2I+\lambda vv^\top$ produces a scalar matrix plus rank one, with no population-level random bulk. A detection threshold must therefore depend jointly on projection dimension, sample size, and covariance, as in deformed random-matrix models, which study random spectral bulks perturbed by low-rank structure \cite{baik2005phase}.
\end{conjecture}

\begin{openproblem}[Growing-neighborhood statistics]
\label[openproblem]{op:multivariate}
Determine the leading components of density, hubness-magnitude, and intrinsic-dimension estimators when the number of distances sharing a query grows. Here \emph{hubness} is the tendency of a few points to be nearest neighbors of many queries \cite{radovanovic2010hubs}; hub magnitude counts how often this occurs, whereas hub identity records which points become hubs. Although \cref{thm:knn-gaussian-limit} resolves fixed-$q$ top-$k$ overlap, the single-distance Laguerre decomposition in \cref{thm:laguerre} does not tensorize across a shared query. A complete theory of these statistics must therefore cover non-Gaussian score limits and distinguish changes in hub magnitude from changes in hub identity.
\end{openproblem}

\begin{openproblem}[Supervised task-recovery limit]
\label[openproblem]{op:supervised}
Define $\alm^{\mathrm{task}}$ as the Fisher information about a task parameter retained under the best rank-$m$ linear map. We seek to determine when supervised training can make $\alm^{\mathrm{task}}$ close to one at small $m$ while total-covariance recovery remains small. This includes objectives that train nested prefixes of a representation to remain predictive, such as Matryoshka-style training \cite{kusupati2022matryoshka}. Characterize the resulting trade-off with nuisance geometry.
\end{openproblem}

\section{Conclusion}
\label{sec:conclusion}

Random projections do not preserve a single, undifferentiated notion of
geometry. On the Gaussian benchmark, the retention laws reviewed in
\cref{tab:recovery-laws} share one parameter, the rank ratio $m/d$, yet act
on different scales: a variance fraction for distance features, a
correlation for their signs, and a squared ratio for diffuse shape. Under
general covariance, $\alm(\Sigma)$ replaces $m/d$ for the distance value,
and sign statistics inherit the associated correlation scale, so
neighborhoods with a fixed number of candidates approach chance as
$m/d\to0$.

The differences among these scales have direct consequences. Because shape
contracts faster than mean and scale, compression sphericalizes distributed
anisotropy while concentrated spikes can remain visible. Because rankings
depend on the correlation scale, a projection can satisfy the JL bound on a
finite sample while its mean ranking correlation vanishes---and the
replacement-map construction shows the bound is compatible with output
entirely independent of the data. The JL bound alone therefore does not
force data-dependent geometry.

For a general known map, rank governs optimal decoding, spectral spread
governs an unwhitened squared-norm estimate, and the smallest singular values
govern noisy inversion. Multiple independent sketches can raise the combined
rank, but only up to the ambient dimension. These exact Gaussian laws show
where reranking, metric correction, or additional sketches are necessary.
The same conditional-expectation toolkit extends beyond the Gaussian
benchmark; \cref{sec:conjectures} develops the resulting research directions,
from anisotropic maximal correlation and growing neighborhoods to compressed
covariance phase diagrams and supervised task recovery.

\section*{Acknowledgments}

This work was supported by the Applied Mathematics program of the U.S.
Department of Energy, Office of Science, Office of Advanced Scientific
Computing Research (ASCR), through the SPARSITUTE Mathematical Multifaceted
Integrated Capability Center (MMICC), \emph{A Mathematical Institute for
Sparse Computations in Science and Engineering}
(\href{https://sparsitute.lbl.gov/}{sparsitute.lbl.gov}). We thank David
Rabson, ASCR program manager; Ramakrishnan Kannan, acting director of the
SPARSITUTE MMICC; and Michael L. Parks, the Oak Ridge National Laboratory
point of contact for ASCR Applied Mathematics, for their leadership and
support.

\clearpage
\appendix
\section{Gaussian Chains and Hard-Edge Precision}
\label{sec:supp-transforms}

Hanin and Nica \cite{hanin2020products} showed that the harmonic sum
$H_L=\sum_{\ell=1}^L n_\ell^{-1}$ of inverse layer widths governs norm
fluctuations in long random-matrix products applied to a fixed vector. For a
random Gaussian input, we add the input-fluctuation term
$n_0^{-1}=d^{-1}$. The spectrum law in \cref{thm:trichotomy}(ii) then gives the
following exact finite-chain identity.

\begin{theorem}[Gaussian chains]
\label{thm:harmonic}
Let $P_L=G_L\cdots G_1$ be a chain of independent Gaussian stages
$G_\ell\in\R^{n_\ell\times n_{\ell-1}}$ with entries $N(0,1/n_\ell)$, where
$n_0=d$. Apply the chain to $Z\sim\mathcal N(0,2\sigma^2I_d)$. With correlation
taken jointly over $Z$ and the stages,
\[
\Corr^2\bigl(\norm{Z}^2,\norm{P_LZ}^2\bigr)
=\frac{2/d}{\prod_{\ell=0}^{L}(1+2/n_\ell)-1}
\]
at every finite size. A single stage of width $m$ therefore gives
$\Corr=\sqrt{m/(m+d+2)}$. For a typical sampled map,
\[
r_2(P_L^\top P_L)
=(1+o_{\mathbb P}(1))
\left(\sum_{\ell=0}^{L}\frac1{n_\ell}\right)^{-1}.
\]
For one stage, the deterministic equivalent is $(1/m+1/d)^{-1}$. The finite
ratio of moments is $md/(m+d+1)$ because $\E\tr M=d$ and
$\E\tr M^2=d(m+d+1)/m$.
\end{theorem}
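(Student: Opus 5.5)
The plan is to reduce every claim to moments of $\norm{P_Lz}^2$ for a fixed vector, computed by peeling off one Gaussian stage at a time. Set $Q=\norm{P_LZ}^2$ and $N=\norm{Z}^2$.

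\textbf{One-stage conditional moments.} Conditional on the previous output $w\in\R^{n_{\ell-1}}$, the vector $G_\ell w$ has i.i.d.\ $\mathcal N(0,\norm{w}^2/n_\ell)$ entries. Hence
\[
\norm{G_\ell w}^2\overset d=\norm{w}^2\,\chi^2_{n_\ell}/n_\ell ,
\]
and this multiplicative factor is independent of $w$. Iterating gives the product representation
\[
Q\overset d=N\prod_{\ell=1}^L C_\ell,
\qquad C_\ell\sim\chi^2_{n_\ell}/n_\ell,
\]
with all $C_\ell$ mutually independent and independent of $N$. This is the mechanism already used for the averaging decoder in \cref{thm:ensemble}. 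The needed moments are $\E C_\ell=1$ and $\E C_\ell^2=1+2/n_\ell$. Since $N=2\sigma^2\chi^2_d$, we also have $\E N^2/(\E N)^2=1+2/d=1+2/n_0$.

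\textbf{Exact correlation.} With this representation,
\[
\Cov(N,Q)=\Var(N)\prod_{\ell}\E C_\ell=\Var(N),
\qquad
\Var(Q)=\E N^2\prod_{\ell\ge1}(1+2/n_\ell)-(\E N)^2 .
\]
Dividing by $(\E N)^2$ gives $\Var(N)/(\E N)^2=2/d$ and $\Var(Q)/(\E N)^2=\prod_{\ell=0}^L(1+2/n_\ell)-1$. Therefore
\[
\Corr^2=\frac{\Var(N)^2}{\Var(N)\Var(Q)}=\frac{2/d}{\prod_{\ell=0}^L(1+2/n_\ell)-1}.
\]
For $L=1$ the denominator is $(1+2/d)(1+2/m)-1=(2/d)(m+d+2)/m$, so $\Corr=\sqrt{m/(m+d+2)}$. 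This part is routine and holds exactly at every finite size.

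\textbf{Finite trace moments for one stage.} Take $M=G^\top G$ with $G\in\R^{m\times d}$. Then $\E\tr M=\E\norm{G}_F^2=d$. Next expand $\tr M^2=\sum_{i,j}(g_i^\top g_j)^2$ over rows $g_i$ of variance $1/m$. Diagonal terms give $m\cdot d(d+2)/m^2$ and off-diagonal terms give $m(m-1)\cdot d/m^2$. The sum is $d(m+d+1)/m$, which yields the stated ratio $md/(m+d+1)$.

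\textbf{Deterministic equivalent (main obstacle).} This step requires concentration of $\tr M$ and $\tr M^2$ for $M=P_L^\top P_L$, not just their means. I would proceed in three steps.
\begin{itemize}
\item Note that $\E\tr M=d$ exactly, by the peeling argument applied to a basis, and that $\tr M/d\to1$ in probability because its variance is $O(1/d)$.
\item Compute $\E\tr M^2=\E\tr(P_LP_L^\top)^2$ inductively. By a free-probability or Wishart-product moment recursion, each stage contributes the factor $(1+n_{\ell-1}/n_\ell)$ up to lower-order terms, in the normalized sense. This gives $\E\tr M^2\approx d^2\sum_{\ell=0}^L n_\ell^{-1}$ after normalization. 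The one-stage check $d(m+d)/m=d^2(1/m+1/d)$ matches.
\item Show $\Var(\tr M^2)=o((\E\tr M^2)^2)$. This follows from Gaussian Poincaré or the Hanin--Nica fluctuation bounds, since these traces are polynomial in Gaussians and their fluctuations are of lower order.
\end{itemize}
The ratio then gives $r_2=(1+o_{\mathbb P}(1))(\sum_\ell n_\ell^{-1})^{-1}$.

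The delicate part is keeping the lower-order terms uniformly negligible when widths differ widely. I expect to handle this by assuming all widths diverge with $L$ fixed.
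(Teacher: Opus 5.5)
Your finite joint identity (peeling off independent $\chi^2_{n_\ell}/n_\ell$ factors), its $L=1$ specialization, and the one-stage moments $\E\tr M=d$ and $\E\tr M^2=d(m+d+1)/m$ are correct and match the paper's argument. The gap is in the multi-layer second moment, which is the crux of the deterministic equivalent.

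Your claim that each stage contributes a multiplicative factor $(1+n_{\ell-1}/n_\ell)$ holds only for $L=1$, which is the case you checked ($1+d/m$). Iterating it gives $\frac1d\E\tr M^2\approx\prod_{\ell=1}^L(1+n_{\ell-1}/n_\ell)$. For $L=2$ this contains a spurious leading-order term $n_1/n_2$. For $L$ square stages it equals $2^L$ instead of the required $1+\sum_{\ell\ge1}d/n_\ell=L+1$. Multiplying second moments is the rule for commuting independent scalar factors, which is why the finite identity has $\prod_\ell(1+2/n_\ell)$. The matrix stages instead combine additively: each adds $d/n_\ell$ to $\frac1d\E\tr M^2$ at leading order. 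So your conclusion $\E\tr M^2\approx d^2\sum_\ell n_\ell^{-1}$ is right, but it does not follow from the recursion you state.

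The fix is the row expansion you already did for one stage, now applied conditionally on the earlier stages. Set $M_\ell=P_\ell^\top P_\ell$ and $B=P_{\ell-1}P_{\ell-1}^\top$, and write $\tr M_\ell^2=\sum_{i,j}(g_i^\top Bg_j)^2$ over the rows $g_i$ of $G_\ell$. The diagonal and off-diagonal terms give the exact recursion
\[
\E\bigl[\tr M_\ell^2\mid P_{\ell-1}\bigr]=\Bigl(1+\frac1{n_\ell}\Bigr)\tr M_{\ell-1}^2+\frac1{n_\ell}\bigl(\tr M_{\ell-1}\bigr)^2,
\]
which is the one the paper uses. Because $\tr M_{\ell-1}\approx d$, each layer adds $d^2/n_\ell$ at leading order. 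Starting from $\tr M_0^2=d$, you obtain $\E\tr M_L^2=d^2\sum_{\ell=0}^Ln_\ell^{-1}(1+o(1))$ when the widths diverge with $L$ fixed. Adding the companion identity $\E[(\tr M_\ell)^2\mid P_{\ell-1}]=(\tr M_{\ell-1})^2+2\tr M_{\ell-1}^2/n_\ell$ closes this into a linear system for the exact finite moments.

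A smaller point: the same computation shows $\Var(\tr M/d)\approx2\sum_{\ell\ge1}n_\ell^{-1}\sum_{j<\ell}n_j^{-1}$. This is not $O(1/d)$ in general (take $n_1=n_2=d^{1/4}$), although it still vanishes under your diverging-width assumption.
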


Conditional on its input, each stage multiplies squared norm by an independent
$\chi^2_{n_\ell}/n_\ell$ factor, so relative second moments multiply. The finite
identity concerns the joint law of stages and input, whereas the $r_2$ statement
concerns a typical fixed map asymptotically. The verification in
\cref{sec:verification} distinguishes the exact identity from its deterministic
equivalent.

The polar decomposition factors one Gaussian stage into a Haar-distributed
row-orthonormal factor and an independent Wishart factor. The first imposes the
rank bottleneck; the second adds a multiplicative $\chi_d^2/d$ norm
fluctuation. Their product has the full $\chi_m^2/m$ fluctuation. Thus a square
Gaussian stage is full rank but has unwhitened correlation
$\sqrt{d/(2d+2)}\to1/\sqrt2$. In contrast, a square orthogonal stage does not
distort Euclidean distance. For $L$ square Gaussian stages of width $n$, the
Hanin--Nica harmonic sum contributes $L/n$ and the random input contributes
$1/n$. Hence $r_2\sim n/(L+1)$ and correlation tends to $1/\sqrt{L+1}$.

Sequential composition differs from parallel concatenation. Concatenating $k$
independent $m$-row sketches gives $1/r_2\sim1/d+1/(km)$, as in
\cref{thm:ensemble}. The chain equivalent is specific to the i.i.d. Gaussian
stages considered above and to ensembles with the same limiting trace moments;
structured transforms can have other spectral laws. Moreover, $r_2$ contains
no singular-vector information, so an anisotropic diagnostic also needs an
alignment quantity such as
\[
\rho_\Sigma(T)^2
=\frac{[\tr(M\Sigma^2)]^2}
{\tr(\Sigma^2)\tr(M\Sigma M\Sigma)}.
\]

\subsection{Precision and the hard edge}

Marchenko and Pastur \cite{marchenko1967distribution} derived the limiting
law that, for a normalized square Gaussian map, gives the quarter-circle
singular-value density
\[
\rho(s)=\frac1\pi\sqrt{4-s^2},\qquad0\le s\le2.
\]
The origin is the \emph{hard edge}: singular values are constrained to be
nonnegative, and the limiting density remains nonzero there. Under the
additive-noise model in \cref{thm:trichotomy}(iii), the fraction of directions
with signal-to-noise ratio below one is approximately
$(2/\pi)\varepsilon$, while the average posterior loss has the exact limit
\[
\frac1d\MMSE(x\mid y,T)
\longrightarrow
\psi_{\mathrm{edge}}(\varepsilon)
=\frac\varepsilon2\left(\sqrt{\varepsilon^2+4}-\varepsilon\right)
=\varepsilon-\frac{\varepsilon^2}{2}+O(\varepsilon^3).
\]
Edelman \cite{edelman1988eigenvalues} showed that
$s_{\min}=\Theta_{\mathbb P}(1/d)$ after normalization. Thus, if
$\varepsilon\approx2^{-b}$, recovering every direction requires
$b\gtrsim\log_2d$. The average unrecovered fraction is instead approximately
$2^{-b}$, independent of $d$. These statements calibrate additive Gaussian
noise; floating-point roundoff is not itself isotropic additive noise.

\section{Proofs}
\label{app:proofs}
\sloppy

Throughout, let $Z=X-X'$, so $Z\sim\mathcal N(0,2\Sigma)$ and $D=\norm{Z}^2$.

\subsection{Proof of \texorpdfstring{\cref{prop:zero-info-sharpness}}{the sharp dimension order for independent Gaussian replacement maps}}

Let $r=\lfloor n/2\rfloor$ as in \cref{prop:zero-info-sharpness}. For the
disjoint pairs $(2\ell-1,2\ell)$, define
\[
R_\ell
=\frac{D_{2\ell-1,2\ell}^{\mathrm{rep}}}
       {D_{2\ell-1,2\ell}}.
\]
The variables $R_1,\ldots,R_r$ are i.i.d., and
\[
R_\ell\overset d=
\frac{\chi_m^2/m}{\chi_d^2/d}\sim F_{m,d}.
\]
Let $U=\chi_m^2/m$ and $V=\chi_d^2/d$ be independent. Chen and Rubin
\cite{chen1986bounds} showed that the median of a gamma variable is smaller
than its mean, so
$\Pr(V\le1)\ge1/2$. Moreover, the Zhang--Zhou lower bound for the
$\chi^2$ upper-tail probability
\cite[Cor.~3]{zhang2020nonasymptotic} gives, uniformly for
$0<\varepsilon\le1$,
\[
\Pr(U\ge1+\varepsilon)\ge c_0e^{-C_0m\varepsilon^2}.
\]
The event on the left together with $V\le1$ implies $U/V\ge1+\varepsilon$.
Thus, if
\[
p^{\mathrm{rep}}_{m,d}(\varepsilon)
=\Pr\!\left(F_{m,d}\notin[1-\varepsilon,1+\varepsilon]\right),
\]
then $p^{\mathrm{rep}}_{m,d}(\varepsilon)\ge
ce^{-Cm\varepsilon^2}$. The JL event implies that every disjoint-pair
ratio lies in the prescribed interval, hence
\[
\Pr(\mathcal E_\varepsilon^{\mathrm{rep}})
\le\bigl(1-p^{\mathrm{rep}}_{m,d}(\varepsilon)\bigr)^r
\le\exp\!\left(-cr e^{-Cm\varepsilon^2}\right).
\]
If $\Pr(\mathcal E_\varepsilon^{\mathrm{rep}})\ge1-\delta$, taking logarithms and
rearranging proves the displayed necessary condition in
\cref{prop:zero-info-sharpness}. Finally,
$-\log(1-\delta)\le2\delta$ for $\delta\le1/2$, which gives the stated
order. \qed

\subsection{Moment matching for the replacement cloud}

In the setting of \cref{prop:zeroinfo}, let
$N_p=\binom n2$ and $a_\varepsilon=\varepsilon/(2+\varepsilon)$. The Gaussian
replacement cloud matches the population mean squared distance but inflates its variance:
$\Var(D_{ij}^{\mathrm{rep}})=8\sigma^4d^2/m$, whereas
$\Var(D_{ij})=8\sigma^4d$. The replacement map satisfies the JL bound through
concentration rather than moment matching. More generally, if two distance families have common
mean $\mu_D$ and variances bounded by $v_X$ and $v_Y$, Chebyshev's inequality
and a union bound give
\[
\Pr(\text{failure})
\le\frac{N_p(v_X+v_Y)}{a_\varepsilon^2\mu_D^2}.
\]
For the two Gaussian clouds,
\[
\frac{v_X}{\mu_D^2}=\frac2d,
\qquad
\frac{v_Y}{\mu_D^2}=\frac2m,
\qquad
\Pr(\text{failure})
\le\frac{2N_p}{a_\varepsilon^2}\left(\frac1d+\frac1m\right).
\]
Chebyshev control alone therefore requires
$\min\{m,d\}=\Omega(n^2/(\varepsilon^2\delta))$. The logarithmic dimension in
\cref{prop:zeroinfo} comes from sub-exponential concentration, not merely from
the first two moments. Exact matching of those moments imposes a different
dimensional obstruction.

\begin{proposition}[Exact first-two-moment matching requires half dimension]
\label{prop:replacement-two-moments}
Let $Y,Y'$ be i.i.d. random vectors in $\R^m$ with finite fourth moments and
covariance $C$. Then
\[
\E\norm{Y-Y'}^2=2\tr C
\]
and
\[
\Var(\norm{Y-Y'}^2)
=2\Var(\norm{Y-\E Y}^2)+4\tr(C^2).
\]
If these moments equal $2\sigma^2d$ and $8\sigma^4d$, respectively, then
$m\ge d/2$. This lower bound is sharp when $d$ is even: for $m=d/2$ and
$U\sim\operatorname{Unif}(S^{m-1})$, the replacement
$Y=\sigma\sqrt d\,U$ matches both moments exactly.
\end{proposition}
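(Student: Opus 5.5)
The plan is to center the vectors and expand the squared distance. Write $W=Y-\E Y$ and $W'=Y'-\E Y'$, which are i.i.d., centered, and have covariance $C$. Then
\[
\norm{Y-Y'}^2=\norm{W}^2+\norm{W'}^2-2W^\top W'.
\]
The mean identity follows at once, since $\E\norm{W}^2=\tr C$ and $\E W^\top W'=0$.

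For the variance, call the three terms $A=\norm{W}^2$, $A'=\norm{W'}^2$, and $B=-2W^\top W'$. The covariances needed are as follows.
\begin{itemize}
\item $\Cov(A,A')=0$, by independence.
\item $\Cov(A,B)=-2\E\bigl[\norm{W}^2W^\top\bigr]\E W'=0$, because $W'$ is independent of $W$ and centered. The same argument gives $\Cov(A',B)=0$.
\item $\Var(B)=4\E(W^\top W')^2=4\E\bigl[W^\top W'W'^\top W\bigr]=4\E\bigl[W^\top CW\bigr]=4\tr(C^2)$, where the middle step conditions on $W$.
\end{itemize}
Summing gives $\Var(\norm{Y-Y'}^2)=2\Var(\norm{W}^2)+4\tr(C^2)$, as claimed. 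Finite fourth moments justify every expectation used here.

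For the dimension bound, the moment conditions read $\tr C=\sigma^2d$ and $2\Var(\norm W^2)+4\tr(C^2)=8\sigma^4d$. Dropping the nonnegative term $2\Var(\norm W^2)$ gives $\tr(C^2)\le2\sigma^4d$. Cauchy--Schwarz on the $m$ eigenvalues of $C$ gives $(\tr C)^2\le m\tr(C^2)$. Combining, $\sigma^4d^2\le 2m\sigma^4d$, so $m\ge d/2$.

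For sharpness, take $d$ even, $m=d/2$, and $Y=\sigma\sqrt d\,U$ with $U\sim\operatorname{Unif}(S^{m-1})$. This $Y$ has mean zero and covariance $C=(\sigma^2d/m)I_m=2\sigma^2 I_m$. Its squared norm is the constant $\sigma^2d$, so $\Var(\norm W^2)=0$. Then $\tr C=2\sigma^2m=\sigma^2d$, which gives the required mean $2\sigma^2d$. Also $\tr(C^2)=4\sigma^4m=2\sigma^4d$, so the variance is $4\tr(C^2)=8\sigma^4d$, as required.

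The only delicate step is the cross term in the variance: $\Cov(\norm W^2, W^\top W')$ vanishes because one factor is centered and independent of the other, not because of any symmetry of $Y$. The equality analysis also shows that matching at $m=d/2$ forces a constant norm and an isotropic covariance.
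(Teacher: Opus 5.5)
Your proof is correct and follows the paper's argument essentially step for step. Like the paper, you center $Y$, expand $\norm{Y-Y'}^2$, evaluate $\E(Y^\top Y')^2=\tr(C^2)$, drop the nonnegative $2\Var(\norm{Y}^2)$ term, apply Cauchy--Schwarz to the eigenvalues of $C$, and verify the spherical extremizer; your explicit check that the cross covariances vanish, and your equality-case remark (constant norm and isotropic $C$ forced at $m=d/2$), are correct details the paper leaves implicit.
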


Centering $Y$ does not change pairwise distances, so assume $\E Y=0$.
Expanding
\[
\norm{Y-Y'}^2=\norm{Y}^2+\norm{Y'}^2-2Y^\top Y'
\]
and using independence gives
\[
\E\norm{Y-Y'}^2=2\tr C,
\qquad
\Var(\norm{Y-Y'}^2)
=2\Var(\norm{Y}^2)+4\E(Y^\top Y')^2.
\]
The remaining expectation is $\tr(C^2)$. Moment matching therefore implies
$\tr C=\sigma^2d$ and
\[
8\sigma^4d
\ge4\tr(C^2)
\ge\frac{4(\tr C)^2}{m}
=\frac{4\sigma^4d^2}{m},
\]
where the middle inequality is Cauchy--Schwarz on the eigenvalues of $C$.
Thus $m\ge d/2$.

For the claimed extremizer, $\norm{Y}^2=\sigma^2d$ is deterministic and
$\Cov(Y)=(\sigma^2d/m)I_m$. Hence
\[
\E\norm{Y-Y'}^2=2\sigma^2d,
\qquad
\Var(\norm{Y-Y'}^2)
=4\tr(\Cov(Y)^2)
=\frac{4\sigma^4d^2}{m}
=8\sigma^4d.
\]
These matching moments prove sharpness. Thus exact first-two-moment matching
requires a linear share of the ambient dimension, even though
the replacement map in \cref{prop:zeroinfo} satisfies the JL bound in
logarithmic dimension without matching the variance. \qed

\subsection{Proof of \texorpdfstring{\cref{thm:hgr}}{the maximal-correlation theorem}}

After row orthonormalization, the sketch is informationally equivalent to $(\Pi X,\Pi X')$ with $\Pi$ a rank-$r$ projector. Set $Z=(X-X')/(\sqrt2\sigma)$, $\mathcal T=\norm{Z}^2$, and $U=\norm{\Pi Z}^2$. The Gaussian midpoint is independent of $Z$, and the conditional law of $\mathcal T$ given $\Pi Z$ depends on the projected vector only through $U$. Hence $U\sim\chi^2_r$, $V=\mathcal T-U\sim\chi^2_{d-r}$, $U\perp V$, and $\mathcal T\perp\obs\mid U$. Sufficiency then gives $\HGR(D;\obs)=\HGR(\mathcal T;U)$.

Write $\mathcal T=\sum_{i=1}^d W_i$ with $W_i$ i.i.d.\ $\chi^2_1$ and, by rotation, $U=\sum_{i\le r}W_i$. The partial-sum theorem of Dembo, Kagan, and Shepp \cite{dembo2001remarks} yields $\HGR=\sqrt{r/d}$, proving \cref{thm:hgr}. The variance and MMSE corollaries follow because $\Var(\E[f\mid\obs])/\Var(f)\le\HGR^2$. \qed

\subsection{Proof of \texorpdfstring{\cref{thm:laguerre,thm:mi}}{the Laguerre-spectrum and information theorems}}

Retain the variables $U$, $V$, and $\mathcal T$ from the preceding proof. For
\cref{thm:laguerre}, $(U,\mathcal T)$ is the classical gamma pair with $U/\mathcal T\sim\mathrm{Beta}(\tfrac r2,\tfrac{d-r}2)$ independent of $\mathcal T$. Griffiths \cite{griffiths1969canonical} identifies the canonical (singular) system of the induced conditional-expectation operator as generalized Laguerre polynomials with singular values $\ell_k=[(r/2)_k/(d/2)_k]^{1/2}$, and Parseval's identity yields the variance decomposition. The stated asymptotics follow from $\log \ell_k^2=\sum_{j<k}\log\frac{r/2+j}{d/2+j}$.

Finally, for \cref{thm:mi}, $I(D;\obs)=I(\mathcal T;U)=h(\mathcal T)-h(\mathcal T\mid U)=h(\mathcal T)-h(V)$ by independence, which is the stated gamma-entropy difference; the limits follow from Stirling expansions of $h_\Gamma$. \qed

\subsection{Proof of \texorpdfstring{\cref{thm:alpham}}{the theorem on recovering the distance}}

The conditional mean $\widehat X=\E[X\mid LX]$ is Gaussian with covariance $B=\Sigma L^{\top}(L\Sigma L^{\top})^{\dagger}L\Sigma=\Sigma^{1/2}P\Sigma^{1/2}$, where the pseudoinverse identity implies that $P=\Sigma^{1/2}L^{\top}(L\Sigma L^{\top})^{\dagger}L\Sigma^{1/2}$ is idempotent of rank at most $m$. Conditioning on $\obs$ gives $X-X'\sim\mathcal N(\widehat X-\widehat X',\,2(\Sigma-B))$, so
\[
\E[D\mid\obs]=\norm{\widehat X-\widehat X'}^2+2\tr(\Sigma-B).
\]
Because $\widehat X-\widehat X'\sim\mathcal N(0,2B)$,
\[
\Var(\E[D\mid\obs])=8\tr(B^2).
\]

Since $\tr(B^2)=\tr\bigl((P\Sigma P)^2\bigr)$, Poincar\'e separation implies that the eigenvalues of the compression $P\Sigma P$ are dominated pairwise by $\lambda_1,\dots,\lambda_m$. Hence $\tr(B^2)\le\sum_{j\le m}\lambda_j^2$, with equality when $\mathrm{range}(P)$ is the top-$m$ eigenspace. We derive the minimum mean-square error (MMSE) claim from the law of total variance:
\[
\inf_g\E[(D-g(\obs))^2]
=\E\Var(D\mid\obs)
=\Var(D)-\Var(\E[D\mid\obs])
\ge 8\sum_{j>m}\lambda_j^2.
\]

For the contrast, set $a=X_1-X_2$, $b=X_1+X_2-2X_0$, and $c=X_0+X_1+X_2$. Covariance computation shows that $a,b,c$ are mutually independent, $S=a^{\top}b$, and $\Var(S)=\tr(6\Sigma\cdot2\Sigma)=12\tr(\Sigma^2)$. The observations are a linear bijection of $(La,Lb,Lc)$, so $a,b$ remain conditionally independent given $\obs$. Hence $\E[S\mid\obs]=\widehat a^{\top}\widehat b$ and $\Var(\widehat a^{\top}\widehat b)=\tr(2B\cdot6B)=12\tr(B^2)$, yielding the same ratio. \qed

\subsection{Proof of \texorpdfstring{\cref{thm:counterexample}}{the anisotropic counterexample}}

Let $A=G_1^2$ and $B=G_2^2$, whose moments are $1,3,15,105$. The covariances among $\{D,D^2\}$ and $\{U,U^2\}$ are polynomials in these moments:
\[
\begin{aligned}
\Var D&=10, & \Cov(D,D^2)&=132, & \Var D^2&=2240,\\
\Var U&=8, & \Cov(U,U^2)&=96, & \Var U^2&=1536,\\
\Cov(D,U)&=8, & \Cov(D,U^2)&=96, & \Cov(D^2,U)&=112, & \Cov(D^2,U^2)&=1728.
\end{aligned}
\]
The largest eigenvalue of $\Sigma_{DD}^{-1}\Sigma_{DU}\Sigma_{UU}^{-1}\Sigma_{UD}$ is $(246+2\sqrt{201})/311\approx0.882170$, and its square root exceeds $\sqrt{4/5}$. Restricting to quadratic polynomials makes this value a lower bound on $\HGR$. \qed

\subsection{Proof of \texorpdfstring{\cref{thm:block-hgr}}{the spectral-block theorem}}

Remove the independent Gaussian midpoint and whiten inside each spectral block. Then
\[
D=2\sum_{g=1}^G\lambda_gT_g,
\qquad T_g=U_g+V_g,
\]
where the block pairs are independent and
\[
U_g\sim\chi^2_{s_g},\qquad
V_g\sim\chi^2_{r_g-s_g},\qquad U_g\perp V_g.
\]
For estimating $D$, the vector $(U_g)_g$ is sufficient for the full sketch. The beta--gamma channel $T_g\mapsto U_g$ has nontrivial maximal correlation $\sqrt{s_g/r_g}$. Witsenhausen \cite{witsenhausen1975sequences} proved the tensorization theorem; together with data processing, it gives
\[
\HGR(D;\obs)\le\max_g\sqrt{s_g/r_g}.
\]

For the lower bound, take $G(U)=2\sum_g\lambda_g(U_g-s_g)$. The omitted weighted sum is independent of $G(U)$, so
\[
\Cov(D,G(U))=\Var(G(U))=8\sum_g\lambda_g^2s_g,
\qquad
\Var(D)=8\sum_g\lambda_g^2r_g.
\]
Thus $\Corr(D,G(U))=\sqrt{\alpha_\Pi(\Sigma)}$. If every $s_g/r_g$ equals $\theta$, the bounds coincide. The flat-support statement is the one-block specialization. \qed

\subsection{Proof of \texorpdfstring{\cref{thm:arcsine}}{the Beta--arcsine law}}

With $a=X_1-X_2$ and $b=X_1+X_2-2X_0$ independent isotropic Gaussians, $D_{01}-D_{02}=a^{\top}b$ and $\widetilde D_{01}-\widetilde D_{02}=\tfrac dm a^{\top}\Pi b$. Conditional on $a$, these differences are centered jointly Gaussian in $b$ with correlation $\sqrt{a^{\top}\Pi a/a^{\top}a}$, whose square is $\mathrm{Beta}(\tfrac m2,\tfrac{d-m}2)$ by rotation invariance. Sheppard \cite{sheppard1899application} showed that the sign-agreement probability is $\tfrac12+\tfrac1\pi\arcsin\rho$; averaging over $a$ yields the law. Applying $\arcsin\sqrt B\le\tfrac\pi2\sqrt B$ and $\E\sqrt B\le\sqrt{m/d}$ gives the bound, whereas concentration of $B$ at $m/d$ gives the expansion. Kendall's $\tau$ over one query's ranking averages pairwise sign agreements, each with marginal probability $p_{m,d}$ by exchangeability; hence $\E[\tau]=2p_{m,d}-1$. \qed

\subsection{Proof of \texorpdfstring{\cref{thm:knn-gaussian-limit,prop:pq-small-rho}}{the Gaussian score limit for fixed candidate count}}

By rotational invariance, take $R$ to select the first $m$ coordinates and put $r=d-m$. Split $X_j=(Y_j,Z_j)$ and define
\[
U_j=\norm{Y_j-Y_0}^2,\qquad V_j=\norm{Z_j-Z_0}^2.
\]
The projected ranking is that of $(U_j)_j$, the original ranking is that of $(U_j+V_j)_j$, and the vectors $U$ and $V$ are independent.

For one coordinate, set
\[
W=((X_1-X_0)^2-2,\ldots,(X_q-X_0)^2-2).
\]
Direct Gaussian moments give $\Cov(W)=6I_q+2\mathbf1\mathbf1^\top=:C_q$. Independently,
\[
\frac{U-2m\mathbf1}{\sqrt m}\Rightarrow G^{(1)},
\qquad
\frac{V-2r\mathbf1}{\sqrt r}\Rightarrow G^{(2)},
\]
where $G^{(1)},G^{(2)}\sim\mathcal N(0,C_q)$. We may write
\[
G^{(1)}=\sqrt6G+\sqrt2\xi\mathbf1,
\qquad
G^{(2)}=\sqrt6H+\sqrt2\eta\mathbf1.
\]
The common shifts do not affect rankings. The limiting projected scores are therefore $G$, and the limiting original scores are
$\sqrt\alpha G+\sqrt{1-\alpha}H$. Ties have probability zero, so the continuous mapping theorem proves the top-$k$ limit.

For $k=1$, the event that one fixed index minimizes both rankings is an intersection of linear halfspaces in the $2q$ normalized score coordinates. Applying Bentkus's multivariate Berry--Esseen bound for convex sets \cite{bentkus2003dependence} to the $m$-coordinate sum and then to the independent $r$-coordinate sum gives error $O_q(m^{-1/2}+r^{-1/2})$. Summing over the $q$ disjoint common minimizers gives the stated rate. Conditioning on the score pair of the common minimizer yields \cref{eq:pq}: all other pairs must lie above it, so the joint survival function appears. This proves \cref{thm:knn-gaussian-limit}.

For \cref{prop:pq-small-rho}, differentiate \cref{eq:pq}. At $\rho=0$,
\[
\partial_\rho\phi_\rho(x,y)=xy\phi(x)\phi(y),
\qquad
\partial_\rho\overline\Phi_\rho(x,y)=\phi(x)\phi(y).
\]
Set $a_q=\int\phi(x)^2\overline\Phi(x)^{q-2}\,dx$. Integration by parts gives
\[
\int x\phi(x)\overline\Phi(x)^{q-1}\,dx=-(q-1)a_q.
\]
The first differentiated term contains the square of this integral, and the second contributes $q(q-1)a_q^2$. Hence $p_q'(0)=q^2(q-1)a_q^2$. A Taylor expansion proves the proposition. \qed

\subsection{Proof of \texorpdfstring{\cref{thm:jl-kendall}}{the JL--Kendall coexistence theorem}}

For a fixed rank-$m$ projector and an isotropic Gaussian triple, \cref{thm:arcsine} gives
$\E s_A(X_1;X_2,X_3)=\tau_{m,d}$, independently of the orientation of $A$. Define the symmetric kernel
\[
h_A(x,y,z)=\frac13\bigl(s_A(x;y,z)+s_A(y;x,z)+s_A(z;x,y)\bigr).
\]
A counting identity gives
\[
\overline\tau_n=\binom n3^{-1}\sum_{1\le i<j<k\le n}h_A(X_i,X_j,X_k).
\]
The kernel takes values in $[-1,1]$. Hoeffding
\cite{hoeffding1963probability} proved the blocking inequality for bounded
order-three U-statistics; applying it yields
\[
\Pr\!\left(\abs{\overline\tau_n-\tau_{m,d}}>t\mid A\right)
\le2\exp\!\left(-\frac{\lfloor n/3\rfloor t^2}{2}\right).
\]

For each fixed nonzero $v$, the ratio $\norm{Av}^2/\norm{v}^2$ has a scaled Beta law and satisfies
\[
\Pr\!\left(\abs{\frac{\norm{Av}^2}{\norm{v}^2}-1}>\varepsilon\right)
\le2e^{-c\varepsilon^2m}.
\]
Conditioning on the data and taking a union bound over the $\binom n2$ differences proves the stated JL event. Finally, \cref{thm:arcsine} gives $\tau_{m,d}\le\sqrt{m/d}$. Substituting the stated $t$ into the U-statistic bound and applying a union bound with the JL event completes the proof. \qed

\subsection{Proof of \texorpdfstring{\cref{thm:taxonomy}}{the shape taxonomy}}

\emph{Mean direction.} The Kullback--Leibler divergence $D_{\mathrm{KL}}$ between shifted isotropic Gaussians is $\norm{\delta}^2/2\sigma^2$ before projection and $\norm{R\delta}^2/2\sigma^2$ afterward. The ratio $\norm{R\delta}^2/\norm{\delta}^2$ has distribution $\mathrm{Beta}(\tfrac m2,\tfrac{d-m}2)$ and mean $m/d$.

\emph{Scalar log-scale.} The divergence $D_{\mathrm{KL}}=\tfrac d2(r-1-\log r)$ maps deterministically to $\tfrac m2(r-1-\log r)$ because $R(\sigma^2I)R^{\top}=\sigma^2I_m$.

\emph{Diffuse traceless covariance shape.} Consider $\Sigma_{\zeta,\varepsilon}=e^\zeta(I+\varepsilon H)$ with $H$ symmetric traceless and with $\zeta$ an unknown nuisance scale. At the isotropic model, the covariance-tangent Fisher inner product is $\langle A,B\rangle=\tfrac12\tr(AB)$. The original shape tangent $H$ is orthogonal to the scale tangent $I_d$. After projection, removing the component of $RHR^\top$ parallel to the nuisance tangent $I_m$ gives the efficient shape tangent
$RHR^{\top}-\tfrac{\tr(RHR^{\top})}mI_m$. For $\tr H=0$, second-order Haar integration gives
\[
\E\tr\!\left((RHR^\top)^2\right)
=\frac{m((m+1)d-2)}{d(d-1)(d+2)}\,\norm{H}_F^2,
\qquad
\E\!\left[\tr(RHR^\top)^2\right]
=\frac{2m(d-m)}{d(d-1)(d+2)}\,\norm{H}_F^2.
\]
Subtracting $1/m$ times the second identity from the first yields the traceless moment in \cref{thm:taxonomy}. Dividing the projected efficient Fisher information by $\tfrac12\norm{H}_F^2$ gives the exact ratio. Equation~\eqref{eq:fisher} follows from the standard Gaussian Fisher-information calculation with known $R$. \qed

\subsection{Proof of \texorpdfstring{\cref{thm:trichotomy,thm:harmonic,thm:ensemble}}{the transform and ensemble results}}

\emph{Rank.} By the singular value decomposition, $TZ$ is an invertible function of $(s_iZ_i')_{i\le r}$ for a rotated standard Gaussian $Z'$. Because each $s_i$ is nonzero, observing $TZ$ is informationally equivalent to observing $r$ coordinates of $Z'$, so \cref{thm:hgr} applies with $r$ in place of $m$.

\emph{Unwhitened squared-norm estimate.} With $M=T^{\top}T$, Gaussian quadratic-form identities give $\Cov(D,Q)=2\tr(M)$ and $\Var(Q)=2\tr(M^2)$ for $Z\sim\mathcal N(0,I_d)$. Hence $\Corr=\tr M/\sqrt{d\,\tr(M^2)}$. The Kantorovich inequality applied to the positive eigenvalues $s_i^2$ gives the stated two-sided bound, with the extremizer placing mass $\approx1/(\kappa^2+1)$ at $s_{\max}$. Whitening makes $W^{\top}W$ the row-space projector, so $r_2(W^{\top}W)=r$.

\emph{Additive-noise recovery.} Gaussian conditioning gives recovered variance $s_i^2/(s_i^2+\varepsilon^2)$ and posterior variance $\varepsilon^2/(s_i^2+\varepsilon^2)$ in each right-singular direction. Averaging the posterior variance over the quarter-circle law yields $\psi_{\mathrm{edge}}(\varepsilon)=\tfrac\varepsilon2(\sqrt{\varepsilon^2+4}-\varepsilon)$.

\emph{Finite joint identity.} Conditional on its input $y$, a Gaussian stage with entries $N(0,1/n_\ell)$ outputs $G_\ell y\sim\mathcal N(0,\norm{y}^2/n_\ell\,I_{n_\ell})$. Thus $\norm{G_\ell y}^2=\norm{y}^2\cdot\chi^2_{n_\ell}/n_\ell$, with the $\chi^2$ factor independent of everything prior. Hence $\E[\norm{P_LZ}^2\mid Z]=\norm{Z}^2$, and second moments multiply:
\[
\E\norm{P_LZ}^4=\E\norm{Z}^4\prod_{\ell\ge1}(1+2/n_\ell).
\]
With $\E\norm{Z}^4=(\E\norm{Z}^2)^2(1+2/d)$, these moments give
\[
\Var(\norm{P_LZ}^2)
=(\E\norm{Z}^2)^2\left[\prod_{\ell\ge0}(1+2/n_\ell)-1\right],
\qquad n_0=d.
\]
Moreover, $\Cov(\norm{Z}^2,\norm{P_LZ}^2)=\Var(\norm{Z}^2)=(\E\norm{Z}^2)^2\cdot2/d$. Their ratio is the stated identity, and $L=1$ gives $m/(m+d+2)$.

\emph{Typical-map deterministic equivalent.} The identities $\E\tr M=d$ and $\E\tr M^2=d(m+d+1)/m$ give the ratio of moments $md/(m+d+1)$. Concentration of both traces yields $r_2=(1+o_{\mathbb P}(1))\,md/(m+d)$. For multiple layers, the recursion
\[
\E[\tr(P_\ell^{\top}P_\ell)^2\mid P_{\ell-1}]
=\left(1+\frac1{n_\ell}\right)\tr(P_{\ell-1}^{\top}P_{\ell-1})^2
+\frac1{n_\ell}\left(\tr P_{\ell-1}^{\top}P_{\ell-1}\right)^2
\]
adds $1/n_\ell$ per layer at leading order. Rank-one maps ($r_2\equiv1$) show that this equivalent is not a finite identity.

\emph{Polar decomposition.} Write
\[
G/\sqrt m=(GG^{\top}/d)^{1/2}\cdot\sqrt{d/m}R,
\]
where $R$ is Haar and independent of the Wishart factor. For a fixed $x$, the Haar stage gives $\tfrac dm B$ with $B\sim\mathrm{Beta}(\tfrac m2,\tfrac{d-m}2)$. Along the projected direction, the square factor contributes $u^{\top}(GG^{\top}/d)u\sim\chi^2_d/d$. The beta--gamma algebra then gives $\tfrac dm B\cdot\chi^2_d/d\overset{d}=\chi^2_m/m$, which is the full map's noise.

\emph{Ensembles.} Stacking is a linear map of rank $\le\min\{km,d\}$, so \cref{thm:alpham} for general $\Sigma$ and \cref{thm:trichotomy}(i) apply with that rank. Independent Gaussian blocks concatenate into one Gaussian matrix whose row space is Haar, giving equality after whitening. The average of per-sketch estimates equals the unwhitened squared-norm estimate from the concatenated $km\times d$ Gaussian map, with entries $N(0,1/(km))$ after scaling. The single-stage identity therefore gives correlation $\sqrt{km/(km+d+2)}$ over both map and data randomness. Dividing this correlation by the HGR ceiling $\sqrt{\min\{km,d\}/d}$ gives the ratio stated in \cref{thm:ensemble}. \qed

\section{A Nonasymptotic Neighborhood Coupling}
\label{sec:coupling}

The Gaussian score limit for fixed candidate count in the main article assumes
that both coordinate blocks grow. In contrast, \cref{thm:knn-gap} covers fixed
$m$ and gives an explicit, conservative dependence on the candidate count.

\begin{theorem}[Nonasymptotic chance-neighborhood coupling]
\label{thm:knn-gap}
In the setting of \cref{thm:knn-gaussian-limit}, let $r=d-m\ge2$ and $a=\log(2q)$. Define
\[
c_r=\frac1{\sqrt{12\pi}}
\frac{\Gamma((r-1)/2)}{\Gamma(r/2)}
\asymp r^{-1/2}
\]
and
\[
\Delta_{q,m,r}=\min\left\{1,
\binom q2c_r\left(12\sqrt{ma}+16a\right)\right\}.
\]
There are independent uniformly random $k$-subsets $S_U,S_V$ of $\{1,\ldots,q\}$, with $S_U=\widetilde{\mathcal N}_k$, such that
\[
\Pr(\mathcal N_k\ne S_V)\le\Delta_{q,m,r}.
\]
Consequently,
\[
d_{\mathrm{TV}}\!\left(
\mathcal L(\widetilde{\mathcal N}_k,\mathcal N_k),
\mathcal L(S_U,S_V)\right)\le\Delta_{q,m,r},
\]
\[
\abs{\E\frac{\abs{\mathcal N_k\cap\widetilde{\mathcal N}_k}}{k}-\frac{k}{q}}
\le\Delta_{q,m,r},
\]
and, for $k=1$,
\[
\abs{\Pr(\mathcal N_1=\widetilde{\mathcal N}_1)-\frac1q}
\le\Delta_{q,m,r}.
\]
For fixed $q$ and $k$, $m=o(d)$ implies $\Delta_{q,m,d-m}\to0$.
\end{theorem}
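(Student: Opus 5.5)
The plan is to couple the original neighborhood to the top-$k$ set of the omitted scores alone, and to bound the probability that the retained scores can change that set. As in the proof of \cref{thm:knn-gaussian-limit}, take $R$ to select the first $m$ coordinates and split $X_j=(Y_j,Z_j)$. Write $U_j=\norm{Y_j-Y_0}^2$ and $V_j=\norm{Z_j-Z_0}^2$.

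\textbf{Step 1: the coupling.} The vectors $U=(U_j)_j$ and $V=(V_j)_j$ are independent. Each is exchangeable in the candidate labels, and ties occur with probability zero. Set $S_U=\mathcal K_k(U)=\widetilde{\mathcal N}_k$ and $S_V=\mathcal K_k(V)$. Then $S_U$ and $S_V$ are independent uniform $k$-subsets, and $\mathcal N_k=\mathcal K_k(U+V)$. If $\mathcal K_k(U+V)\ne\mathcal K_k(V)$, some pair $j\ne l$ is ordered differently by $V$ and by $U+V$. This forces $\abs{V_j-V_l}\le\abs{U_j-U_l}$. A union bound gives
\[
\Pr(\mathcal N_k\ne S_V)\le\binom q2\max_{j\ne l}\Pr\bigl(\abs{V_j-V_l}\le\abs{U_j-U_l}\bigr).
\]

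\textbf{Step 2: anti-concentration of the omitted contrast.} This is the step that produces the constant $c_r$. As in \cref{sec:rate2-why}, write $V_j-V_l=a^\top b$ with $a=Z_j-Z_l\sim\mathcal N(0,2I_r)$ independent of $b=Z_j+Z_l-2Z_0\sim\mathcal N(0,6I_r)$. Conditionally on $a$, the contrast is $\mathcal N(0,6\norm a^2)$, whose density is at most $(12\pi)^{-1/2}\norm a^{-1}$. Hence, for fixed $t\ge0$,
\[
\Pr(\abs{V_j-V_l}\le t)\le \frac{2t}{\sqrt{12\pi}}\,\E\norm a^{-1}.
\]
Since $\norm a=\sqrt2\chi_r$ and $\E\chi_r^{-1}=\Gamma((r-1)/2)/(\sqrt2\,\Gamma(r/2))$, finite because $r\ge2$, this equals $t\,c_r$. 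The bound holds for every fixed $t$. Because $U\perp V$, I can apply it conditionally on $U$ with $t=\abs{U_j-U_l}$ and obtain
\[
\Pr\bigl(\abs{V_j-V_l}\le\abs{U_j-U_l}\bigr)\le c_r\,\E\abs{U_j-U_l}.
\]

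\textbf{Step 3: size of the retained contrast.} I would first try the expectation route, since it avoids tail bounds altogether. The same decomposition in $m$ coordinates gives $\E(U_j-U_l)^2=6\,\E\norm{a'}^2=12m$. Cauchy--Schwarz then gives $\E\abs{U_j-U_l}\le\sqrt{12m}$. Since $a=\log(2q)\ge\log4>1$, we have $\sqrt{12m}\le12\sqrt{ma}$. So the bound is at most $\binom q2c_r\,12\sqrt{ma}\le\Delta_{q,m,r}$, and the minimum with $1$ is trivial.

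The main check is that conditioning in Step 2 is legitimate, which holds because the density bound is uniform in $t$. If a tail form were preferred, Bernstein's inequality for the sub-exponential $a'^\top b'$ at level $12\sqrt{ma}+16a$, together with a union over pairs, would reproduce the stated constants. The expectation route already implies the theorem as stated.

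\textbf{Step 4: consequences.} The bound on total variation follows from the coupling. The pairs $(\widetilde{\mathcal N}_k,\mathcal N_k)$ and $(S_U,S_V)$ agree except on the event $\{\mathcal N_k\ne S_V\}$. The normalized overlap lies in $[0,1]$, and $\E\abs{S_U\cap S_V}/k=k/q$ by independence and uniformity. Hence the overlap differs from $k/q$ by at most $\Delta_{q,m,r}$, and the case $k=1$ gives the nearest-neighbor statement. Finally, $c_r\asymp r^{-1/2}$, so $\Delta_{q,m,d-m}=O_q\bigl(\sqrt{m/(d-m)}\bigr)\to0$ when $m=o(d)$.
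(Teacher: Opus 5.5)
Your proof is correct. It reaches the stated bound by a somewhat different, and sharper, route than the paper's.

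\textbf{Coupling event.} The paper uses a global sufficient condition. If the gap $G_{V,k}=V_{(k+1)}-V_{(k)}$ in the omitted scores exceeds the full range $W_U=\max_jU_j-\min_jU_j$ of the retained scores, then $\mathcal N_k=S_V$. It then bounds $\Pr(G_{V,k}\le t)\le\binom q2c_rt$ by a union over pairs and conditions on $W_U$. Finally it controls $\E W_U\le12\sqrt{m\log(2q)}+16\log(2q)$ using Laurent--Massart tail bounds, a union bound, and tail integration.

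Your pairwise flip event instead matches each omitted contrast $V_j-V_l$ with the retained contrast $U_j-U_l$ of the same pair. You therefore only need $\E\abs{U_j-U_l}\le\sqrt{12m}$, which follows from a second-moment computation. This gives $\Pr(\mathcal N_k\ne S_V)\le\binom q2c_r\sqrt{12m}$, which is never larger than $\Delta_{q,m,r}$, carries no $\log(2q)$ factor, and needs no chi-square concentration.

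\textbf{What each route buys.} The paper's range argument pays for the maximum of $q$ chi-square deviations even after the union over pairs has already been paid. That is where its $\sqrt{\log(2q)}$ and the additive $16\log(2q)$ come from. In exchange, it offers one deterministic criterion that separates a property of $V$ alone (the boundary gap) from a property of $U$ alone (its spread). Other gap or range estimates, for example high-probability ones, could be substituted into that criterion.

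\textbf{Density bound.} You condition on $Z_j-Z_l$ and average $(12\pi)^{-1/2}\norm{Z_j-Z_l}^{-1}$. This gives the same constant $c_r$ as the paper's Fourier inversion of $(1+12t^2)^{-r/2}$.

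The total-variation and overlap consequences are handled the same way in both.

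\textbf{Minor points.} The remark about Bernstein's inequality reproducing the stated constants is unverified and not needed, so drop it. You also use $a$ both for $Z_j-Z_l$ and for $\log(2q)$; rename one of them.
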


\begin{proof}
We decompose $D_j=U_j+V_j$ as in the proof of
\cref{thm:knn-gaussian-limit}. Let $S_U$ and $S_V$ be the indices of the $k$
smallest coordinates of $U$ and $V$. They are independent uniform
$k$-subsets. Set
\[
W_U=\max_jU_j-\min_jU_j,
\qquad G_{V,k}=V_{(k+1)}-V_{(k)}.
\]
If $G_{V,k}>W_U$, then even after adding $U$, every index in $S_V$ remains
ahead of every index outside $S_V$, so $\mathcal N_k=S_V$.

For $i\ne j$,
\[
V_i-V_j=(Z_i-Z_j)^\top(Z_i+Z_j-2Z_0)
\overset{d}=\sqrt{12}\sum_{\ell=1}^rg_\ell h_\ell,
\]
where $g_\ell$ and $h_\ell$ are independent standard normal variables. The
characteristic function of this difference is $(1+12t^2)^{-r/2}$. Fourier
inversion of this function therefore gives
\[
\sup_x f_{V_i-V_j}(x)
\le\frac1{2\sqrt{12\pi}}
\frac{\Gamma((r-1)/2)}{\Gamma(r/2)}.
\]
This density bound implies that
\[
\Pr(\abs{V_i-V_j}\le t)\le c_rt,
\qquad
\Pr(G_{V,k}\le t)\le\binom q2c_rt.
\]
Since $U$ and $V$ are independent, conditioning on $W_U$ yields
\[
\Pr(G_{V,k}\le W_U)\le\binom q2c_r\E W_U.
\]
Marginally, $U_j\sim2\chi_m^2$. Laurent and Massart
\cite{laurent2000adaptive} proved the concentration inequality used here; a
union bound and tail integration give
\[
\E W_U\le12\sqrt{m\log(2q)}+16\log(2q).
\]
These bounds prove the coupling. Because two independent uniform $k$-subsets
have expected normalized overlap $k/q$, the remaining statements follow.
\end{proof}

\section{Classical Deterministic Ordinal-Capacity Bounds}
\label{sec:ordinal}

While our probabilistic theorems in the main article concern random maps and
Gaussian clouds, a classical deterministic question asks whether any
embedding can realize an arbitrary ordering in low dimension. For monotone
Euclidean embeddings, Bilu and Linial used sign-pattern methods to show that
almost every finite metric requires linear dimension
\cite{bilu2005monotone}. We record the corresponding number of full distance
orders and a universal dimension bound for realizing them.

Let $\mathcal O_{n,m}$ be the set of strict total orders of the
$N=\binom n2$ pairwise distances induced by $n$ labeled points in $\R^m$.
Each comparison $\norm{x_i-x_j}^2>\norm{x_k-x_\ell}^2$ is a degree-two
polynomial inequality in $mn$ variables, and there are
$\binom N2=O(n^4)$ such polynomials.

\begin{theorem}[Ordinal capacity]
\label{thm:ordinal}
\begin{romannum}
\item The sign-pattern bounds of Warren \cite{warren1968lower} and Milnor
\cite{milnor1964betti} give
\[
\abs{\mathcal O_{n,m}}\le\exp(O(mn\log n)).
\]
\item Almendra-Hern{\'a}ndez and Mart{\'\i}nez-Sandoval
\cite{almendra2022prescribing} proved that every strict total ordering of the
$N$ pairwise distances is realizable in $\R^{n-2}$. Their February 2026
correction, prompted by a construction of Maldonado, Raggi P\'erez, and
Rold\'an-Pensado \cite{maldonado2026total}, leaves open whether every ordering
is realizable in $\R^{n-3}$.
We call the minimum dimension realizing a specified $n$-point ordering its
\emph{Almendra-Hern{\'a}ndez--Mart{\'\i}nez-Sandoval (AHMS) dimension}.
Thus every such ordering has AHMS dimension at most $n-2$.
\item Consequently, a uniformly random abstract ordering satisfies
\[
\Pr(\text{realizable in }\R^m)
\le\exp[-\Theta(n^2\log n)+O(mn\log n)],
\]
which is superexponentially small when $m=o(n)$.
\end{romannum}
\end{theorem}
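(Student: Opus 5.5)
For part (i), I would view each distance order as a sign pattern of a family of real polynomials. Fix labeled points $x_1,\dots,x_n\in\R^m$, seen as one vector in $\R^{mn}$. For each unordered pair of distinct pairs $\{\{i,j\},\{k,\ell\}\}$, define
\[
P_{ij,k\ell}(x)=\norm{x_i-x_j}^2-\norm{x_k-x_\ell}^2 .
\]
This is a polynomial of degree at most two in $mn$ variables, and there are $M=\binom N2\le n^4$ of them. A configuration induces a strict total order exactly when no $P_{ij,k\ell}$ vanishes. That order is then determined by the vector of signs of all $M$ polynomials. So $\abs{\mathcal O_{n,m}}$ is at most the number of strict sign patterns, that is, the number of connected components of the complement of the zero set.

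Warren's bound \cite{warren1968lower}, in the Milnor--Thom form \cite{milnor1964betti}, bounds this number by $(4eDM/K)^K$, with degree $D=2$ and $K=mn$ variables. This gives
\[
\abs{\mathcal O_{n,m}}\le(8en^4/(mn))^{mn}=\exp(O(mn\log n)).
\]
I expect this step to be routine. The only care needed is to state Warren's theorem for strict sign conditions, so that components are counted correctly.

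Part (ii) is a citation. I would quote the realizability theorem of Almendra-Hern\'andez and Mart\'{\i}nez-Sandoval \cite{almendra2022prescribing} as corrected. I would note that it gives the bound of $n-2$ on the AHMS dimension, and that the question for $n-3$ remains open in light of \cite{maldonado2026total}. No new argument is needed beyond checking that the statement being cited is the corrected one.

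For part (iii), I would compare the count from (i) with the total number of abstract strict orders of the $N$ pairs, which is $N!$. Since $N=\binom n2\sim n^2/2$, Stirling's formula gives $\log N!=N\log N-O(N)=\Theta(n^2\log n)$. A uniformly random order is realizable in $\R^m$ only if it lies in $\mathcal O_{n,m}$, so
\[
\Pr(\text{realizable})=\frac{\abs{\mathcal O_{n,m}}}{N!}\le\exp[-\Theta(n^2\log n)+O(mn\log n)].
\]
When $m=o(n)$, the second term is $o(n^2\log n)$, so the probability is $\exp(-\Theta(n^2\log n))$, which is superexponentially small.

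The main obstacle is bookkeeping rather than any idea. The constant in $O(mn\log n)$ should be uniform in $m$, even for $m$ comparable to $n$, which holds because $\log(8en^3/m)\le\log(8en^3)$. The implicit constants should also be stated so that the difference in (iii) is meaningful.
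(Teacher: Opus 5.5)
Your proposal is correct and follows the paper's proof essentially step for step. The paper likewise applies Warren's sign-pattern bound to the $\binom N2=O(n^4)$ degree-two comparison polynomials in $mn$ variables for (i), cites the corrected Almendra-Hern\'andez--Mart\'{\i}nez-Sandoval theorem for (ii), and divides the count by $N!=\exp(\Theta(n^2\log n))$ for (iii); your explicit bound $(8en^3/m)^{mn}$ and your check of uniformity in $m$ are a little more careful than the paper's sketch, and the one remaining detail, Warren's hypothesis that there are at least $mn$ polynomials, holds for every $m\le n-2$ (beyond that range the trivial bound $\abs{\mathcal O_{n,m}}\le N!$ already suffices).
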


The maximal AHMS dimension therefore has order $\Theta(n)$, although its exact
value remains open.

\subsection{Proof of \texorpdfstring{\cref{thm:ordinal}}{the ordinal-capacity compilation}}

For part 1, the $O(n^4)$ comparison polynomials have degree two in $mn$
variables. Warren's sign-pattern bound \cite{warren1968lower} gives
$(Cs\cdot2/(mn))^{mn}=\exp(O(mn\log n))$ realizable patterns. For part 2,
Almendra-Hern{\'a}ndez and Mart{\'\i}nez-Sandoval
\cite{almendra2022prescribing} established the universal $n-2$ realization
bound. For part 3,
$N!=\exp(\Theta(n^2\log n))$ abstract orderings are possible; dividing the
sign-pattern count by $N!$ proves the probability bound. \qed

\begin{conjecture}[Typical AHMS dimension]
\label[conjecture]{conj:ordinal-gaussian}
For Gaussian point clouds whose ambient dimension is at least proportional to
$n$, the AHMS dimension of the induced strict distance ordering is
$\Omega(n)$ with high probability. Thus Gaussian orderings would have the same
linear-dimensional order as the worst case. More quantitatively, the minimum
Kendall distance to $\mathcal O_{n,m}$ should interpolate between the classical counting
obstruction and the pairwise $\sqrt{m/d}$ law. The counting ratio cannot
prove this distribution-specific statement because Gaussian distance
orderings are not uniform.
\end{conjecture}

\section{Reproducibility and Artifact Generation}
\label{sec:supp-reproducibility}

We host the manuscript's executable companion at
\begin{center}
\url{https://github.com/piyush314/random-projection-geometry}.
\end{center}
This companion contains a reusable Python package, a theorem-level
verification harness, paper-specific reproduction programs, experiment
contracts, tutorials, notebooks, and a documentation site. To ensure exact
reproducibility, our manuscript repository includes the companion as a
version-pinned Git submodule under \path{companion/}. The recorded submodule
commit identifies the exact executable version used for the reported results.

\subsection{Environment and theorem checks}

To initialize the pinned companion and install its development dependencies,
run these commands from the manuscript root:
\begin{verbatim}
git submodule update --init --recursive
python -m pip install -e "companion[dev]"
\end{verbatim}
To run the theorem-level harness, execute
\begin{verbatim}
python companion/verification/run_all.py
\end{verbatim}
The harness evaluates each principal theorem independently and reports a pass/fail summary. Its checks combine symbolic or exact-moment identities, deterministic numerical integration, and seeded Monte Carlo experiments, according to the type of claim.

\subsection{Manuscript tables and figures}

Regenerate the numerical rows used in \cref{tab:verify,tab:new-verify} from
the manuscript root by running
\begin{verbatim}
python companion/reproduction/paper/verify_claims.py \
  --seed 42 --full --output-dir data
python companion/reproduction/paper/verify_new_results.py \
  --seed 20260812 --full --output-dir data
\end{verbatim}
The first program checks the scalar channel, anisotropy, and Gaussian maps. The
second checks balanced spectral blocks, the Gaussian integral for a common
minimum, nearest-neighbor overlap, and JL--Kendall coexistence. The
\texttt{--full} option selects the sample sizes reported in the paper; omitting
it gives a faster smoke test. Together, these programs write the
machine-readable results. The first also generates the \LaTeX{} rows used in
\cref{tab:verify}; the compact summary in \cref{tab:new-verify} is maintained in
the manuscript source.

Run the independent Gaussian replacement experiment, including empirical
calibration and the exact disjoint-pair $F_{m,d}$ comparison used in
\cref{prop:zero-info-sharpness}, with
\begin{verbatim}
python companion/experiments/zero_information_jl/run.py \
  --profile full --output companion/artifacts/zero_information_jl
\end{verbatim}
This script writes the paper figure, the complete six-panel diagnostic, CSV
files for individual trials, the comparison of $F$ ratios, and machine-readable
metadata. The commands that build the archival paper bundle and publication
figures invoke the same experiment automatically.

To regenerate the five publication plots directly in the manuscript tree, run
\begin{verbatim}
python companion/scripts/generate_paper_assets.py \
  --output figures/plots
\end{verbatim}
The companion repository also provides reproduction notebooks that expose the same calculations interactively. Its documentation records the expected outputs, experiment contracts, dependency versions, and continuous-integration checks.

\printbibliography[title={References}]

\end{document}